\documentclass{article}

\usepackage[preprint]{neurips_2026}

\usepackage[utf8]{inputenc} 
\usepackage[T1]{fontenc}    
\usepackage{hyperref}       
\usepackage{url}            
\usepackage{booktabs}       
\usepackage{amsfonts}       
\usepackage{nicefrac}       
\usepackage{microtype}      
\usepackage{xcolor}         

\usepackage{microtype}
\usepackage{graphicx}
\usepackage{subcaption}
\usepackage{booktabs} 

\usepackage{amsmath}
\usepackage{amssymb}
\usepackage{mathtools}
\usepackage{amsthm}

\usepackage[capitalize,noabbrev]{cleveref}

\theoremstyle{plain}
\newtheorem{theorem}{Theorem}[section]

\newtheorem{lemma}[theorem]{Lemma}

\theoremstyle{definition}

\theoremstyle{remark}

\usepackage[textsize=tiny]{todonotes}

\usepackage{url}            

\usepackage{amsfonts}       

\usepackage{bm}
\usepackage{algorithm}
\usepackage{algorithmic}
\usepackage{multirow}
\usepackage{array}
\usepackage{tabularx}
\usepackage{wrapfig}
\usepackage{xcolor}
\usepackage{enumitem}
\newcolumntype{H}{>{\setbox0=\hbox\bgroup}c<{\egroup}@{}}
\renewcommand{\eqref}[1]{(\ref{#1})}

\usepackage{makecell}

\renewcommand{\algorithmicrequire}{\textbf{Input:}}
\renewcommand{\algorithmicensure}{\textbf{Output:}}

\title{GraphVec: Cross-Domain Graph Vectorization for Graph-Level Representation Learning}

\author{%
  Qi Feng\qquad Jicong Fan\\
  School of Data Science\\
  The Chinese University of Hong Kong, Shenzhen, Guangdong 518172, China\\
  \texttt{qifeng@link.cuhk.edu.cn, fanjicong@cuhk.edu.cn} \\
}

\begin{document}

\maketitle



\begin{abstract}
Learning universal graph representations across heterogeneous domains is difficult because graph datasets differ in topology, node-attribute semantics, feature dimensions, and even attribute availability. We propose \textbf{GraphVec}, a language-model-free graph vectorization model that maps diverse graphs into transferable fixed-dimensional embeddings for graph-level tasks. Instead of directly using incomparable raw node attributes, GraphVec constructs multi-scale global graphs over all nodes in each dataset and extracts spectral embeddings to obtain domain-agnostic relational features. To make these spectral features comparable across datasets, we introduce a density-maximization mean alignment algorithm over orthogonal transformations and prove its monotonic convergence. GraphVec further combines a GIN--Graph Transformer backbone with a multi-layer reference distribution module, which preserves node-level distributional information beyond standard pooling. We also provide a generalization error bound for the proposed model. Experiments on 13 datasets with more than 15 comparison methods demonstrate that GraphVec consistently outperforms strong graph pretraining baselines in cross-domain few-shot graph classification and graph clustering. Beyond graph-level tasks, GraphVec also yields strong node-level representations, achieving competitive performance on few-shot node classification against representative graph prompt learning methods.
\end{abstract}

\section{Introduction}
\label{sec:1}
\vspace{-5pt}
Graph data is a fundamental and widely prevalent form of structured data, representing entities as nodes and their relationships as edges. It plays a crucial role in diverse domains, including social networks, biological systems, citation networks, recommendation systems, and knowledge graphs. Given its ability to model complex relational patterns, graph data analysis has become a key focus in machine learning and data mining. In node-level tasks, the training set is usually a single but large graph, on which each node represents a sample. Node-level tasks include
node embedding or representation \citep{grover2016node2vec,cai2018comprehensive}, node classification \citep{kipf2016semi}, node clustering \citep{wang2023overview}, link prediction \citep{martinez2016survey}, etc. For example, node classification might involve categorizing users in a social network, while link prediction could be used to recommend new connections.

On the other hand, graph-level tasks operate on entire graphs, where a dataset is composed of numerous graphs, and each graph is treated as a sample. Graph-level tasks address broader challenges such as graph comparison \citep{kobler2012graph}, representation learning \citep{sun2020infograph}, classification \citep{xu2019powerful}, clustering \citep{ijcai2024p417}, generation \citep{liao2019efficient}, etc. Graph comparison often relies on graph kernels \citep{gartner2003graph,vishwanathan2010graph,shervashidze2011weisfeiler} or distances \citep{bunke1997relation,zeng2009comparing,memoli2011gromov,bento2018family} or deep learning methods \citep{sun2024mmd} to measure similarity between different graphs, while graph representation learning aims to encode entire graphs into compact, informative embeddings for downstream tasks \citep{you2020graph,you2021graph,sun2024lovasz}. Representative graph-level representation methods range from substructure-based graph embeddings such as graph2vec \citep{narayanan2017graph2vec} to self-supervised mutual-information and contrastive approaches such as InfoGraph, GraphCL, automated graph contrastive learning, and GCC \citep{sun2019infograph,you2020graph,you2021graph,qiu2020gcc}.
Graph classification, for example, is critical in chemistry for predicting molecular properties \citep{gilmer2017neural,NEURIPS2024_GRDL}, whereas graph generation enables the creation of novel structures, such as drug-like molecules in computational biology \citep{hoogeboom2022equivariant}.

Most of the aforementioned methods are dataset-specific. That means, for one dataset, we have to train a new model, e.g., a graph neural network, to solve the corresponding problem. This leads to the following two limitations. First, training a model from scratch is very time-consuming, and it requires model selection and parameter tuning, which brings inconvenience to practical applications. Second, knowledge from historical data or tasks in the same domain or similar domains cannot be exploited well. These limitations motivate graph pretraining, which has attracted increasing attention following the success of pretraining in natural language processing and computer vision. A representative line of work is graph prompt learning \citep{sun2022gppt, fang2023universal, sun2023all, liu2023graphprompt, fu2025edge}, which aims to adapt a pretrained graph model to different downstream tasks by reducing the gap between pretraining objectives and task-specific objectives. However, most graph prompt methods are still developed and evaluated within single-dataset settings, where different tasks share the same graph distribution or feature space. Recently, graph foundation models (GFMs) \citep{liu2025graph} have also become an increasingly prominent area of research in graph data analysis due to their ability to pre-train on diverse datasets to enhance performance across multiple tasks and domains. Existing GFMs can be roughly grouped into LLM-based methods \citep{liu2023one, konggofa, xia2024anygraph} and non-LLM-based methods \citep{sun2025riemanngfm, yuan2025much}. Emerging studies \citep{galkin2023towards, zheng2023you} indicate that GFMs exhibit strong generalization capabilities, even when applied to previously unseen graph structures.


One key challenge in pretraining graph models is that graph patterns from different domains exhibit significant variation \citep{galkin2023towards}, which is evident in both structural and feature representations. 
For example, in molecular graphs \citep{yang2016revisiting}, the structure encodes 3D spatial arrangements and atomic bonds, 
while node features represent chemical properties. 
Conversely, in social networks \citep{dwivedi2023benchmarking}, the structure reflects user connections, 
and node features correspond to user profiles.
These distribution differences make it difficult for a single model to learn domain-agnostic representations. 
A promising approach involves transforming both graph structures and node features into textual formats, 
then employing large language models (LLMs) to derive unified representations \citep{fatemi2023talk,liu2023one,tang2024graphgpt,wang2024can}. 
Another approach is to improve existing graph learning paradigms \citep{liu2025graph} through innovations in the aspects of the backbone \citep{rong2020self}, pretraining \citep{qiu2020gcc,you2020graph,yu2025samgpt}, and adaptations \citep{fu2025edge,yu2025gcot,wang2025multi}. 


Despite recent progress in graph pretraining, several important limitations remain. First, LLM-based graph pretraining methods convert graphs into textual descriptions, which can discard fine-grained topological patterns and node-feature information~\citep{yu2025samgpt}. They also incur substantial computational costs due to the large scale of LLMs. Second, many existing pretrained graph models are primarily designed for node-level tasks~\citep{zhao2024graphany,wei2024llmrec,wang2025multi}, while graph-level tasks have received comparatively less attention~\citep{yu2025gcot,fu2025edge}. Third, it is still challenging to train a single graph model that can handle heterogeneous graph datasets from diverse domains and generalize reliably to unseen domains and downstream graph-level tasks.

This work proposes a graph representation model for graph-level tasks across diverse domains. Figure \ref{fig_flowchart} shows the flowchart. Our contributions are:
Our contributions are summarized as follows.
\begin{itemize}[noitemsep,leftmargin=10pt, partopsep=0pt]
\vspace{-15pt}
    \item We propose GraphVec, a language-model-free cross-domain graph vectorization framework for graph-level tasks. GraphVec maps graphs from heterogeneous domains into a shared fixed-dimensional representation space without assuming comparable raw node attributes.
    \item We introduce a global multi-graph feature construction strategy that converts domain-specific node attributes into multi-scale relational spectral features. This provides a unified input representation for graphs with different feature dimensions, semantics, or missing attributes.
    \item We develop a density-maximization orthogonal alignment algorithm for spectral features and establish monotonic improvement and asymptotic stationarity of the alignment objective.
    \item We design a multi-layer reference distribution learning mechanism for distribution-aware graph representation. This mechanism preserves information that is typically discarded by standard graph-level pooling.
    \item We provide extensive experiments on cross-domain few-shot graph classification and graph clustering, together with ablations, robustness tests, scalability analysis, and comparisons with recent graph pretraining and prompt-tuning methods.
\end{itemize}
\vspace{-5pt}

\section{Related Work}
\vspace{-7pt}
\paragraph{Graph pretraining}
Many studies have explored the ``pre-train and adaptation" paradigm for graph models, leveraging message-passing or transformer-based GNN backbones. These approaches typically employ contrastive or generative self-supervised learning for pretraining, followed by fine-tuning a subset of model parameters to adapt to downstream tasks or datasets \citep{liu2025graph}. Contrastive methods, including  GCC \citep{qiu2020gcc}, InfoGraph \citep{sun2019infograph}, DGI \citep{velivckovic2019deep}, SimGRACE \citep{xia2022simgrace}, and GCOPE \citep{zhao2024all}, maximize agreement between augmented views to learn transferable representations, while generative methods \citep{hou2022graphmae,hou2023graphmae2} pre-train via graph reconstruction or property prediction. Recently, graph prompt tuning \citep{sun2022gppt, fang2023universal, sun2023all, liu2023graphprompt, fu2025edge} has emerged to bridge the pretraining–downstream gap, and many recent GFMs adopt this paradigm \citep{yuan2025much, yu2025non}. However, these works mainly emphasize adaptation, leaving the problem of learning unified graph representations underexplored. Due to space limitations, we defer more discussion of \textbf{Graph pretraining} and the related work on \textbf{GFMs} to Appendix \ref{app_related_work}.

\vspace{-5pt}
\section{Methodology}
\vspace{-5pt}
\subsection{Cross-Domain Graph Vectorization: Problem Setup}
\vspace{-5pt}
First of all, the major notations used in this paper are shown in Table \ref{tab:notation}.
Let $\mathcal{D}=\{\mathcal{G}_1,\mathcal{G}_2,\ldots,\mathcal{G}_M\}$ be a union of $M$ datasets of labeled graphs from $M$ different domains, where 
$\mathcal{G}_j=\{(G^{(j)}_1,y_1^{(j)}),(G^{(j)}_2,y_2^{(j)}),\ldots,(G^{(j)}_{N_j},y_{N_j}^{(j)})\}$. 
Here, each graph $G^{(j)}_{i}$ is denoted as $G^{(j)}_{i}=(\mathbf{A}_i^{(j)}, \mathbf{X}_i^{(j)}, y_i^{(j)})$, where $\mathbf{A}_i^{(j)}\in\mathbb{R}^{n_i^{(j)}\times n_i^{(j)}}$ denotes the adjacency matrix, $\mathbf{X}_i^{(j)}\in\mathbb{R}^{n_i^{(j)}\times d^{(j)}}$ denotes the node attribute matrix, $n_i^{(j)}$ denotes the number of nodes, $d^{(j)}$ denotes the number of attributes, and $y_i^{(j)}$ denotes the graph label. Our goal is to use $\mathcal{D}$ to train a cross-domain graph representation model, denoted as
\begin{equation}
    F:\mathbb{G}\rightarrow \mathbb{R}^r
\end{equation}
to represent any graph from the space $\mathbb{G}$ as an $r$-dimensional vector that is useful in downstream tasks such as graph classification, where $\mathbb{G}$ denotes the set of all graphs in the form of $(\mathbf{A},\mathbb{\mathbf{X}})$. Therefore, $F$ serves as a universal graph representation model.

To learn $F$ from $\mathcal{D}$, we need to address these challenges:
\vspace{-5pt}
\begin{itemize}[noitemsep,leftmargin=2pt]
    \item \textbf{Attributes inconsistency} The node attributes of graphs from different domains are different and not comparable at all. Thus the node attributes in $\mathcal{D}$ cannot be fed into $F$ directly.
    \item \textbf{Attributes absence} Many graph datasets do not contain node attributes, making them very different from graph datasets with node attributes. The heuristic method of constructing node attributes, such as using node degrees, does not comply with the semantic attributes of other datasets.  
    \item \textbf{Information loss in pooling} Although there have been a few advanced graph pooling methods \citep{liu2022graph}, converting nodes' embeddings into a single vector cannot fully utilize the information.
\end{itemize}
\vspace{-5pt}

\begin{figure*}[t]
    \centering
    \includegraphics[width=0.9\linewidth]{ 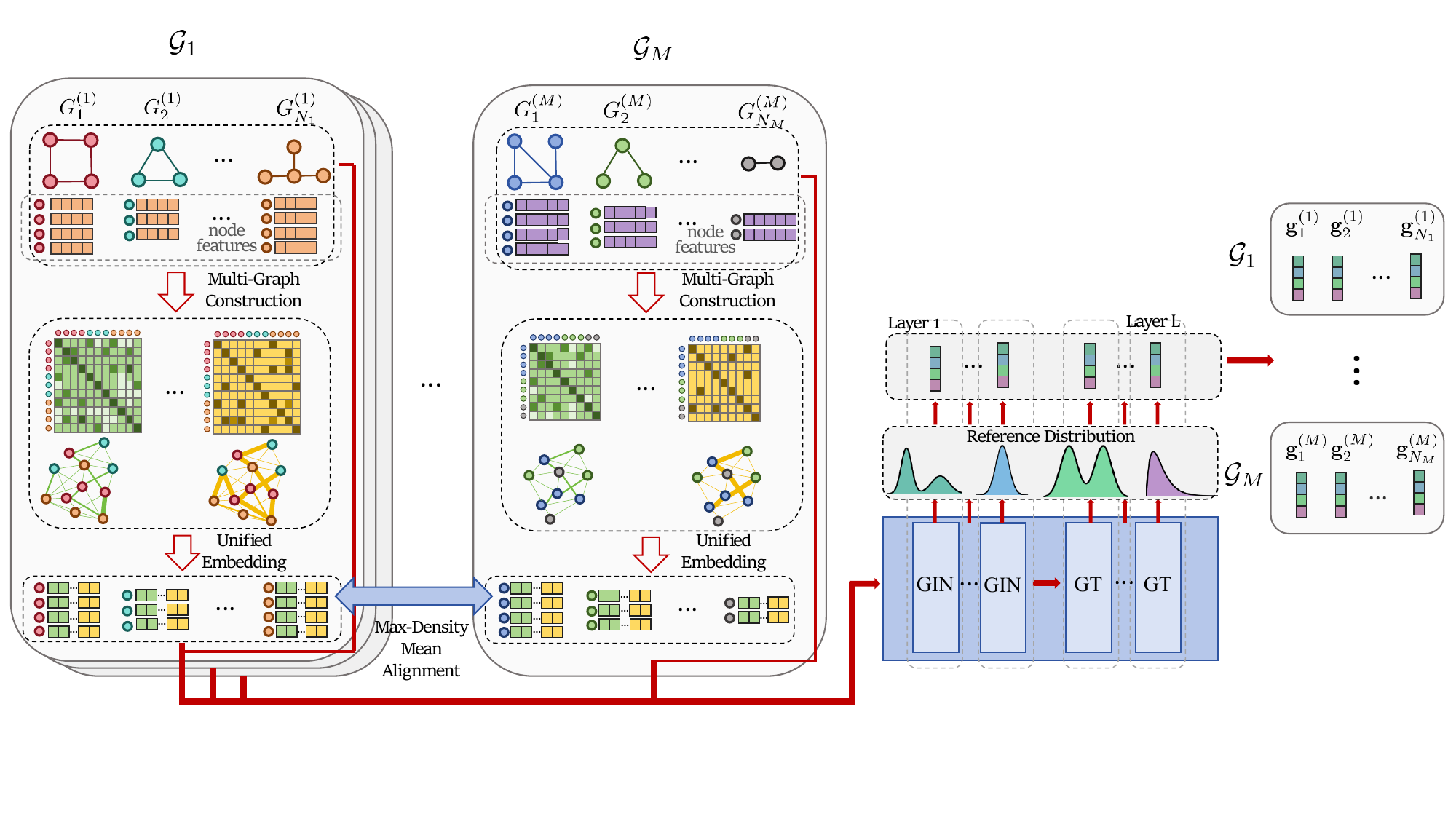}
    \vspace{-20pt}
    \caption{Flow-chart of GraphVec. $\mathcal{G}_1,\ldots,\mathcal{G}_M$ are $M$ datasets from different domains. The model represents each graph $G_{i}^{(j)}$ as a single vector $\mathbf{g}_i^{(j)}$ that can be used in graph-level downstream tasks.}
    \label{fig_flowchart}
    \vspace{-10pt}
\end{figure*}

\subsection{Domain-Agnostic Relational Features via Global Multi-Graphs}\label{sec_graph}
\vspace{-5pt}
As mentioned, the node features of different domain graph datasets may vary significantly in semantics and dimensions. 
To capture domain-invariant features, we focus on the relationships among nodes across the entire dataset rather than the original features. The reason is that in many machine learning problems, using the relationships between samples or a graph constructed from the dataset can provide effective solutions. For instance, in spectral clustering \citep{ng2001spectral}, we use a similarity graph rather than the original features; in kernel support vector machine \citep{cortes1995support}, we can use a Gaussian kernel matrix, which is a similarity matrix of the data points.  

\textbf{Multi-Scale Global Relational Graphs}~~ For each graph dataset $\mathcal{G}_j$, $j\in[M]$, we propose to construct a similarity graph over all nodes in the dataset using a Gaussian kernel function, i.e., 
\begin{equation}\label{eq_K_construct}
\mathbf{K}_{\lambda}^{(j)} = \left[\exp \left( -\frac{\left\| \mathbf{x}_u - \mathbf{x}_{v} \right\|_2^2}{2\lambda\mu^2} \right)\right]_{u,v=1}^{\bar{N}_j}, \quad\bar{N}_j=\sum_{i=1}^{N_j}{n_i^{(j)}}
\end{equation}
where $\mathbf{x}_i$ is the $i$-th row of $\mathbf{X} :=  \big|\big|_{i=1}^{N_j}\mathbf{X}_i^{(j)}$ (vertical concatenation), 
$\mu$ is the mean of the pairwise distances between all nodes in the dataset, and 
$\lambda$ controls the bandwidth of the kernel. This setting ensures \textit{translation, rotation, and scaling invariance}, which is important to extract comparable features across diverse datasets. $\mathbf{K}_{\lambda}^{(j)}$ is the adjacency matrix of this global graph of the nodes in $\mathcal{G}_j$.
Note that a single $\mathbf{K}_{\lambda}^{(j)}$ exploits partial information of the node attributes of $\mathcal{G}_j$ and the optimal setting of $\lambda$ remains an open problem. Therefore, we use a number of different values for $\lambda$, e.g. $\lambda_1,\lambda_2,\ldots,\lambda_Q$, to construct \textbf{multiple global graphs} for the nodes in $\mathcal{G}_j$:
\begin{equation}\label{eq_K_construct_Q}
    \boldsymbol{\mathcal{K}}^{(j)}:=\left\{\mathbf{K}_{\lambda_1}^{(j)}, \mathbf{K}_{\lambda_2}^{(j)},\ldots,\mathbf{K}_{\lambda_Q}^{(j)}\right\},\quad j\in[M]
\end{equation}
Note that the diversity of $\boldsymbol{\mathcal{K}}^{(j)}$ can be further enhanced if more kernel families, e.g., $k(\mathbf{x}_u,\mathbf{x}_v)=\exp(-\alpha\|\mathbf{x}_u-\mathbf{x}_v\|_1)$, are considered. 

\vspace{-5pt}
\textbf{Multi-Scale Relational Spectral Embedding}~~ For $\mathbf{K}_{\lambda_q}^{(j)}$, we compute $\bar{d}$-dimensional node embeddings using singular value decomposition (SVD):
\begin{equation}\label{eq_Z_KSVD}
    \mathbf{Z}_{\lambda_q}^{(j)} = \mathbf{U}_{\bar{d}}\boldsymbol{\Sigma}_{\bar{d}}^{1/2},\quad ~\mathbf{K}^{(j)}_{\lambda_q}=\mathbf{U}\boldsymbol{\Sigma}\mathbf{V}^\top
\end{equation}
where $\mathbf{U}_{\bar{d}}\in\mathbb{R}^{\bar{N}_j\times \bar{d}}$ is composed of the first $\bar{d}$ columns of $\mathbf{U}$ and $\boldsymbol{\Sigma}_{\bar{d}}$ is a diagonal matrix consisting of the first (largest) $\bar{d}$ singular values.
Then the final node feature matrix is obtained by concatenating embeddings from all scales, i.e.,
{\small
\begin{equation}\label{eq_ZZZ}
    \begin{aligned}
        \mathbf{Z}^{(j)} &= \left[\mathbf{Z}_{\lambda_1}^{(j)},\ldots,\mathbf{Z}_{\lambda_Q}^{(j)}\right]
                   =\left[
                   \begin{matrix}
                       \mathbf{Z}^{(j)}_1\\
                       \vdots\\
                       \mathbf{Z}^{(j)}_{N_j}
                   \end{matrix}\right] \in \mathbb{R}^{\sum_{i=1}^{N_j}{n_i^{(j)}}\times Q\bar{d}},
    \end{aligned}
\end{equation}
}
where $j\in[M]$.
For datasets without node attributes, we generate node attributes using the truncated SVD of the self-looped adjacency matrix, i.e.,
\begin{equation}\label{eq_wofeat}
    \mathbf{X}_i^{(j)} = \mathrm{SVD} \left( \mathbf{A}_i^{(j)}+\mathbf{I}_{n_i^{(j)}} \right) \in \mathbb{R}^{n_i\times \bar{d}}
\end{equation}
where $\mathrm{SVD}(\cdot)$ returns the singular vectors corresponding to the top-$\bar{d}$ singular values, similar to \eqref{eq_Z_KSVD}. Then we apply \eqref{eq_K_construct}, \eqref{eq_K_construct_Q}, \eqref{eq_Z_KSVD}, and \eqref{eq_ZZZ} to $\mathbf{X}_i^{(j)}$ to generate unified node embeddings. For large datasets, the Nyström approximation \citep{williams2000using} can be employed to accelerate the computation of the kernel matrix and SVD.

\subsection{Convergent Density-Maximization Alignment for Spectral Features}
\vspace{-5pt}
In SVD, individual singular vectors have arbitrary signs \citep{bro2008resolving}. This sign ambiguity may make the embeddings of two similar graphs very different, leading to significant difficulties in both the training and testing stages. Moreover, if two singular values are the same, the order of the corresponding singular vectors cannot be determined, which further increase the difficulty in learning. In machine learning, to ensure learnability and generalization, we require that the training samples and the testing samples are from the same distribution, or at least, their means are similar. Therefore, we proposed to align the mean embeddings of different graphs via maximizing the density.

Specifically, consider the SVD embeddings of $M\times Q$ graphs generated by the method in Section \ref{sec_graph}, for each $\lambda_q$, we compute the mean vectors of the embedding matrices $\mathbf{Z}_{\lambda_q}^{(j)}$ as $\boldsymbol{\mu}_j=\frac{1}{\bar{N}_j}\mathbf{Z}_j^\top\boldsymbol{1}_{\bar{N}_j}$, where $j\in [M]$ and we have dropped the subscript $\lambda_q$ to simplify the notation for the following operations. 
For each graph $j$, we introduce an orthonormal matrix $\mathbf{R}_j\in\mathbb{R}^{\bar{d}\times \bar{d}}$, which will transform $\boldsymbol{\mu}_j$ as $\mathbf{R}_j\boldsymbol{\mu}_j$, $j\in[M]$, which means $\mathbf{R}_j\mathbf{Z}_{\lambda_q}^{(j)}$ is equivalent to $\mathbf{Z}_{\lambda_q}^{(j)}$ in preserving the information of $\mathbf{K}_{\lambda_q}^{(i)}$. We align all mean vectors using the corresponding orthonormal matrices by maximizing the density of the mean vectors. 
The density of each mean vector can be calculated by the kernel density estimation \citep{parzen1962estimation}:
\begin{equation}
  \hat{p}(\boldsymbol{\mu})=\frac{1}{M}\sum_{j=1}^M\frac{1}{(2\pi h)^{\bar{d}/2}}\exp\left(-\frac{\|\boldsymbol{\mu}-\boldsymbol{\mu}_j\|^2}{2h}\right)
\end{equation}
where we use the Gaussian kernel with hyperparameter $h$. 
Let $\mathcal{R}$ be the set of all orthonormal matrices of size $\bar{d}\times \bar{d}$, i.e., $\mathcal{R}=\{\mathbf{R}\in\mathbb{R}^{\bar{d}\times \bar{d}}: \mathbf{R}^\top\mathbf{R}=\mathbf{I}_{\bar{d}} \}$.
Then we maximize the total density of the $M$ mean vectors: 
{\small
\begin{equation}\label{eq_opt_mean_align_0}
\mathop{\text{maximize}}_{\mathbf{R}_j\in\mathcal{R}, j\in[M]}~\frac{1}{M}\sum_{i=1}^M\sum_{j=1}^M\frac{1}{(2\pi h)^{\bar{d}/2}}\exp\left(-\frac{\|\mathbf{R}_i\boldsymbol{\mu}_i-\mathbf{R}_j\boldsymbol{\mu}_j\|^2}{2h}\right)
\end{equation}
}
Letting $\gamma=\frac{1}{2h}$, \eqref{eq_opt_mean_align_0} is equivalent to the following problem
\begin{equation}\label{eq_opt_mean_align_1}
\begin{aligned}
  \mathop{\text{maximize}}_{\mathbf{R}_j\in\mathcal{R}, j\in[M]}~&\frac{1}{M}\sum_{i=1}^M\sum_{j=1}^M\exp\left(-\gamma\|\mathbf{R}_i\boldsymbol{\mu}_i-\mathbf{R}_j\boldsymbol{\mu}_j\|^2\right)\triangleq \mathcal{L}\left(\{\mathbf{R}_j\}_{j=1}^M\right)
  \end{aligned}
\end{equation}

\renewcommand{\algorithmicrequire}{\textbf{Input:}}
\renewcommand{\algorithmicensure}{\textbf{Output:}}

\begin{figure}[t]
\vspace{-20pt}
\begin{minipage}[t]{0.48\textwidth}

\begin{algorithm}[H]
\caption{Max-Density Mean Alignment}
\label{alg_DMMA}
\small{
\begin{algorithmic}[1]
\REQUIRE
$\boldsymbol{\mu}_1,\boldsymbol{\mu}_2,\ldots,\boldsymbol{\mu}_M$; $\gamma>0$; $\eta>0$; $T$.
\STATE Initialization: $\mathbf{R}_j^{(0)}=\mathbf{I}_{\bar{d}}$, $\forall j\in[M]$
\STATE $w_{ij}=\exp(-\gamma(\|\boldsymbol{\mu}_i\|^2+\|\boldsymbol{\mu}_j\|^2))$, $(i,j)\in[M]\times[M]$
\FOR{$t=1$ to $T$}
\STATE $k_{ij}^{(t-1)}=\exp(2\gamma\langle \mathbf{R}_i^{(t-1)}\boldsymbol{\mu}_i,\mathbf{R}_j^{(t-1)}\boldsymbol{\mu}_j \rangle)$, $(i,j)\in[M]\times[M]$
\FOR{$i=1$ to $M$}
\STATE $\mathbf{H}_i^{(t)}=\sum_jw_{ij}k_{ij}^{(t-1)}\mathbf{R}_j^{(t-1)}\boldsymbol{\mu}_j\boldsymbol{\mu}_i^\top+\eta\mathbf{R}_i^{(t-1)}$
\STATE SVD: $\mathbf{H}_i^{(t)}=\mathbf{U}_i\mathbf{S}_i\mathbf{V}^\top_i$
\STATE $\mathbf{R}_{i}^{(t)}=\mathbf{U}_i\mathbf{V}^\top_i$
\ENDFOR
\ENDFOR
\ENSURE $\mathbf{R}_1,\mathbf{R}_2,\ldots,\mathbf{R}_M$
\end{algorithmic}
}
\end{algorithm}
\end{minipage}
\hfill
\begin{minipage}[t]{0.48\textwidth}

\begin{algorithm}[H]
\small{
\begin{algorithmic}[1]
    \renewcommand{\algorithmicrequire}{\textbf{Input:}}
    \caption{pretraining model}\label{alg_pretrain}
    \REQUIRE {Train datasets: $\mathcal{D} = \{\mathcal{G}_j\}_{j=1}^M$}
    \STATE Compute $\{\mathbf{Z}^{(j)}\}_{j=1}^M$ using \eqref{eq_Z_KSVD} and \eqref{eq_ZZZ}
    \STATE $\{\mathbf{R}_j^{\lambda_q}\}_{j=1}^M \gets$ Algorithm~\ref{alg_DMMA}
    \STATE $\mathbf{Z}_{\lambda_q}^{(j)}\leftarrow \mathbf{Z}_{\lambda_q}^{(j)}{\mathbf{R}_j^{(\lambda_q)}}^\top,~~q\in[Q],~j\in[M]$
    \REPEAT
    \FOR{$j = 1$ to $M$}
    \FOR{$i=1$ to $S$}
    \STATE{Sample $\{\mathbf{G}_i^{(j)} = (\mathbf{Z}_i^{(j)}, \mathbf{A}_i^{(j)})\}_{i\in\mathcal{B}}$}
    \STATE{$\mathbf{g}_{i}^{(j)}\gets F_{\mathcal{W}, \mathcal{V}, \gamma}\big(\mathbf{A}_i^{(j)}, \mathbf{Z}_i^{(j)}\big)$, $i\in\mathcal{B}$}
    \STATE$\mathcal{W}\gets \mathcal{W} -\alpha_1\nabla_{\mathcal{W}}\mathcal{L}\left(\mathcal{W},\mathcal{V}, \gamma\right)$\\
    $\mathcal{V}\gets \mathcal{V}-\alpha_1\nabla_{\mathcal{V}}\mathcal{L}\left(\mathcal{W},\mathcal{V}, \gamma\right)$\\
    $\gamma\gets\gamma-\alpha_2\nabla_{\gamma}\mathcal{L}\left(\mathcal{W},\mathcal{V}, \gamma\right)$
    \ENDFOR
    \ENDFOR
    \UNTIL{Convergence conditions are met}
    \ENSURE Pretrained model $F_{\mathcal{W},\mathcal{V}, \gamma}$
\end{algorithmic}}
\end{algorithm}
\end{minipage}
\vspace{-10pt}
\end{figure}

\vspace{-5pt}
The optimization is non-trivial due to the orthonormal constraints and the exponential functions. We propose an efficient algorithm in Algorithm \ref{alg_DMMA}. The detailed derivation for the algorithm is introduced in Appendix \ref{app_alg_derivation}. Theorem \ref{theorem_convergence} provides a convergence guarantee for the optimization.
\begin{theorem}[Monotonic convergence of Algorithm~\ref{alg_DMMA}]
\label{theorem_convergence}
Let $\nu:=\max_i\|\boldsymbol\mu_j\|_2$ and suppose $\eta>4\gamma\nu^4M^{3/2}
+\nu^2M$. 
Then $\{L(\{\mathbf R_j^{(t)}\}_{j=1}^M)\}_{t\geq0}$ is monotonically non-decreasing and convergent. Moreover, $\sum_{j=1}^M
\|\mathbf R_j^{(t)}-\mathbf R_j^{(t-1)}\|_F^2
\rightarrow 0$ as $t\rightarrow\infty$.
Every accumulation point of $\{\mathbf R_j^{(t)}\}_{j=1}^M$ is a stationary point of the constrained maximization problem.
\end{theorem}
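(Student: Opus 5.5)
We prove this with a minorize--maximize (MM) argument built on a quadratic lower bound of $\mathcal{L}$ around the current iterate. Write $\mathbf{R}=(\mathbf{R}_1,\ldots,\mathbf{R}_M)$ and $\mathbf{a}_i=\mathbf{R}_i\boldsymbol\mu_i$. Since $\|\mathbf{a}_i\|=\|\boldsymbol\mu_i\|$ is fixed by orthogonality, we have
\[
\mathcal{L}(\mathbf{R})=\tfrac1M\sum_{i,j}w_{ij}\exp\bigl(2\gamma\langle\mathbf{a}_i,\mathbf{a}_j\rangle\bigr).
\]
The Euclidean gradient with respect to $\mathbf{R}_i$ is $\frac{4\gamma}{M}\sum_j w_{ij}k_{ij}\mathbf{R}_j\boldsymbol\mu_j\boldsymbol\mu_i^\top$, using the symmetry of $w_{ij}k_{ij}$. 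This is, up to a positive factor, the first term of $\mathbf{H}_i^{(t)}$ in Algorithm~\ref{alg_DMMA}. The step $\mathbf{R}_i^{(t)}=\mathbf{U}_i\mathbf{V}_i^\top$ is the Procrustes solution of $\max_{\mathbf{R}\in\mathcal{R}}\langle\mathbf{H}_i^{(t)},\mathbf{R}\rangle$. Equivalently, it maximizes the linearization $\langle\nabla_i\mathcal{L},\mathbf{R}_i-\mathbf{R}_i^{(t-1)}\rangle-\frac{\eta'}{2}\|\mathbf{R}_i-\mathbf{R}_i^{(t-1)}\|_F^2$ over $\mathcal{R}$, with $\eta'$ proportional to $\eta$. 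This holds because $\|\mathbf{R}_i\|_F^2=\bar d$ is constant, so the proximal term contributes only $\eta'\langle\mathbf{R}_i^{(t-1)},\mathbf{R}_i\rangle$. The updates within a sweep are Jacobi-style, since every $\mathbf{H}_i^{(t)}$ uses the $(t-1)$ iterates. We therefore treat the whole block $\mathbf{R}$ at once.

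\textbf{Step 1 (smoothness).} We bound the Lipschitz constant $\ell$ of $\nabla\mathcal{L}$ over the convex hull of $\mathcal{R}^M$, on which $\|\mathbf{R}_j\|_2\le1$. Differentiating $w_{ij}k_{ij}\mathbf{R}_j\boldsymbol\mu_j\boldsymbol\mu_i^\top$ gives two kinds of terms.
\begin{itemize}
\item The direct linear term contributes at most $\nu^2$ per pair, using $w_{ij}k_{ij}=\exp(-\gamma\|\mathbf{a}_i-\mathbf{a}_j\|^2)\le1$ on $\mathcal{R}^M$. It is bounded by $e^{4\gamma\nu^2}$-type constants on the hull; to avoid these we restrict the descent-lemma segment argument appropriately, or simply evaluate along iterates.
\item The derivative of $k_{ij}$ contributes $2\gamma\nu^4$ per pair.
\end{itemize}
Summing over $j$ with Cauchy--Schwarz across the $M$ blocks gives
\[
\ell\lesssim \tfrac{4\gamma}{M}\bigl(2\gamma\nu^4 M^{3/2}+\nu^2 M\bigr)\cdot(\text{const}),
\]
which matches the assumed threshold $\eta>4\gamma\nu^4M^{3/2}+\nu^2M$ after rescaling $\eta$ against the gradient factor $4\gamma/M$. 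The descent lemma then yields
\[
\mathcal{L}(\mathbf{R}')\ge\mathcal{L}(\mathbf{R})+\langle\nabla\mathcal{L}(\mathbf{R}),\mathbf{R}'-\mathbf{R}\rangle-\tfrac{\ell}{2}\|\mathbf{R}'-\mathbf{R}\|_F^2 .
\]

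\textbf{Step 2 (sufficient increase).} By Procrustes optimality, $\langle\mathbf{H}_i^{(t)},\mathbf{R}_i^{(t)}\rangle\ge\langle\mathbf{H}_i^{(t)},\mathbf{R}_i^{(t-1)}\rangle$. Combined with the identity $\langle\mathbf{R}_i^{(t-1)},\mathbf{R}_i^{(t-1)}-\mathbf{R}_i^{(t)}\rangle=\frac12\|\mathbf{R}_i^{(t)}-\mathbf{R}_i^{(t-1)}\|_F^2$, this gives
\[
\langle\nabla\mathcal{L},\mathbf{R}^{(t)}-\mathbf{R}^{(t-1)}\rangle\ge\tfrac{\eta'}{2}\|\mathbf{R}^{(t)}-\mathbf{R}^{(t-1)}\|_F^2 .
\]
Plugging this into Step 1,
\[
\mathcal{L}(\mathbf{R}^{(t)})-\mathcal{L}(\mathbf{R}^{(t-1)})\ge\tfrac{\eta'-\ell}{2}\sum_j\|\mathbf{R}_j^{(t)}-\mathbf{R}_j^{(t-1)}\|_F^2\ge0 .
\]
Since $\mathcal{L}\le M$, the sequence is monotone and bounded, hence convergent. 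Telescoping gives summability of the step sizes, so $\sum_j\|\mathbf{R}_j^{(t)}-\mathbf{R}_j^{(t-1)}\|_F^2\to0$.

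\textbf{Step 3 (stationarity).} $\mathcal{R}^M$ is compact, so accumulation points exist. Let $\mathbf{R}^{(t_k)}\to\bar{\mathbf{R}}$. Because the step sizes vanish, $\mathbf{R}^{(t_k-1)}\to\bar{\mathbf{R}}$ as well. The map $\mathbf{R}\mapsto\mathbf{H}_i(\mathbf{R})$ is continuous, so the fixed-point relation passes to the limit:
\[
\bar{\mathbf{R}}_i=\mathrm{polar}(\mathbf{H}_i(\bar{\mathbf{R}})).
\]
Equivalently, $\mathbf{H}_i(\bar{\mathbf{R}})=\bar{\mathbf{R}}_i\mathbf{P}_i$ with $\mathbf{P}_i$ symmetric positive semidefinite. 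Hence $\bar{\mathbf{R}}_i^\top\nabla_i\mathcal{L}(\bar{\mathbf{R}})=c(\mathbf{P}_i-\eta\mathbf{I})$ is symmetric. This is exactly the first-order KKT condition on the Stiefel/orthogonal manifold: the Riemannian gradient $\nabla_i\mathcal{L}-\bar{\mathbf{R}}_i\,\mathrm{sym}(\bar{\mathbf{R}}_i^\top\nabla_i\mathcal{L})$ vanishes.

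\textbf{Main obstacle.} We expect Step 1 to be the hardest: obtaining a Lipschitz constant that matches the stated threshold exactly, in particular the $M^{3/2}$ factor and the handling of $k_{ij}$ off the manifold. We will try first to avoid the convex hull entirely. The plan is to derive a minorizer directly on $\mathcal{R}^M$ via a second-order Taylor bound of $\exp$ along the segment, bounding the Hessian using $\|\mathbf{a}_i\|\le\nu$ and $w_{ij}k_{ij}\le1$ at both endpoints.
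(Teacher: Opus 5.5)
Your overall route matches the paper's proof. You bound the Lipschitz constant of the Euclidean gradient of the $w_{ij}k_{ij}$-form of the objective and apply the descent lemma. You combine Procrustes optimality with $\langle\mathbf R_i^{(t-1)},\mathbf R_i^{(t)}-\mathbf R_i^{(t-1)}\rangle=-\tfrac12\|\mathbf R_i^{(t)}-\mathbf R_i^{(t-1)}\|_F^2$ to get sufficient ascent. Boundedness plus telescoping gives vanishing steps, and compactness lets you pass the Procrustes optimality to the limit. Steps 2 and 3 are correct. Your Step 3 is more explicit than the paper's, since it identifies the limit condition as symmetry of $\bar{\mathbf R}_i^\top\nabla_i\mathcal L(\bar{\mathbf R})$.

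The gap is Step 1, which does not actually establish the smoothness constant. You leave an unspecified ``(const)''. You also assert that off the manifold $w_{ij}k_{ij}$ can only be bounded by $e^{4\gamma\nu^2}$-type constants, and the remedies you propose do not work. The descent lemma needs control of $\nabla\mathcal L$ along the whole segment from $\mathbf R^{(t-1)}$ to $\mathbf R^{(t)}$, so ``evaluating along iterates'' gives nothing. Likewise, a Hessian bound using $w_{ij}k_{ij}\le 1$ ``at both endpoints'' says nothing about interior points.

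The missing observation is that the worry is unfounded. On the convex hull of $\mathcal R^M$ you already note $\|\mathbf R_i\|_2\le 1$, hence $\|\mathbf R_i\boldsymbol\mu_i\|\le\|\boldsymbol\mu_i\|$, and Cauchy--Schwarz gives
\[
w_{ij}k_{ij}=\exp\bigl(-\gamma(\|\boldsymbol\mu_i\|^2+\|\boldsymbol\mu_j\|^2)+2\gamma\langle\mathbf R_i\boldsymbol\mu_i,\mathbf R_j\boldsymbol\mu_j\rangle\bigr)\le\exp\bigl(-\gamma(\|\boldsymbol\mu_i\|-\|\boldsymbol\mu_j\|)^2\bigr)\le 1
\]
at every point of every segment between feasible points. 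The paper uses $\phi_{ij}\le 1$ inside its mean-value and smoothness arguments without comment; this is what makes that legitimate.

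With this bound, your two per-pair contributions hold along the whole segment, and the bookkeeping becomes explicit. Write $\delta_i=\|\mathbf R_i-\mathbf S_i\|_F$ and $D^2=\sum_i\delta_i^2$. For the unnormalized objective, block $i$ of the gradient difference is at most $4\gamma\sum_j[2\gamma\nu^4(\delta_i+\delta_j)+\nu^2\delta_j]\le(16\gamma^2\nu^4M+4\gamma\nu^2\sqrt M)D$. Aggregating over the $M$ blocks gives $L_\nabla=16\gamma^2\nu^4M^{3/2}+4\gamma\nu^2M$. The ascent coefficient is $2\gamma\eta-L_\nabla/2$, so positivity is exactly $\eta>4\gamma\nu^4M^{3/2}+\nu^2M$. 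This is the same condition in your normalized scaling, where $\eta'$ and $\ell$ both carry the factor $1/M$.

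The factor $2$ on the first term comes from $\delta_i+\delta_j$, so a single overall constant multiplying $2\gamma\nu^4M^{3/2}+\nu^2M$ cannot reproduce the threshold. You also do not need to match it exactly: any $\ell<\eta'$ suffices, and summing the block bounds in $\ell_2$ more carefully even gives $16\gamma^2\nu^4M+4\gamma\nu^2M$.

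Finally, $\eta>\nu^2M\ge\|\mathbf A_i\|_2$ makes each $\mathbf H_i$ nonsingular. The polar factor is then unique and continuous, which justifies your passage to $\bar{\mathbf R}_i=\mathrm{polar}(\mathbf H_i(\bar{\mathbf R}))$.
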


Once $\mathbf{R}_1,\ldots,\mathbf{R}_M$ are optimized, we modify the embeddings of the $M\times Q$ global graphs as
\begin{equation}\label{align_embedding}
    \mathbf{Z}_{\lambda_q}^{(j)}\leftarrow \mathbf{Z}_{\lambda_q}^{(j)}{\mathbf{R}_j^{(\lambda_q)}}^\top,\quad q\in[Q],~j\in[M],
\end{equation}
where $\mathbf{R}_i^{(\lambda_q)}$ denotes the $\mathbf{R}_j$ we obtained for the kernel embeddings given by the $q$-th kernel function. Recalling \eqref{eq_ZZZ} and using \eqref{align_embedding}, we here obtain the modified embeddings $\mathbf{Z}_1^{(j)},\ldots,\mathbf{Z}_{N_j}^{(j)}$, $j\in[M]$.
It is worth noting that Algorithm \ref{alg_DMMA} can also be applied to the generated attributes by \eqref{eq_wofeat} of graphs without inherent node attributes. 
\vspace{-5pt}
\subsection{Local--Global Node Encoding with GIN and Graph Transformer}
\vspace{-5pt}
To design a universal graph representation model $F$, 
we incorporate two main components: a GIN module $f$ \citep{xu2019powerful} and a graph transformer (GT) \citep{rampavsek2022recipe} module $g$.
We build a GIN encoder followed by a graph transformer encoder $g_{\psi}\circ f_{\theta}\left(\cdot\right)$, where $\theta$ and $\psi$ are the parameters. 
The GIN encoder specializes in learning
local representations of the structure of a node's immediate neighborhood, 
while the transformer computes all pairwise node interactions, 
enabling global reasoning through attention mechanisms. The node representations of $G_i^{(j)}$ obtained from the model can be formulated as
\begin{equation}
\begin{aligned} 
    \mathbf{H}_i^{(j)}=g_{\psi}\circ &f_{\theta}\left(\mathbf{A}_i^{(j)},\mathbf{Z}_i^{(j)}\right) \Big\|f_\theta\left(\mathbf{A}_i^{(j)},\mathbf{Z}_i^{(j)}\right)
\end{aligned}
\end{equation}
where $i\in[N_j]$ and $j\in[M]$.
For convenience, we let $\mathcal{W}=\{\psi,\theta\}$, which is the set of all parameters of the GIN and GT.

\vspace{-5pt}
\subsection{Hierarchical Reference Distribution Encoding}
\vspace{-5pt}
Standard graph-level representations are usually obtained by pooling node embeddings into a single vector, which mainly captures aggregate statistics and may discard rich distributional information in the node representation space. 
Reference distribution learning characterizes a graph by comparing its node-embedding distribution with learnable prototype distributions~\citep{NEURIPS2024_GRDL}. 
We extend this idea from a single readout module to a \emph{layer-wise distributional encoding mechanism}: reference distributions are attached to multiple GIN and GT layers, so that each graph is represented by its similarities to learnable distributions at different structural ranges and abstraction levels. 
This hierarchical design helps preserve local, mid-level, and global node-distribution patterns for cross-domain graph vectorization.

Specifically, suppose for a certain layer $l \in [L]$ in the backbone, we have $R$ reference discrete distributions $\{\mathbf{V}^{(l)}_1, \mathbf{V}^{(l)}_2\ldots, \mathbf{V}^{(l)}_R\}\triangleq \mathcal{V}^{(l)}$, each $\mathbf{V}^{(l)}_b\in\mathbb{R}^{m\times d}$ is a learnable set of $m$ reference nodes and can be interpreted as a discrete reference distribution in the latent space.
To obtain the graph representations from node embedding matrix $\mathbf{H}_i^{(j,l)}$, 
we measure the similarity between the graph $G_i^{(j)}$ and the reference distributions $\{\mathbf{V}^{(l)}_b\}_{b=1}^R$. 
Letting $\xi$ be a similarity measure between two distributions, the similarity between the graph $G_i^{(j)}$ and the reference distribution $\mathbf{V}^{(l)}_b$ is
\begin{equation}
    s_{i}^{(j,l)} = \xi\left(\mathbf{H}_i^{(j,l)}, \mathbf{V}^{(l)}_b\right),\quad b\in[R]
\end{equation}
We let $\xi$ be the negative kernelized Maximum Mean Discrepancy (MMD) \citep{gretton2012kernel} to be the similarity measure $\xi$ and have 
\begin{equation}
    \begin{aligned}
        &s_{i,b}^{(j,l)} =-\text{MMD}\left(\mathbf{H}_i^{(j,l)}, \mathbf{V}^{(l)}_b\right) \\
        &= -\Big\Vert\frac{1}{n_i} \sum_{p=1}^{n_i} \phi\left(\mathbf{h}_p^{(j,l)}\right)-\frac{1}{m} \sum_{q=1}^m \phi\left(\mathbf{v}_q^{(b,l)}\right)\Big\Vert_2 \\
        &= -\Big[\frac{1}{n_i^2} \sum_{p, p^{\prime} \in [n_i]} k\left(\mathbf{h}_p^{(j,l)}, \mathbf{h}_{p^{\prime}}^{(j,l)}\right)
        +\frac{1}{m^2} \sum_{q, q^{\prime} \in [m]} k\left(\mathbf{v}_{q}^{(b,l)}, \mathbf{v}^{(b,l)}_{q^{\prime}}\right)
-\frac{2}{m n_i} \sum_{p \in [n_i], q \in [m]} k\left(\mathbf{h}_p^{(j,l)}, \mathbf{v}_{q}^{(b,l)}\right)\Big]^{\frac{1}{2}}
    \end{aligned}
\end{equation}
 where $\phi$ is the high-dimensional feature map induced by a kernel function, $\mathbf{h}_p^{(j)}$ is the $p$-th row of $\mathbf{H}_i^{(j)}$, $\mathbf{v}_{q}^{(b)}$ is the $q$-th row of $\mathbf{V}_b$, 
 and $k(\mathbf{x}, \mathbf{x}^\prime) = \mathrm{exp}\left(-\gamma\|\mathbf{x} - \mathbf{x}^\prime\|^2\right)$ is the Gaussian kernel with a learnable parameter $\gamma$. 
We apply the RD module after every layer in both the GT and GIN backbones, allowing the final graph embedding to capture hierarchical information from neighbors at different ranges. The final graph embedding $\mathbf{g}_i^{(j)}\in\mathbb{R}^r$ of graph $i$ in dataset $j$ combines the similarity vector $\mathbf{s}_i^{(j)}=\Big\Vert_{l\in[L]} \mathbf{s}_i^{(j,l)}$  from different layers with a readout vector $\mathbf{p}_i^{(j)}$, i.e.,
\begin{equation}
    \mathbf{g}_i^{(j)} = \mathbf{s}_i^{(j)}\big|\big|\mathbf{p}_i^{(j)},\quad i\in[N_j],j\in[M]
\end{equation}
where $\mathbf{p}_i^{(j)} = \text{READOUT}(\mathbf{H}_i^{(j)})$ is obtained using a graph-level pooling operation.

\vspace{-10pt}
\subsection{Supervised and Unsupervised Cross-Dataset Pretraining}
\vspace{-5pt}
Different datasets $\mathcal{G}_1,\mathcal{G}_2,\ldots,\mathcal{G}_M$ may contain varying numbers of classes. 
To unify the training framework across different datasets and avoid changing classifiers during training, we adopt the supervised contrastive loss (SCL) \citep{oord2018representation}. Therefore, we minimize the following loss
\begin{equation}\label{eq_SCL}
\begin{aligned}
    \mathcal{L}_{\text{SCL}}(\mathcal{W},\mathcal{V}, \gamma)= -\sum_{j=1}^M\frac{1}{N_j}\sum_{i=1}^{N_j} \frac{1}{|C(i)|}\sum_{u\in C(i)} \log\left(p_{iu}^{(j)}\right)
\end{aligned}
\end{equation}
where $p_{iu}^{(j)}={\exp\left(\zeta(\mathbf{g}_i^{(j)},\mathbf{g}_u^{(j)})\right)}/{\sum_{v\neq i}^{N_j} \exp\left(\zeta(\mathbf{g}_i^{(j)},\mathbf{g}_v^{(j)})\right)}$, $\mathbf{g}_{i}^{(j)}=F_{\mathcal{W},\mathcal{V}, \gamma}(\mathbf{A}_i^{(j)},\mathbf{Z}_i^{(j)})$, $\zeta(\mathbf{u,\mathbf{v}})=\frac{\mathbf{u}^\top \mathbf{v}}{\varrho\|\mathbf{u}\|\cdot\|\mathbf{v}\|}$, $C(i)$ denotes the set of samples from the same class as $\mathbf{g}_i^{(j)}$, 
and $\varrho$ is a temperature hyperparameter. 
This objective encourages graphs from the same class to be close in the embedding space 
while pushing apart samples from different classes. See Algorithm \ref{alg_pretrain}.

GraphVec can also be trained in an unsupervised pretraining setting, where positive pairs are constructed through augmentations without using graph labels. Let $P(i)$ be the set of positive samples of $G_i$ obtained by augmentation, the unsupervised contrastive loss (USL) can be represented as 
\begin{equation}\label{eq_UCL}
\begin{aligned}
    &~\mathcal{L}_{\text{UCL}}(\mathcal{W},\mathcal{V}, \gamma) = -\sum_{j=1}^M\frac{1}{N_j}\sum_{i=1}^{N_j} \frac{1}{|P(i)|}\sum_{u\in P(i)} \log\left(p_{iu}^{(j)}\right)
\end{aligned}
\end{equation}

\vspace{-10pt}
\subsection{Downstream Inference with Aligned Graph Embeddings}
\vspace{-5pt}
When applying the pretrained model to graph-level downstream tasks, the output embeddings generated by the model can be directly utilized as input features for other machine learning models. Specifically, in few-shot graph classification, the mean alignment algorithm need to be performed on both train graphs and test graphs, the detailed formulation and algorithm are in Appendix \ref{test_detail}.

\vspace{-7pt}
\subsection{Generalization Analysis of GraphVec}
\vspace{-5pt}
Providing a theoretical guarantee for the generalization ability of GraphVec, i.e., its performance on unseen test datasets, is crucial yet challenging, primarily due to the model's inherent complexity in objective and architecture. 
Since the loss defined in \eqref{eq_SCL} cannot intuitively reflect the model performance, we consider a general metric learning loss $\ell\in[0,1]$. An example is $\ell({G_u,G_v})=(1-\varrho C_{uv}\zeta(\mathbf{g}_u,\mathbf{g}_v))/2$,
where $C_{uv}=1$ if $G_u$ and $G_v$ are in the same class and $C_{uv}=-1$ otherwise.
\begin{theorem}\label{theorem_geb}
Denote $\vartheta_1$ the number of layers of each of the $Q$ GINs, $\vartheta_2$ the number of layers of the GT, $\kappa_1$ the MLP depth in each GIN, and $\kappa_2$ the MLP depth in the GT. Let $\mathbf{W}$ be the weight matrix in a layer of the networks. Let $\tilde{\mathbf{Z}}$ be the whole input data matrix of the GINs and denote $\beta=\|\tilde{\mathbf{Z}}\|_F$. Let $\varsigma=\max _{(i,j)\in[N]\times[M]}\|\mathbf{A}_{i}^{(j)}\|_2$. Suppose $\ell$ is $\tau$-Lipschitz continuous and the attention maps in GT are $\mu$-Lipschitz continuous. Denote $\mathcal{L}(F)=\mathbb{E}_{G,G\sim\mathbb{G}}[\ell(G,G')]$. Then with probability $1-\delta$ over the training dataset $\mathcal{D}$, the following inequality holds
\begin{equation*}
\begin{aligned}
     {\mathcal{L}}(F) \leq \frac{1}{MN(N-1)}\sum_{j=1}^M\sum_{u\neq v}\ell(G_{u}^{(j)},G_{v}^{(j)})
     +\frac{16+48\tau L_F\beta Q\bar{d}\sqrt{\ln{(2Q\bar{d})}}\ln(MN/2)}{MN}+\sqrt{\frac{\ln(1/\delta)}{2MN}}
\end{aligned}
\end{equation*}
where    
    $L_F=\left(4\sqrt{\frac{\gamma R}{{n}}}+\frac{1}{\sqrt{n}}\right)
\left(\varsigma^{\vartheta_1}\max_{q\in[Q]}\prod_{j=1}^{\kappa_1}\|\mathbf{W}_j^{\text{GIN}_q}\|_2\right)\left(\mu^{\vartheta_2}\prod_{j}^{\kappa_2}\|\mathbf{W}_j^{\text{GT}}\|_2\right)$.
\end{theorem}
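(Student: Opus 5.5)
The plan is the standard Rademacher-complexity route for pairwise (metric-learning) losses, combined with a covering-number bound for the composed network. \textbf{First}, I will treat the empirical quantity $\frac{1}{MN(N-1)}\sum_j\sum_{u\neq v}\ell(G_u^{(j)},G_v^{(j)})$ as a $U$-statistic. Since $\ell\in[0,1]$, McDiarmid's bounded-difference inequality applied to the sum over the $MN$ training graphs gives, with probability $1-\delta$,
\[
\mathcal{L}(F)\le \widehat{\mathcal{L}}(F)+2\mathfrak{R}_{MN}(\ell\circ\mathcal{F})+\sqrt{\ln(1/\delta)/(2MN)} .
\]
This produces the last term of the bound. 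The Rademacher term for pairs can be reduced to an ordinary empirical Rademacher complexity via Hoeffding's decomposition of $U$-statistics into averages of i.i.d. blocks. I expect the $\ln(MN/2)$ factor to appear later, in the Dudley step, from the choice of the lower integration limit $\alpha\sim 1/(MN)$.

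\textbf{Second}, I will peel off the loss. Because $\ell$ is $\tau$-Lipschitz in the embeddings, the vector contraction lemma reduces $\mathfrak{R}(\ell\circ\mathcal{F})$ to $\tau$ times a covering-number bound for the class $\{G\mapsto F(G)\}$. Dudley's entropy integral then gives
\[
\mathfrak{R}\le \frac{4\alpha}{\sqrt{\cdot}}+\frac{12}{MN}\int_\alpha^{\sqrt{MN}}\sqrt{\ln\mathcal{N}(\epsilon)}\,d\epsilon .
\]
With $\alpha=1/(MN)$ and a covering number of Bartlett-type linear-class form $\ln\mathcal{N}(\epsilon)\lesssim \frac{(L_F\beta)^2}{\epsilon^2}\,Q\bar d\ln(2Q\bar d)$ (Zhang's covering bound for linear predictors), the integral evaluates to $L_F\beta\sqrt{Q\bar d\ln(2Q\bar d)}\ln(MN/2)$. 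Tracking the constants should then yield $16/(MN)$ and $48\tau L_F\beta Q\bar d\sqrt{\ln(2Q\bar d)}\ln(MN/2)/(MN)$. I will be generous and absorb a $\sqrt{Q\bar d}$ factor into $Q\bar d$.

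\textbf{Third}, and this is the main technical work, I will establish the Lipschitz constant $L_F$ of the whole map with respect to its input $\tilde{\mathbf Z}$ and the first-layer weights. I will go layer by layer:
\begin{itemize}
\item \emph{GIN layers.} Each layer multiplies by $(\mathbf{A}+\epsilon\mathbf I)$, which has norm $\lesssim\varsigma$, followed by a $\kappa_1$-depth MLP with 1-Lipschitz activations. This gives the factor $\varsigma^{\vartheta_1}\prod\|\mathbf W^{\mathrm{GIN}_q}_j\|_2$, maximized over the $Q$ parallel GINs.
\item \emph{GT layers.} The $\mu$-Lipschitz attention, together with the MLP, gives the factor $\mu^{\vartheta_2}\prod\|\mathbf W_j^{\mathrm{GT}}\|_2$.
\item \emph{Readout part.} Mean pooling is $1/\sqrt n$-Lipschitz in Frobenius norm, which supplies the $1/\sqrt n$ term.
\item \emph{RD part.} The MMD feature is $\|\frac1n\sum\phi(\mathbf h_p)-\bar\phi_V\|$. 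I will bound it using $\|\phi(\mathbf x)-\phi(\mathbf x')\|^2=2-2k(\mathbf x,\mathbf x')\le 2\gamma\|\mathbf x-\mathbf x'\|^2$, then Cauchy–Schwarz over the $n$ nodes, which gives a $\sqrt{2\gamma}/\sqrt n$ factor per reference distribution. Summing over the $R$ references (and the layers) in $\ell_2$ gives $\sqrt R$, so the total is at most $4\sqrt{\gamma R/n}$.
\end{itemize}
Composing these pieces multiplicatively yields the stated $L_F$. Once this is in hand, the covering number follows by covering the first-layer linear map with the remaining network treated as an $L_F$-Lipschitz post-map of an input with Frobenius norm $\beta$.

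\textbf{Main obstacle.} I expect the hard part to be the RD-module Lipschitz estimate, and more generally making the covering argument legitimate when all layer weights vary rather than only one. If I cannot cover the full parameter space directly, I will fall back on fixing the spectral-norm budgets and covering only the input-side linear map, so that $L_F$ enters as a fixed constant.
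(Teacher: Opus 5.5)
Your plan is essentially the paper's proof: McDiarmid on the pairwise empirical risk; the permutation (Hoeffding) reduction to independent pairs, then symmetrization; layer-wise Lipschitz constants for the GIN, GT, readout and RD parts composed into $L_F$ (your feature-map bound $\|\phi(\mathbf{x})-\phi(\mathbf{x}')\|\le\sqrt{2\gamma}\,\|\mathbf{x}-\mathbf{x}'\|$ is a cleaner version of the paper's RD lemma, which works with differences of $\mathrm{MMD}^2$); a Bartlett-type cover of the input matrices of Frobenius norm at most $\beta$, pushed through the $\tau L_F$-Lipschitz map by composition of covering numbers (no vector contraction is needed, and this fixed-$L_F$ input cover is exactly your fallback); and Dudley's integral. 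One bookkeeping fix: the reduction leaves $S=M\lfloor N/2\rfloor\approx MN/2$ independent pairs rather than $MN$, and the paper takes $\alpha=1/\sqrt{S}$, so that $\ln(\sqrt{S}/\alpha)=\ln S$ and $2\mathcal{R}_S\le 2(4+12\sqrt{\varphi}\ln S)/S=(16+48\sqrt{\varphi}\ln(MN/2))/(MN)$ with $\sqrt{\varphi}=\tau L_F\beta Q\bar{d}\sqrt{\ln(2Q\bar{d})}$, whereas your choice $\alpha\sim 1/(MN)$ would give a factor of roughly $\tfrac32\ln(MN)$ in place of $\ln(MN/2)$.
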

The theorem has the following implications.
\vspace{-5pt}
\begin{itemize}[noitemsep,leftmargin=10pt]
    \item When the total number of training graphs $MN$ is larger, the bound is tighter, which is further verified by the experiments in Figure \ref{fig:number_pretrain}. Note that if we use the unsupervised contrastive loss to train the model, due to the data augmentation (though the samples are not independent), the generalization could be stronger.
    \item Although $\beta$ often scales with $\sqrt{n}$, we have a factor $\tfrac{1}{\sqrt{n}}$ in $L_F$. This means that the number of nodes in each graph does not have a significant impact on the generalization, provided that the spectral norms of $\mathbf{A}_i^{(j)}$ increase slowly with $n$. As a result, our model will generalize well to both small graphs (e.g., ENZYMES) and large graphs (e.g., REDDIT), as shown by Tables \ref{tab:fewshot_res} and \ref{tab:fewshot_results2}.
    \item Since $L_F$ scales with $\mathcal{O}(\sqrt{\gamma R})$, we could use a relatively large $R$ to enrich the final vector representation for each graph, thereby improving the expressiveness. Moreover, $L_F$ is not very sensitive to $\gamma$, which is learned adaptively.
\end{itemize}

\vspace{-10pt}
\section{Experiments}
\vspace{-10pt}
\begin{table*}[ht]
    \centering
        \caption{50-shot graph classification performance comparison with different pretrained models. We color the \textcolor{red}{\textbf{best}} and \textcolor{orange}{\textbf{second best}} models. The compared numbers of in-domain experiments are from EdgePrompt \citep{fu2025edge}. We only demonstrate the most competitive results reported in EdgePrompt here due to space limitations. Full comparison can be found in Appendix \ref{app_fullres}.}
    \label{tab:fewshot_res}
    \resizebox{\textwidth}{!}{
\begin{tabular}{c|c|c|c|c|c|c|c}
\toprule pretraining & Tuning Methods & ENZYMES & DD & NCI1 & NCI109 & Mutagenicity  &Average\\ 

\midrule \multirow{8}{*}{in-dataset} 
 & Classifier Only \citep{you2020graph} & $27.07_{ \pm 1.04}$ & $61.77_{ \pm 2.40}$ & $61.27_{ \pm 3.64}$ & $62.12_{ \pm 1.10}$ & $67.36_{ \pm 0.71}$  &$55.92$
\\
 & GraphPrompt \citep{liu2023graphprompt}& $26.87_{ \pm 1.47}$ & $62.58_{ \pm 1.84}$ & $62.45_{ \pm 1.52}$& $62.41_{ \pm 0.69}$ & $68.03_{ \pm 0.78}$  &$56.47$
\\
 & ALL-in-one \citep{sun2023all}& $25.73_{ \pm 1.18}$ & $65.16_{ \pm 1.47}$ & $58.52_{ \pm 1.59}$ & $62.01_{ \pm 0.66}$ & $64.43_{ \pm 1.00}$  &$55.17$
\\
 & GPF \citep{fang2023universal}& $28.53_{ \pm 1.76}$ & $65.64_{ \pm 0.70}$ & $61.45_{ \pm 3.13}$ & $61.90_{ \pm 1.26}$ & $67.19_{ \pm 0.74}$  &$56.94$
\\
 & GPF-plus \citep{fang2023universal}& $27.33_{ \pm 2.01}$ & $67.20_{ \pm 1.56}$& $61.61_{ \pm 2.89}$ & $62.84_{ \pm 0.23}$ & $67.69_{ \pm 0.64}$  &$57.33$
\\
 & EdgePrompt \citep{fu2025edge}& $29.33_{ \pm 2.30}$ & $63.97_{ \pm 2.14}$ & $62.02_{ \pm 3.02}$ & $62.02_{ \pm 1.03}$ & $67.55_{ \pm 0.85}$  &$56.98$
\\
 & EdgePrompt+ \citep{fu2025edge}& $3 2 . 6 7_{ \pm 2 . 5 3}$& $67 . 72_{ \pm 1.62}$& $\textcolor{orange}{\mathbf{67.07}}_{ \pm 1 . 9 6}$& $\textcolor{orange}{\mathbf{66.53}}_{ \pm 1.30}$& ${{68.31}}_{ \pm 1 .36}$ &$60.46$
\\ 
\cline{2-8}

& GraphCL \citep{you2020graph} & $30.50_{ \pm 1.16}$ & $62.89_{ \pm 2.19}$ & $62.49_{ \pm 1.95}$ & $61.68_{ \pm 0.93}$ & $66.62_{ \pm 1.87}$  &$56.84$ \\
 & GeMax \citep{sun2024learning}& $31.20 _{ \pm 5.92}$& $56.79_{ \pm 1.12}$& $59.62_{ \pm 1.41}$& $59.55_{\pm 0.85}$& $65.18_{ \pm 1.98}$ &$54.46$
\\

\midrule
 \multirow{4}{*}{cross-dataset}&{GCN \citep{kipf2016semi}} & $43.33_{ \pm 1.05}$ & $65.84_{ \pm 2.77}$ & $61.36_{ \pm 2.00}$& $62.17_{ \pm 0.66}$& $60.46_{ \pm 1.75}$  &$58.63$\\
&{BRIDGE \citep{yuan2025much}} & $36.67_{ \pm 5.96}$ & $64.95_{ \pm 3.38}$ & $63.50_{ \pm 2.27}$& $61.78_{ \pm 1.63}$& $65.12_{ \pm 2.83}$  &$58.40$\\
&{GFT \citep{wang2024gft}} & $34.61_{ \pm 3.12}$ & $56.00_{ \pm 1.77}$ & $59.16_{ \pm 6.25}$& $60.50_{ \pm 2.71}$& $67.82_{ \pm 3.18}$  &$55.61$\\
&{RiemannGFM \citep{sun2025riemanngfm}} & $34.27_{ \pm 1.72}$ & $68.74_{ \pm 1.31}$ & $55.10_{ \pm 2.24}$& $59.86_{ \pm 1.30}$& $62.56_{ \pm 4.04}$  &$56.11$\\
\midrule
\multicolumn{2}{c|}{\textbf{GraphVec}} & $\textcolor{red}{\mathbf{51.00}}_{ \pm 3.22}$ & $\textcolor{red}{\mathbf{75.94}}_{ \pm 2.70}$ & $\textcolor{red}{\mathbf{67.32}}_{ \pm 1.51}$& $\textcolor{red}{\mathbf{67.90}}_{ \pm 1.67}$& $\textcolor{red}{\mathbf{68.57}}_{ \pm 1.62}$  &$\textcolor{red}{\mathbf{66.14}}$\\
\multicolumn{2}{c|}{\textbf{Unsupervised GraphVec}} & $\textcolor{orange}{\mathbf{48.33}}_{ \pm 2.36}$ & $\textcolor{orange}{\mathbf{74.02}}_{ \pm 1.26}$ & $66.11_{ \pm 2.30}$& $64.34_{ \pm 2.38}$& $\textcolor{orange}{\mathbf{68.38}}_{ \pm 2.88}$  &$\textcolor{orange}{\mathbf{64.23}}$\\
\bottomrule
\end{tabular}}
\vspace{-10pt}
\end{table*}

\subsection{Few-Shot Graph Classification}
\vspace{-5pt}
\paragraph{Datasets and Baselines}
Following \citep{fu2025edge}, we use five datasets from TUDataset \citep{Morris+2020}, including ENZYMES, DD, NCI1, NCI109, and Mutagencity, to conduct few-shot graph classification experiments. 
We evaluate our methods against baselines under two distinct settings:

\textbf{1) In-dataset setting:} In this setting, the training and testing sets are partitioned within the same dataset, and the model is fine-tuned using few-shot samples. We employ  SimGRACE \citep{xia2022simgrace}, the most competitive pretraining strategy reported in EdgePrompt for pretraining and adopt seven different tuning mechanisms, including prompt-tuning methods such as GraphPrompt, All-In-One \citep{sun2023all}, GPF \citep{fang2023universal}, and GPF-plus \citep{fang2023universal} as well as standard classifier training. We also compare GraphVec with 2 graph representation methods, including the classical method GraphCL \citep{you2020graph} and the recent method GeMax \citep{sun2024learning}.

\textbf{2) Cross-dataset setting:} This setting involves fine-tuning and testing on datasets that were unseen during the pretraining phase. Specifically, we adopt a leave-one-out strategy where each of the five datasets serves as the downstream target for testing, while the remaining four datasets are leveraged for pretraining. In cross domain setting we compared GraphVec with 3 recent strong GFM baselines, GFT \citep{wang2024gft}, BRIDGE \citep{yuan2025much}, and RiemannGFM \citep{sun2025riemanngfm}, and 1 classic GNN \citep{kipf2016semi}. GraphVec is pretrained and evaluated under this setting.

To further evaluate the transferability of GraphVec in different domains, we conduct more experiments on 4 social network datasets, including COLLAB, REDDIT-BINARY, IMDB-BINARY, IMDB-MULTI and 3 computer vision datasets including Letter-med, COIL-RAG and Cuneiform using GraphVec pretrained on 5 bio-chemical datasets mentioned above. Our GraphVec is compared with 2 graph prompt based methods: ProNoG \citep{yu2025non} and EdgePrompt+ \citep{fu2025edge}, 2 LLM-based GFMs: OFA \citep{liu2023one} and GOFA \citep{konggofa}, and 4 non-LLM-based GFMs: GFT \citep{wang2024gft}, SAMGPT \citep{yu2025samgpt}, BRIDGE \citep{yuan2025much}, and RiemannGFM \citep{sun2025riemanngfm}. For a fairer graph-level comparison, we additionally equip GFT and RiemannGFM with DiffPool \citep{ying2018hierarchical} as an advanced pooling mechanism.
More details about the settings and datasets can be found in Appendix \ref{exp_set} and \ref{datasets}. 

\paragraph{Results}

\begin{table*}[ht]
\vspace{-10pt}
\centering
\caption{Few-shot graph classification on social network and computer vision datasets. 5-shot and 1-shot settings are used on COIL-RAG and Cuneiform, due to insufficient samples in some classes.}
\label{tab:fewshot_results2}
\vspace{-5pt}
\resizebox{0.99\textwidth}{!}{
\begin{tabular}{l *{7}{c}}
\toprule
Dataset & COLLAB & REDDIT-B & IMDB-B & IMDB-M & Letter-med & COIL-RAG & Cuneiform \\
& 50-shot & 50-shot & 50-shot & 50-shot & 50-shot & 5-shot & 1-shot \\
\midrule
ProNoG \citep{yu2025non}& $46.88 _{\pm 3.14}$ & $74.33 _{\pm 2.05}$ & $60.8 _{\pm 5.19}$ & $40.53 _{\pm 0.66}$ & $56.98 _{\pm 5.83}$ & $34.97 _{\pm 7.62}$ & $10.00 _{\pm 7.92}$ \\
EdgePrompt+\citep{fu2025edge} & $\mathbf{68.76 }_{\pm 1.60}$ & $74.60 _{\pm 1.60}$ & ${\mathbf{71.17}} _{\pm 1.07}$ & $46.60 _{\pm 0.50}$ & $74.66 _{\pm 1.69}$ & $5.60 _{\pm 0.21}$ & $18.22 _{\pm 0.72}$ \\ \midrule
\text{OFA \citep{liu2023one}}

& $33.07 _{\pm 0.43}$

& $50.02 _{\pm 0.99}$

& $51.06 _{\pm 0.67}$

& $34.33 _{\pm 0.50}$

& $20.19 _{\pm 0.37}$

& $20.48 _{\pm 0.79}$

& $19.94 _{\pm 0.67}$ \\
GOFA \citep{konggofa}

& $38.31 _{\pm 1.02}$

& $\mathrm{OOM}$

& $49.49 _{\pm 0.84}$

& $34.67 _{\pm 0.61}$

& $11.11 _{\pm 0.25}$

& $3.84 _{\pm 0.18}$

& $6.25 _{\pm 0.31}$ \\ \midrule

\text{SAMGPT \citep{yu2025samgpt}}

& $67.20 _{\pm 0.91}$

& $55.77 _{\pm 0.90}$

& $49.50 _{\pm 0.50}$

& $46.00 _{\pm 6.30}$

& $56.90 _{\pm 0.74}$

& $57.34 _{\pm 1.66}$

& $52.00 _{\pm 6.96}$ \\

GFT+DiffPool \citep{wang2024gft}

& $63.44 _{\pm 1.35}$

& $\mathrm{OOM}$

& $65.83 _{\pm 1.12}$

& $46.30 _{\pm 0.94}$

& $37.83 _{\pm 0.71}$

& $7.68 _{\pm 0.42}$

& $10.34 _{\pm 0.39}$ \\
RiemannGFM+DiffPool \citep{sun2025riemanngfm}

& $40.45 _{\pm 0.88}$

& $\mathrm{OOM}$

& $55.53 _{\pm 0.72}$

& $34.31 _{\pm 0.57}$

& $45.76 _{\pm 0.63}$

& $12.45 _{\pm 0.55}$

& $11.05 _{\pm 0.48}$ \\
BRIDGE \citep{yuan2025much} & $54.52 _{\pm 3.73}$ & $57.80 _{\pm 8.79}$ & $50.20 _{\pm 6.27}$ & $36.60 _{\pm 5.48}$ & $35.40 _{\pm 2.09}$ & $27.02 _{\pm 6.90}$ & $23.07 _{\pm 9.42}$ \\
\midrule
GraphVec & $\underline{68.09} _{\pm 2.99}$ & $\mathbf{81.52} _{\pm 1.50}$ & $\underline{68.39} _{\pm 4.06}$ & $\mathbf{46.70} _{\pm 0.99}$ & $\mathbf{85.60} _{\pm 1.44}$ & $\mathbf{74.20} _{\pm 0.77}$ & $\mathbf{55.86} _{\pm 8.15}$ \\ 
Unsupervised GraphVec& $64.82 _{\pm 1.87}$ & $\underline{80.62} _{\pm 2.18}$ & $66.61 _{\pm 1.66}$ & $45.00 _{\pm 1.81}$ & $\underline{80.63} _{\pm 1.87}$ & $\underline{72.22} _{\pm 1.66}$ & $\underline{53.79} _{\pm 5.97}$ \\
\bottomrule
\end{tabular}}
\vspace{-5pt}
\end{table*}
The results are reported in Table \ref{tab:fewshot_res} and Table \ref{tab:fewshot_results2}.  As shown in Table \ref{tab:fewshot_res}, our GraphVec consistently outperforms all baseline methods in the supervised pretraining paradigm and achieved second best in the unsupervised pretraining paradigm. Compared with in-domain baselines with different fine-tuning strategies under the cross-dataset setting, our model still achieves competitive performance. In social network and computer vision datasets that have a significant gap between pretraining datasets in both semantics and structure, our GraphVec consistently outperforms other baselines. This performance demonstrates that our pretrained model effectively learns generalizable graph embeddings across different domains without relying on delicately designed tuning methods. It also highlights the model's capability to capture features from diverse domains while maintaining a strong generalization ability to new domains. Compared to the supervised pretrained model, the unsupervised pretrained model exhibits only a slight decrease in accuracy except for COLLAB and Letter-med. The unsupervised GraphVec still outperforms all four baselines on 4 of the 7 datasets.

\vspace{-5pt}
\subsection{Graph Clustering}
\vspace{-5pt}
\begin{table*}[ht]
\centering
\vspace{-5pt}
\caption{Graph clustering performance on ENZYMES, NCI1, COLLAB, REDDIT-BINARY, REDDIT-MULTI. The comparison numbers are from AMGC \citep{yang2025towards}. }
\label{tab:cluster_res}
\resizebox{\textwidth}{!}{
\begin{tabular}{l|ccH|ccH|ccH|ccH|ccH}
\toprule
Method & \multicolumn{3}{c|}{ENZYMES} & \multicolumn{3}{c|}{NCI1} & \multicolumn{3}{c|}{COLLAB} & \multicolumn{3}{c|}{REDDIT-BINARY} & \multicolumn{3}{c}{REDDIT-MULTI} \\
 & ACC & NMI & ARI & ACC & NMI & ARI & ACC & NMI & ARI & ACC & NMI & ARI & ACC & NMI & ARI \\
\midrule
InfoGraph +KM & $22.1 _{\pm 1.0}$ & $2.4 _{\pm 0.5}$ & $1.3 _{\pm 0.5}$ & $54.1 _{\pm 2.2}$ & $1.3 _{\pm 1.1}$ & $0.9 _{\pm 0.9}$ & $59.6 _{\pm 1.8}$ & $14.4 _{\pm 3.0}$ & $6.6 _{\pm 2.3}$ & $51.3 _{\pm 2.1}$ & $2.3 _{\pm 0.4}$ & $0.6 _{\pm 0.2}$ & $20.3 _{\pm 0.9}$ & $0.5 _{\pm 0.2}$ & $0.0 _{\pm 0.0}$ \\
InfoGraph +SC & $23.8 _{\pm 0.5}$ & $4.6 _{\pm 0.7}$ & $2.2 _{\pm 0.4}$ & $54.9 _{\pm 1.7}$ & $0.9 _{\pm 0.6}$ & $1.0 _{\pm 0.8}$ & $60.9 _{\pm 2.5}$ & $15.4 _{\pm 3.3}$ & $9.3 _{\pm 3.5}$ & $50.8 _{\pm 1.3}$ & $1.6 _{\pm 0.6}$ & $0.6 _{\pm 0.0}$ & $24.7 _{\pm 1.3}$ & $4.8 _{\pm 0.6}$ & $3.2 _{\pm 0.6}$ \\
GraphCL +KM & $21.5 _{\pm 0.2}$ & $1.6 _{\pm 0.1}$ & $0.9 _{\pm 0.1}$ & $55.4 _{\pm 1.7}$ & $0.5 _{\pm 0.3}$ & $1.0 _{\pm 0.9}$ & $58.0 _{\pm 1.2}$ & $17.8 _{\pm 2.0}$ & $11.3 _{\pm 0.6}$ & $51.9 _{\pm 3.3}$ & $3.4 _{\pm 1.2}$ & $0.2 _{\pm 0.0}$ & $25.3 _{\pm 0.9}$ & $5.3 _{\pm 0.3}$ & $4.3 _{\pm 0.6}$ \\
GraphCL +SC & $25.3 _{\pm 0.3}$ & $4.8 _{\pm 0.4}$ & $2.0 _{\pm 0.3}$ & $50.8 _{\pm 1.6}$ & $0.6 _{\pm 0.6}$ & $1.1 _{\pm 0.8}$ & $57.8 _{\pm 0.6}$ & $17.0 _{\pm 1.3}$ & $10.1 _{\pm 0.7}$ & $55.9 _{\pm 2.1}$ & $3.2 _{\pm 1.0}$ & $0.3 _{\pm 0.2}$ & $27.3 _{\pm 1.3}$ & $5.4 _{\pm 0.8}$ & $4.2 _{\pm 1.1}$ \\
JOAO + KM & $21.7 _{\pm 0.4}$ & $4.9 _{\pm 0.4}$ & $2.1 _{\pm 0.2}$ & $51.1 _{\pm 0.4}$ & $0.4 _{\pm 0.2}$ & $0.1 _{\pm 0.0}$ & $58.3 _{\pm 1.5}$ & $18.7 _{\pm 2.6}$ & $11.1 _{\pm 1.8}$ & $54.3 _{\pm 2.9}$ & $4.2 _{\pm 1.8}$ & $0.8 _{\pm 0.3}$ & $26.6 _{\pm 0.6}$ & $3.6 _{\pm 1.2}$ & $2.5 _{\pm 0.2}$ \\
JOAO + SC & $24.4 _{\pm 1.4}$ & $3.2 _{\pm 0.7}$ & $1.7 _{\pm 0.8}$ & $51.5 _{\pm 3.0}$ & $0.9 _{\pm 1.2}$ & $0.4 _{\pm 1.2}$ & $58.2 _{\pm 0.9}$ & $17.1 _{\pm 2.1}$ & $10.6 _{\pm 0.8}$ & $55.9 _{\pm 1.2}$ & $6.7 _{\pm 2.0}$ & $1.4 _{\pm 0.6}$ & $25.6 _{\pm 0.6}$ & $2.5 _{\pm 0.2}$ & $3.4 _{\pm 0.3}$ \\
GLCC\citep{ju2023glcc}  & $24.4 _{\pm 1.4}$ & $3.2 _{\pm 0.7}$ & $1.7 _{\pm 0.8}$ & $60.9 _{\pm 2.3}$ & $5.3 _{\pm 1.9}$ & $3.6 _{\pm 2.6}$ & $60.3 _{\pm 0.6}$ & $18.2 _{\pm 1.3}$ & $12.1 _{\pm 0.9}$ & $\textcolor{orange}{\mathbf{67.6}} _{\pm 3.4}$ & $9.2 _{\pm 2.6}$ & $8.7 _{\pm 1.7}$ & $32.4 _{\pm 2.1}$ & $11.8 _{\pm 1.3}$ & $8.2 _{\pm 1.6}$ \\
AMGC\citep{yang2025towards} & $\textcolor{orange}{\mathbf{26.7}} _{\pm 2.0}$ & $\textcolor{orange}{\mathbf{5.2}} _{\pm 1.3}$ & $\textcolor{orange}{\mathbf{2.8}} _{\pm 0.7}$ & $\textcolor{orange}{\mathbf{62.7}} _{\pm 3.0}$ & $\textcolor{orange}{\mathbf{6.4}} _{\pm 1.9}$ & $\textcolor{orange}{\mathbf{6.4}} _{\pm 3.6}$ & $\textcolor{orange}{\mathbf{61.2}} _{\pm 1.0}$ & $\textcolor{orange}{\mathbf{20.5}} _{\pm 1.6}$ & $12.9 _{_{\pm 0.9}}$ & $64.3 _{\pm 1.9}$ & $\textcolor{orange}{\mathbf{12.1}} _{\pm 3.3}$ & $\textcolor{orange}{\mathbf{10.5}} _{\pm 2.7}$ & $\textcolor{orange}{\mathbf{35.5}} _{\pm 2.3}$ & $\textcolor{orange}{\mathbf{16.1}} _{\pm 0.9}$ & $\textcolor{orange}{\mathbf{12.0}} _{\pm 0.7}$ \\
\midrule
GraphVec & $\textcolor{red}{\mathbf{29.1}} _{\pm 0.4}$ & $\textcolor{red}{\mathbf{7.7}} _{\pm 0.3}$ & $\textcolor{red}{\mathbf{3.5}} _{\pm 0.2}$ & $\textcolor{red}{\mathbf{64.8}} _{\pm 0.0}$ & $\textcolor{red}{\mathbf{6.5}} _{\pm 0.0}$ & $\textcolor{red}{\mathbf{8.7}} _{\pm 0.0}$ & $\textcolor{red}{\mathbf{61.8}} _{\pm 0.0}$ & $\textcolor{red}{\mathbf{21.2}} _{\pm 0.0}$ & $\textcolor{red}{\mathbf{18.9}}_{\pm 0.0}$ & $\textcolor{red}{\mathbf{71.6}} _{\pm 0.0}$ & $\textcolor{red}{\mathbf{20.7}} _{\pm 0.0}$ & $\textcolor{red}{\mathbf{18.6}} _{\pm 0.0}$ & $\textcolor{red}{\mathbf{40.0}} _{\pm 0.2}$ & $\textcolor{red}{\mathbf{17.3}} _{\pm 0.1}$ & $\textcolor{red}{\mathbf{12.1}} _{\pm 0.2}$ \\
\bottomrule
\end{tabular}}
\end{table*}
\vspace{-5pt}
To further validate the superiority of our proposed model on graph-level tasks and quality of the graph embeddings obtained from GraphVec, we conducted experiments on graph clustering: applying spectral clustering \citep{ng2001spectral} to the graph representations produced by the pretrained model. As shown in Table \ref{tab:cluster_res}, our methods perform best. The results validate the generalization ability of GraphVec and the transferability of the graph embeddings obtained from the pretrained model on unseen domains and unseen structures. 

\textbf{More Results} Due to space limitations, we defer additional results on node classification, low-shot learning, raw-attribute semantic preservation, pretraining scale, clustering, ablations, robustness, and efficiency to Appendices \ref{app_nodecls}, \ref{app_1_shot}, \ref{app_attr_semantics}, \ref{app_impact}, \ref{app_more_clus}, \ref{app_abl}, \ref{app_robust}, and \ref{app_time}.

\vspace{-10pt}
\section{Conclusions}
\vspace{-10pt}
This paper presented GraphVec, a graph vectorization model trained on multiple source datasets for graph-level representation learning. GraphVec maps graphs into fixed-dimensional vectors that can be directly used for downstream tasks. We introduced a multi-graph construction method to generate consistent node embeddings across datasets and a reference distribution module to better exploit node-embedding information. Experiments on few-shot graph classification and graph clustering demonstrated the superiority of GraphVec over competing methods. One limitation is that GraphVec does not support zero-shot learning, as it does not use language models or textual information. Future work will incorporate cross-modal alignment during training.



\bibliography{reference}
\bibliographystyle{named}

\newpage
\appendix
\onecolumn
\section{Notations}
\begin{table}[h]
    \centering
    \begin{tabular}{c|l|c|l}
    \toprule
    Symbol & Meaning & Symbol & Meaning\\ \midrule
       $x$  & a real number & $\mathbf{x}$ & a vector \\
       $\mathbf{X}$ & a matrix &$\mathbf{I}_n$& identity matrix of size $n\times n$ \\ 
       $G$ & a graph & $\mathbf{g}$& vector representation of $G$ \\ 
       $\mathcal{G}$ & a set of graphs & $\mathcal{D}$ & a dataset \\
       $\|\mathbf{x}\|$& the Euclidean norm of $\mathbf{x}$ & $\|\mathbf{x}\|_1$& the $\ell_1$ norm of $\mathbf{x}$ \\ 
       $[M]$  & the set $\{1,2,\ldots,M\}$& $\mathbf{X}\Vert\mathbf{Y}$ or $[\mathbf{X},\mathbf{Y}]$ & vertical concatenation \\
       $\|\mathbf{X}\|_F$ & Frobenius norm of matrix & $\|\mathbf{X}\|_2$ & spectral norm of matrix\\
        \bottomrule
    \end{tabular}
    \caption{Notations}
    \label{tab:notation}
\end{table}

\section{More about Related Work}\label{app_related_work}
\paragraph{Language Model-Free GFMs}
Many studies have explored training GFMs using the ``pre-train and adaptation" paradigm, 
leveraging message-passing-based or transformer-based GNNs as backbones. 
These approaches typically employ contrastive or generative self-supervised learning for pretraining, 
followed by fine-tuning a subset of model parameters to adapt to downstream tasks or datasets \citep{liu2025graph}.
Contrastive methods \citep{qiu2020gcc, sun2019infograph, velivckovic2019deep, xia2022simgrace} typically aim to produce generalized graph representations through maximizing the agreement between different augmentations of the same instance.
For example, GraphCL \citep{you2020graph} designs four types of graph data augmentations to learn invariant representations under specialized perturbations.
GCOPE \citep{zhao2024all} employs a graph contrastive learning framework and introduces coordinators 
which are some virtual nodes that function as dynamic bridges between disparate graph datasets. 
Focused on node-level tasks, it effectively mitigates
negative transfer effects when pretraining graph models on cross-domain datasets. 
In the meantime, generative methods pre-train GNNs through graph reconstruction or property prediction.
For instance, GraphMAEs \citep{hou2022graphmae,hou2023graphmae2} employed the reconstruction of features with masking strategies. 
Recently, graph prompt tuning methods \citep{sun2022gppt, fang2023universal, sun2023all, liu2023graphprompt} have been proposed as an adaptation mechanism to bridge the gap between pretraining tasks and downstream tasks. 
GraphPrompt \citep{liu2023graphprompt} converts the pretraining task and downstream tasks to follow the same template based on subgraph similarity 
and uses learnable prompt vectors to implement different aggregation schemes for readout in different downstream tasks. 
EdgePrompt \citep{fu2025edge} manipulates input graphs by learning prompt vectors for edges and incorporates the
edge prompts through message passing in the pretrained GNN models. As an effective adaptation mechanism, these methods have been widely adopted in subsequent GFMs \citep{yuan2025much,yuan2025graver,wang2025multi,yu2024multigprompt,yu2025samgpt}. Recently, several notable Graph Foundation Models have been introduced. RiemannGFM \citep{sun2025riemanngfm} embeds nodes into a Riemann manifold using structural vocabulary of trees as circles, enable structral transferability across domains. TS-GNN \citep{finkelshtein2025equivariance} investigates symmetries that a graph foundation model must respect, which is mainly designed for node-level tasks. RAG4GFM \citep{wangrag4gfm} applies the Retrieval-Augmented Generation (RAG) paradigm to Graph Foundation Models, allowing them to dynamically access and integrate graph knowledge at inference time. SCORE \citep{wang2024towards} utilizes KGs as a unified topological structure to tackle diverse tasks. 

\paragraph{LLM-based GFMs} These models utilize the strong capacity of large language models to conduct graph analysis. 
For instance, GraphQA \citep{fatemi2023talk} converts graph connectivity into textual descriptions and uses LLMs to answer graph reasoning questions. By enriching these prompts with domain-specific context, GraphQA can effectively learn cross-domain structural representations, essentially serving as a structural GFM. Similar approaches include NLGraph \citep{wang2024can}, which tackles tasks like shortest path finding by translating graphs into text, demonstrating another viable pathway for unified structure learning.
For unifying node feature representations, the One For All (OFA) framework \citep{liu2023one} offers an innovative solution. It aggregates diverse graph datasets into a unified text-attributed graph (TAG) format, then leverages LLMs to jointly learn feature representations that transcend domain boundaries. This approach effectively bridges the gap between heterogeneous graph data sources.

\paragraph{GFM for Graph-Level Tasks}
As mentioned before, most of the existing GFMs are designed for node-level tasks. There are a few studies that focus on graph-level tasks across domains. For instance, \citet{chauhanfew} tries to pretrain GNNs on certain classes of a dataset and conduct few-shot classification on the remaining classes within the same dataset. \citet{hassani2022cross} adopts a meta-learning approach to learn model initialization for few-shot graph classification. These graph-level models are usually small and not general.
Some GFMs can be adapted to graph-level tasks. 
For instance, GraphPrompt \citep{liu2023graphprompt}, GraphPrompt+ \citep{yu2024generalized}, and EdgePrompt \citep{fu2025edge} use learnable prompts to adjust graph-level pooling for obtaining domain-adaptive graph embeddings.
Other works such as SAMGPT \citep{yu2025samgpt}, ProNoG \citep{yu2025non} MultiGPrompt \citep{yu2024multigprompt} , and BRIDGE \citep{yuan2025much} are also designed to effectively perform graph classification,
but they mainly build on well-designed node embedding and use simple global pooling to apply the model to graph-level tasks.

\section{Derivation of Algorithm \ref{alg_DMMA}}\label{app_alg_derivation}

Recall that we aim to solve
\begin{equation}
\label{eq:dmma_original_obj}
  \mathop{\mathrm{maximize}}_{\mathbf{R}_j\in\mathcal{R},\, j\in[M]}
  ~
  \frac{1}{M}\sum_{i=1}^M\sum_{j=1}^M
  \exp\left(
  -\gamma
  \|\mathbf{R}_i\boldsymbol{\mu}_i-\mathbf{R}_j\boldsymbol{\mu}_j\|_2^2
  \right)
  \triangleq
  \mathcal{L}\left(\{\mathbf{R}_j\}_{j=1}^M\right),
\end{equation}
where
\begin{equation}
\mathcal{R}
=
\left\{
\mathbf{R}\in\mathbb{R}^{\bar d\times \bar d}:
\mathbf{R}^\top\mathbf{R}=\mathbf{I}_{\bar d}
\right\}.
\end{equation}
The factor $1/M$ does not affect the maximizer. Therefore, for notational simplicity, we consider the equivalent unnormalized objective
\begin{equation}
\label{eq:dmma_L_def}
    L(\mathbf{R})
    =
    \sum_{i=1}^M\sum_{j=1}^M
    \exp\left(
    -\gamma
    \|\mathbf{R}_i\boldsymbol{\mu}_i-\mathbf{R}_j\boldsymbol{\mu}_j\|_2^2
    \right),
\end{equation}
where
\begin{equation}
\mathbf{R}:=\{\mathbf{R}_j\in\mathcal{R}\}_{j=1}^M .
\end{equation}
Since each $\mathbf{R}_j$ is orthogonal, we have
\begin{equation}
\label{eq:orthogonal_norm_preserve}
\|\mathbf{R}_j\boldsymbol{\mu}_j\|_2
=
\|\boldsymbol{\mu}_j\|_2 .
\end{equation}
Thus,
\begin{equation}
\label{eq:dmma_obj_rewrite}
\begin{aligned}
L(\mathbf{R})
&=
\sum_{i=1}^M\sum_{j=1}^M
\exp\left[
-\gamma
\left(
\|\boldsymbol{\mu}_i\|_2^2+
\|\boldsymbol{\mu}_j\|_2^2
\right)
\right]
\exp\left(
2\gamma
\langle
\mathbf{R}_i\boldsymbol{\mu}_i,
\mathbf{R}_j\boldsymbol{\mu}_j
\rangle
\right)  \\
&=
\sum_{i=1}^M\sum_{j=1}^M
w_{ij}
\exp\left(
2\gamma
\langle
\mathbf{R}_i\boldsymbol{\mu}_i,
\mathbf{R}_j\boldsymbol{\mu}_j
\rangle
\right),
\end{aligned}
\end{equation}
where
\begin{equation}
\label{eq:dmma_wij}
w_{ij}
:=
\exp\left[
-\gamma
\left(
\|\boldsymbol{\mu}_i\|_2^2+
\|\boldsymbol{\mu}_j\|_2^2
\right)
\right].
\end{equation}

Let $\mathbf{R}^{(t-1)}=\{\mathbf{R}_j^{(t-1)}\}_{j=1}^M$ be the current iterate. Define
\begin{equation}
\label{eq:dmma_kij}
k_{ij}^{(t-1)}
:=
\exp\left(
2\gamma
\left\langle
\mathbf{R}_i^{(t-1)}\boldsymbol{\mu}_i,
\mathbf{R}_j^{(t-1)}\boldsymbol{\mu}_j
\right\rangle
\right).
\end{equation}
The Euclidean gradient of $L$ with respect to $\mathbf{R}_i$ at $\mathbf{R}^{(t-1)}$ is
\begin{equation}
\label{eq:dmma_grad}
\nabla_{\mathbf{R}_i}L(\mathbf{R}^{(t-1)})
=
4\gamma
\sum_{j=1}^M
w_{ij}k_{ij}^{(t-1)}
\mathbf{R}_j^{(t-1)}
\boldsymbol{\mu}_j
\boldsymbol{\mu}_i^\top .
\end{equation}
For compactness, denote
\begin{equation}
\label{eq:dmma_Ai_def}
\mathbf{A}_i^{(t)}
:=
\sum_{j=1}^M
w_{ij}k_{ij}^{(t-1)}
\mathbf{R}_j^{(t-1)}
\boldsymbol{\mu}_j
\boldsymbol{\mu}_i^\top .
\end{equation}
Then
\begin{equation}
\label{eq:dmma_grad_Ai}
\nabla_{\mathbf{R}_i}L(\mathbf{R}^{(t-1)})
=
4\gamma \mathbf{A}_i^{(t)} .
\end{equation}

At iteration $t$, we maximize the following surrogate:
\begin{equation}
\label{eq:dmma_surrogate}
\begin{aligned}
\widetilde L_t(\mathbf{R})
=
\sum_{i=1}^M
\left\langle
\nabla_{\mathbf{R}_i}L(\mathbf{R}^{(t-1)}),
\mathbf{R}_i-\mathbf{R}_i^{(t-1)}
\right\rangle
-
2\gamma\eta
\sum_{i=1}^M
\|\mathbf{R}_i-\mathbf{R}_i^{(t-1)}\|_F^2 ,
\end{aligned}
\end{equation}
subject to $\mathbf{R}_i\in\mathcal{R}$ for all $i\in[M]$. Substituting \eqref{eq:dmma_grad_Ai} into \eqref{eq:dmma_surrogate}, we obtain
\begin{equation}
\label{eq:dmma_surrogate_expand}
\begin{aligned}
\widetilde L_t(\mathbf{R})
=
4\gamma
\sum_{i=1}^M
\left\langle
\mathbf{A}_i^{(t)},
\mathbf{R}_i-\mathbf{R}_i^{(t-1)}
\right\rangle
-
2\gamma\eta
\sum_{i=1}^M
\|\mathbf{R}_i-\mathbf{R}_i^{(t-1)}\|_F^2 .
\end{aligned}
\end{equation}
Because both $\mathbf{R}_i$ and $\mathbf{R}_i^{(t-1)}$ are orthogonal, we have
\begin{equation}
\label{eq:orthogonal_distance}
\|\mathbf{R}_i-\mathbf{R}_i^{(t-1)}\|_F^2
=
2\bar d
-
2\left\langle
\mathbf{R}_i^{(t-1)},
\mathbf{R}_i
\right\rangle .
\end{equation}
Therefore, after dropping constants independent of $\mathbf{R}$ and removing the positive factor $4\gamma$, maximizing \eqref{eq:dmma_surrogate_expand} is equivalent to
\begin{equation}
\label{eq:dmma_linear_subproblem}
\mathop{\mathrm{maximize}}_{\mathbf{R}_i\in\mathcal{R},\, i\in[M]}
~
\sum_{i=1}^M
\left\langle
\mathbf{A}_i^{(t)}
+
\eta \mathbf{R}_i^{(t-1)},
\mathbf{R}_i
\right\rangle .
\end{equation}
Let
\begin{equation}
\label{eq:dmma_Hi_def}
\mathbf{H}_i^{(t)}
:=
\mathbf{A}_i^{(t)}
+
\eta\mathbf{R}_i^{(t-1)}
=
\sum_{j=1}^M
w_{ij}k_{ij}^{(t-1)}
\mathbf{R}_j^{(t-1)}
\boldsymbol{\mu}_j
\boldsymbol{\mu}_i^\top
+
\eta\mathbf{R}_i^{(t-1)} .
\end{equation}
Then the subproblem separates over $i$ as
\begin{equation}
\label{eq:dmma_procrustes}
\mathbf{R}_i^{(t)}
=
\mathop{\arg\max}_{\mathbf{R}_i^\top\mathbf{R}_i=\mathbf{I}_{\bar d}}
\left\langle
\mathbf{H}_i^{(t)},
\mathbf{R}_i
\right\rangle .
\end{equation}
This is the classical orthogonal Procrustes problem \citep{schonemann1966generalized}. Let the singular value decomposition of $\mathbf{H}_i^{(t)}$ be
\begin{equation}
\label{eq:dmma_Hi_svd}
\mathbf{H}_i^{(t)}
=
\mathbf{U}_i
\mathbf{S}_i
\mathbf{V}_i^\top .
\end{equation}
Then the optimal solution is
\begin{equation}
\label{eq:dmma_R_update}
\mathbf{R}_i^{(t)}
=
\mathbf{U}_i\mathbf{V}_i^\top .
\end{equation}
This gives the update rule used in Algorithm~\ref{alg_DMMA}. 

\section{Proof for Theorem \ref{theorem_convergence}}

\begin{proof}
Define
\begin{equation}
\phi_{ij}(\mathbf R)
=
w_{ij}\exp\!\left(
2\gamma
\langle \mathbf R_i\boldsymbol\mu_i,
\mathbf R_j\boldsymbol\mu_j\rangle
\right).
\end{equation}
Since every $\mathbf R_i$ is orthogonal, we have
\begin{equation}
0<\phi_{ij}(\mathbf R)
=
\exp\!\left(
-\gamma\|\mathbf R_i\boldsymbol\mu_i-
\mathbf R_j\boldsymbol\mu_j\|_2^2
\right)
\leq 1 .
\end{equation}
The Euclidean gradient of $\mathcal L$ with respect to $\mathbf R_i$ is
\begin{equation}
\nabla_{\mathbf R_i}\mathcal L(\mathbf R)
=
4\gamma
\sum_{j=1}^M
\phi_{ij}(\mathbf R)
\mathbf R_j\boldsymbol\mu_j\boldsymbol\mu_i^\top .
\end{equation}

We first bound the Lipschitz constant of $\nabla\mathcal L$. For two feasible points $\mathbf R$ and $\mathbf S$, let
\begin{equation}
\delta_i=\|\mathbf R_i-\mathbf S_i\|_F,
\qquad
D=\left(\sum_{i=1}^M\delta_i^2\right)^{1/2}.
\end{equation}
By the mean-value theorem and the bound $\phi_{ij}(\mathbf R)\leq1$,
\begin{equation}
|\phi_{ij}(\mathbf R)-\phi_{ij}(\mathbf S)|
\leq
2\gamma\nu^2(\delta_i+\delta_j).
\end{equation}
Therefore,
\begin{equation}
\begin{aligned}
&
\left\|
\nabla_{\mathbf R_i}\mathcal L(\mathbf R)
-
\nabla_{\mathbf R_i}\mathcal L(\mathbf S)
\right\|_F \\
&\leq
4\gamma
\sum_{j=1}^M
\left[
|\phi_{ij}(\mathbf R)-\phi_{ij}(\mathbf S)|
\|\mathbf R_j\boldsymbol\mu_j\boldsymbol\mu_i^\top\|_F
+
\phi_{ij}(\mathbf S)
\|(\mathbf R_j-\mathbf S_j)
\boldsymbol\mu_j\boldsymbol\mu_i^\top\|_F
\right] \\
&\leq
4\gamma
\sum_{j=1}^M
\left[
2\gamma\nu^4(\delta_i+\delta_j)
+
\nu^2\delta_j
\right] \\
&\leq
\left(
16\gamma^2\nu^4M
+
4\gamma\nu^2\sqrt{M}
\right)D .
\end{aligned}
\end{equation}
Taking the Frobenius norm over all blocks gives
\begin{equation}
\begin{aligned}
\|\nabla\mathcal L(\mathbf R)-\nabla\mathcal L(\mathbf S)\|_F
&=
\left(
\sum_{i=1}^M
\left\|
\nabla_{\mathbf R_i}\mathcal L(\mathbf R)
-
\nabla_{\mathbf R_i}\mathcal L(\mathbf S)
\right\|_F^2
\right)^{1/2}  \\
&\leq
\left(
16\gamma^2\nu^4M^{3/2}
+
4\gamma\nu^2M
\right)
\|\mathbf R-\mathbf S\|_F .
\end{aligned}
\end{equation}
Thus $\nabla\mathcal L$ is $L_\nabla$-Lipschitz continuous with
\begin{equation}
L_\nabla
=
16\gamma^2\nu^4M^{3/2}
+
4\gamma\nu^2M .
\end{equation}

By the smoothness of $\mathcal L$, we have
\begin{equation}
\begin{aligned}
\mathcal L(\mathbf R^{(t)})
\geq\;&
\mathcal L(\mathbf R^{(t-1)})
+
\sum_{i=1}^M
\left\langle
\nabla_{\mathbf R_i}\mathcal L(\mathbf R^{(t-1)}),
\mathbf R_i^{(t)}-\mathbf R_i^{(t-1)}
\right\rangle \\
&-
\frac{L_\nabla}{2}
\sum_{i=1}^M
\|\mathbf R_i^{(t)}-\mathbf R_i^{(t-1)}\|_F^2 .
\end{aligned}
\end{equation}
Let
\begin{equation}
\mathbf A_i^{(t-1)}
=
\sum_{j=1}^M
w_{ij}k_{ij}^{(t-1)}
\mathbf R_j^{(t-1)}
\boldsymbol\mu_j
\boldsymbol\mu_i^\top .
\end{equation}
Then
\begin{equation}
\nabla_{\mathbf R_i}\mathcal L(\mathbf R^{(t-1)})
=
4\gamma \mathbf A_i^{(t-1)} .
\end{equation}
Algorithm~\ref{alg_DMMA} computes
\begin{equation}
\mathbf R_i^{(t)}
=
\arg\max_{\mathbf R_i^\top\mathbf R_i=\mathbf I}
\left\langle
\mathbf A_i^{(t-1)}+\eta\mathbf R_i^{(t-1)},
\mathbf R_i
\right\rangle .
\end{equation}
Hence,
\begin{equation}
\left\langle
\mathbf A_i^{(t-1)}+\eta\mathbf R_i^{(t-1)},
\mathbf R_i^{(t)}-\mathbf R_i^{(t-1)}
\right\rangle
\geq 0 .
\end{equation}
Since both $\mathbf R_i^{(t)}$ and $\mathbf R_i^{(t-1)}$ are orthogonal,
\begin{equation}
\left\langle
\mathbf R_i^{(t-1)},
\mathbf R_i^{(t)}-\mathbf R_i^{(t-1)}
\right\rangle
=
-\frac12
\|\mathbf R_i^{(t)}-\mathbf R_i^{(t-1)}\|_F^2 .
\end{equation}
Therefore,
\begin{equation}
\left\langle
\mathbf A_i^{(t-1)},
\mathbf R_i^{(t)}-\mathbf R_i^{(t-1)}
\right\rangle
\geq
\frac{\eta}{2}
\|\mathbf R_i^{(t)}-\mathbf R_i^{(t-1)}\|_F^2 .
\end{equation}
Substituting this into the smoothness inequality gives
\begin{equation}
\mathcal L(\mathbf R^{(t)})
\geq
\mathcal L(\mathbf R^{(t-1)})
+
\left(
2\gamma\eta-\frac{L_\nabla}{2}
\right)
\sum_{i=1}^M
\|\mathbf R_i^{(t)}-\mathbf R_i^{(t-1)}\|_F^2 .
\end{equation}
Since $\eta>L_\nabla/(4\gamma)$, we have
\begin{equation}
2\gamma\eta-\frac{L_\nabla}{2}>0 .
\end{equation}
Thus $\mathcal L(\mathbf R^{(t)})$ is non-decreasing.

Moreover, because $0<\phi_{ij}(\mathbf R)\leq1$,
\begin{equation}
0<\mathcal L(\mathbf R)\leq M^2 .
\end{equation}
Hence $\{\mathcal L(\mathbf R^{(t)})\}_{t\geq0}$ converges. Summing the ascent inequality over $t$ gives
\begin{equation}
\sum_{t=1}^{\infty}
\sum_{i=1}^M
\|\mathbf R_i^{(t)}-\mathbf R_i^{(t-1)}\|_F^2
<\infty .
\end{equation}
Therefore,
\begin{equation}
\sum_{i=1}^M
\|\mathbf R_i^{(t)}-\mathbf R_i^{(t-1)}\|_F^2
\rightarrow 0 .
\end{equation}

Finally, the feasible set is compact because it is a product of orthogonal groups. Hence the sequence has accumulation points. Let $\mathbf R^\star$ be any accumulation point. Since
\begin{equation}
\mathbf R_i^{(t)}-\mathbf R_i^{(t-1)}\rightarrow \mathbf 0 ,
\end{equation}
the optimality condition of the Procrustes update passes to the limit and gives
\begin{equation}
\mathbf R_i^\star
=
\arg\max_{\mathbf R_i^\top\mathbf R_i=\mathbf I}
\left\langle
\mathbf A_i^\star+\eta\mathbf R_i^\star,
\mathbf R_i
\right\rangle .
\end{equation}
This implies the first-order stationarity condition for the original constrained maximization problem. Therefore, every accumulation point is stationary.
\end{proof}

\section{Proof for Theorem \ref{theorem_geb}}\label{appendix_proof_geb}

\subsection{Main Proof}

Since \eqref{eq_SCL} does not explicitly show the error related to classification or metric learning, here we consider the following pair-wise loss function $\ell$ instead. An example is as
\begin{equation}
    \ell({G_u,G_v})=1-C_{uv}\cdot\zeta(\mathbf{g}_u,\mathbf{g}_v)
\end{equation}
where $\zeta(\mathbf{g}_u,\mathbf{g}_v)=\frac{\mathbf{g}_u^\top\mathbf{g}_v}{\|\mathbf{g}_u\|\|\mathbf{g}_v\|}$ and $C_{uv}=1$ if $G_u$ and $G_u$ are in the same class and $C_{uv}=-1$ if they are in different classes. Note that $\mathbf{g}=F(G)$, where $F\in\mathcal{F}$.
The empirical risk is
\begin{equation}
    \hat{\mathcal{L}}_{\mathcal{D}}(F)=\frac{1}{M}\sum_{j=1}^M\frac{1}{N(N-1)}\sum_{u\neq v}\ell(G_{u}^{(j)},G_{v}^{(j)})\triangleq \frac{1}{M}\sum_{j=1}^M\bar{\mathcal{L}}_{G_j}(F)
\end{equation}
where we have assumed $N_1=N_2=\cdots=N_M=N$ for convenience and $\bar{\mathcal{L}}_{G_j}(F)=\frac{1}{N(N-1)}\sum_{u\neq v}\ell(G_{u}^{(j)},G_{v}^{(j)})$.
The true risk is
\begin{equation}
    \mathcal{L}(F)=\mathbb{E}_{G,G\sim\mathbb{G}}[\ell(G,G')]
\end{equation}
We would like to bound
\begin{equation}
\begin{aligned}
\sup_{F\in\mathcal{F}}\{\mathcal{L}(F)-\hat{\mathcal{L}}_{\mathcal{D}}(F)\}
\end{aligned}
\end{equation}

For any $\mathcal{D}=\{\mathcal{G}_1,\ldots,\mathcal{G}_j,\ldots,\mathcal{G}_M\}$ and $\tilde{\mathcal{D}}=\{\mathcal{G}_1,\ldots,\tilde{\mathcal{G}}_j,\ldots,\mathcal{G}_M\}$, where
${\mathcal{G}}_j=\{G_1^{(j)},\ldots,{G}_i^{(j)},\ldots,G_N^{(j)}\}$ and $\tilde{\mathcal{G}}_j=\{G_1^{(j)},\ldots,\tilde{G}_i^{(j)},\ldots,G_N^{(j)}\}$,
we have
\begin{equation}
    \begin{aligned}
        &\left|\sup_{F\in\mathcal{F}}\{\mathcal{L}(F)-\hat{\mathcal{L}}_{\mathcal{D}}(F)\}-\sup_{F\in\mathcal{F}}\{\mathcal{L}(F)-\hat{\mathcal{L}}_{\tilde{\mathcal{D}}}(F)\}\right|\\
        \leq&\sup_{F\in\mathcal{F}}\left|\hat{\mathcal{L}}_{\mathcal{D}}(F) -\hat{\mathcal{L}}_{\tilde{\mathcal{D}}}(F)\right| \\ 
        = &\sup_{F\in\mathcal{F}}\left|\frac{1}{MN(N-1)} \left(\sum_{\mathcal{G}_j:u \neq v}{\ell}(G_u^{(j)},G_v^{(j)})-\sum_{\tilde{\mathcal{G}}_j:u \neq v}{\ell}(G_u^{(j)},G_v^{(j)})\right)\right|\\
        \leq &\sup_{F\in\mathcal{F}}\left|\frac{1}{MN(N-1)} \left(\sum_{v\neq i}\left({\ell}({G}_i^{(j)},G_v^{(j)})-{\ell}(\tilde{G}_i^{(j)},G_v^{(j)})\right)\right)\right|\\
        \leq &\sup_{F\in\mathcal{F}}\left|\frac{1}{MN(N-1)} \left(\sum_{v\neq i}\left|{\ell}({G}_i^{(j)},G_v^{(j)})-{\ell}(\tilde{G}_i^{(j)},G_v^{(j)})\right|\right)\right|\\
        \leq &\frac{1}{MN}
    \end{aligned}
\end{equation}
where the last inequality holds due to the fact that $0\leq \ell\leq 1$.
Applying the McDiarmid’s inequality (Lemma \ref{lemma_mcdiarmid}) to $\sup_{F\in\mathcal{F}}\{\hat{\mathcal{L}}_{\mathcal{D}}(F) - \mathcal{L}(F)\}$, with probability at least $1-\delta$, we have
\begin{equation}\label{eq_result_of_mcdiarmid}
    \sup_{F\in\mathcal{F}}\{\mathcal{L}(F)-\hat{\mathcal{L}}_{\mathcal{D}}(F)\}\leq \mathbb{E}_{\mathcal{D}}\left(\sup_{F\in\mathcal{F}}\{\mathcal{L}(F)-\hat{\mathcal{L}}_{\mathcal{D}}(F)\}\right)+\sqrt{\frac{\ln(1/\delta)}{2MN}}
\end{equation}

For convenience, we let $\bar{\ell}(G_u,G_v)=\mathcal{L}(F)-\ell(G_u,G_v)$, we have the following derivation
\begin{equation}
    \begin{aligned}
& \mathbb{E}_{\mathcal{D}}\left(\sup _{F \in \mathcal{F}} \frac{1}{M}\sum_{j=1}^M\frac{1}{N(N-1)} \sum_{u \neq v}\bar{\ell}(G_u^{(j)},G_v{(j)})\right) \\
=&\mathbb{E}_{\mathcal{D}} \left(\sup _{F \in \mathcal{F}} \frac{1}{M}\sum_{j=1}^M\frac{1}{N!} \sum_\pi \frac{1}{\lfloor N / 2\rfloor} \sum_{i=1}^{\lfloor N / 2\rfloor} \bar{\ell}\left(G^{(j)}_{\pi(i)}, G^{(j)}_{\pi(\lfloor N / 2\rfloor+i)}\right)\right) \\
\leq &\mathbb{E}_{\mathcal{D}} \left(\frac{1}{N!} \sum_\pi \sup _{F \in \mathcal{F}} \frac{1}{M}\sum_{j=1}^M\frac{1}{\lfloor N / 2\rfloor} \sum_{i=1}^{\lfloor N / 2\rfloor} \bar{\ell}\left(G^{(j)}_{\pi(i)}, G^{(j)}_{\pi(\lfloor N / 2\rfloor+i)}\right)\right) \\
\leq &\frac{1}{N!} \sum_\pi \mathbb{E}_{\mathcal{D}}\left(\sup _{F \in \mathcal{F}} \frac{1}{M}\sum_{j=1}^M\frac{1}{\lfloor N/ 2\rfloor} \sum_{i=1}^{\lfloor N / 2\rfloor} \bar{\ell}\left(G^{(j)}_{\pi(i)}, G^{(j)}_{\pi(\lfloor N / 2\rfloor+i)}\right)\right) \\
=&\mathbb{E}_{\mathcal{D}}\left(\sup _{F \in \mathcal{F}} \frac{1}{M}\sum_{j=1}^M\frac{1}{\lfloor N/ 2\rfloor} \sum_{i=1}^{\lfloor N / 2\rfloor} \bar{\ell}\left(G^{(j)}_{\pi(i)}, G^{(j)}_{\pi(\lfloor N / 2\rfloor+i)}\right)\right)\\
=&\mathbb{E}_{\mathcal{D}}\left(\sup _{F \in \mathcal{F}}\left\{\mathcal{L}(F)- \tilde{\mathcal{L}}_{\mathcal{D}}(F)\right\}\right)
\end{aligned}
\end{equation}
where $\tilde{\mathcal{L}}_{\mathcal{D}}(F)=\frac{1}{M}\sum_{j=1}^M\frac{1}{\lfloor N/ 2\rfloor} \sum_{i=1}^{\lfloor N / 2\rfloor} {\ell}\left(G^{(j)}_{\pi(i)}, G^{(j)}_{\pi(\lfloor N / 2\rfloor+i)}\right)$.

For convenience, we let $S=M\lfloor N/ 2\rfloor$ and rename the graph-pair as $(G_{s}, \bar{G}_{s})$. So we have $S$ independent samples.
By introducing a virtual dataset $\mathcal{D}'\subset\mathbb{G}$ with size $S$, we obtain
\begin{equation}
    \begin{aligned}
        &\mathbb{E}_{\mathcal{D}}\left(\sup _{F \in \mathcal{F}}\left\{\mathcal{L}(F)-\tilde{\mathcal{L}}_{\mathcal{D}}(F)\right\}\right)\\
        =&\mathbb{E}_{\mathcal{D}}\left(\sup _{F \in \mathcal{F}}\left\{\mathcal{L}(F)-\frac{1}{S}\sum_{s=1}^S\ell\left(G_{s}, \bar{G}_{s}\right)\right\}\right)\\
        =&\mathbb{E}_{\mathcal{D}}\left(\sup _{F \in \mathcal{F}}\left\{ \mathbb{E}_{\mathcal{D}'}\left(\frac{1}{S}\sum_{s=1}^S\ell\left(G_{s}', \bar{G}_{s}'\right)\right)-\frac{1}{S}\sum_{s=1}^S\ell\left(G_{s}, \bar{G}_{s}\right)\right\}\right)\\
        \leq&\mathbb{E}_{\mathcal{D},\mathcal{D}'}\left(\sup _{F \in \mathcal{F}}\frac{1}{S}\sum_{s=1}^S\left
[\ell\left(G_{s}', \bar{G}_{s}'\right)-\ell\left(G_{s}, \bar{G}_{s}\right)\right]\right)\\
    \end{aligned}
\end{equation}
where the inequality holds due to Jensen's inequality. By introducing the Rademacher variable $\epsilon_s\in\{-1,1\}$, we have
\begin{equation}
    \begin{aligned} 
    &\mathbb{E}_{\mathcal{D}}\left(\sup _{F \in \mathcal{F}}\left\{\mathcal{L}(F)- \tilde{\mathcal{L}}_{\mathcal{D}}(F)\right\}\right)\\
    \leq&\mathbb{E}_{\mathcal{D},\mathcal{D}'}\mathbb{E}_{\epsilon}\left(\sup _{F \in \mathcal{F}}\frac{1}{S}\sum_{s=1}^S\epsilon_s\left
[\ell\left(G_{s}', \bar{G}_{s}'\right)-\ell\left(G_{s}, \bar{G}_{s}\right)\right]\right)\\
\leq&\mathbb{E}_{\mathcal{D},\mathcal{D}'}\mathbb{E}_{\epsilon}\left(\sup _{F \in \mathcal{F}}\left\{\frac{1}{S}\sum_{s=1}^S\epsilon_s\ell\left(G_{s}, \bar{G}_{s}\right)\right\}+\sup _{F \in \mathcal{F}}\left\{\frac{1}{S}\sum_{s=1}^S(-\epsilon_s)\left(G_{s}', \bar{G}_{s}'\right)\right\}\right)\\
\leq&2\mathbb{E}_{\mathcal{D},\epsilon}\left(\sup _{F \in \mathcal{F}}\frac{1}{S}\sum_{s=1}^S\epsilon_s\ell\left(G_{s}, \bar{G}_{s}\right)\right)\\
=& 2\mathbb{E}_S(\hat{\mathcal{R}}_S(\mathcal{F}))
    \end{aligned}
\end{equation}
where $\mathcal{R}_S(\mathcal{F}):=\mathbb{E}_S(\hat{\mathcal{R}}_S(\mathcal{F}))$ is the Rademacher complexity.

Combining \eqref{eq_result_of_mcdiarmid}, we arrive at
\begin{equation}\label{eq_RSF_bound}
    \sup_{F\in\mathcal{F}}\{\mathcal{L}(F)-\hat{\mathcal{L}}_{\mathcal{D}}(F)\}\leq 2\mathcal{R}_S(\mathcal{F})+\sqrt{\frac{\ln(1/\delta)}{2MN}}
\end{equation}

According to Lemma \ref{lemma_lip_reference}, Lemma \ref{lemma_lip_gin}, and Lemma \ref{lemma_lip_gt}, the Lipschitz constants of the GIN, GT, and reference layer are 
\begin{equation}
    \begin{aligned}
        L_{\text{GIN}}=&\max _{(i,p)\in[N]\times[M]}\|\mathbf{A}_{i}^{(p)}\|_2^\vartheta\prod_{j=1}^{\vartheta'}\|\mathbf{W}_j\|_2\\
        L_{\text{Ref}}=&4\sqrt{\frac{\gamma R}{{n}}}\\
        L_{\text{GT}}=&\mu^{\vartheta}\prod_{j=1}^{\vartheta \vartheta'}\|\mathbf{W}_j\|_2
    \end{aligned}
\end{equation}
Since there are $Q$ parallel GINs, according to Lemma \ref{lemma_concate_lip}, the Lipschitz constant of their combinations is 
\begin{equation}
    L_{\text{QGIN}}=\max_q{L_\text{GIN}^{(q)}}
\end{equation}
where $L_{\text{GIN}}^{(q)}=\max _{(i,p)\in[N]\times[M]}\|\mathbf{A}_{i}^{(p)}\|_2^\vartheta\prod_{j=1}^{\vartheta'}\|\mathbf{W}_j^{(q)}\|_2$.
Based on the composition of these network components and their specific configurations, the Lipschitz constant of $F$ is 
\begin{equation}
    L_F=\left(4\sqrt{\frac{\gamma R}{{n}}}+\frac{1}{\sqrt{n}}\right)\left(\max _{(i,p)\in[N]\times[M]}\|\mathbf{A}_{i}^{(p)}\|_2^{\vartheta_1}\max_{q\in[Q]}\prod_{j=1}^{\kappa_1}\|\mathbf{W}_j^{\text{GIN}_q}\|_2\right)\left(\mu^{\vartheta_2}\prod_{j}^{\kappa_2}\|\mathbf{W}_j^{\text{GT}}\|_2\right)
\end{equation}
where $\kappa_1$ is the maximum number of MLP layers in each GIN and $\kappa_2$ is the total number of weight matrices excluding those in the attention maps of the transformer. Suppose the loss function $\ell$ is $\tau$-Lipschitz, then the Lipschitz constant of $\ell\circ\mathcal{F}$ is $L_{\ell\circ F}=\tau L_F$.

Let $\tilde{\mathbf{Z}}^{(j)}=[\bar{\mathbf{A}}^{(j)}{\mathbf{Z}}_1^{(j)},\ldots,\bar{\mathbf{A}}^{(j)}{\mathbf{Z}}_Q^{(j)}]$, where $\bar{\mathbf{A}}^{(j)}=\text{diag}({\mathbf{A}}^{(j)}_1,\ldots,{\mathbf{A}}^{(j)}_N)\in\mathbb{R}^{Nn\times Nn}$. We further form $\hat{\mathbf{Z}}=[\tilde{\mathbf{Z}}^{(1)};\tilde{\mathbf{Z}}^{(2)};\ldots;\tilde{\mathbf{Z}}^{(M)}]\in\mathbb{R}^{MNn\times Q\bar{d}}$.
According to Lemma \ref{lemma_cover_matrix}, the covering number of $\mathcal{Z}=\{\hat{\mathbf{Z}}\in\mathbb{R}^{MNn\times Q\bar{d}}:\|\hat{\mathbf{Z}}\|_F\leq \beta\}$ is bounded as
   \begin{equation}
        \ln\mathcal{N}\left(\mathcal{Z}, \epsilon, \|\cdot\|_F\right) \leq \frac{\beta^2Q^2\bar{d}^2\ln{(2Q\bar{d})}}{\epsilon^2}
    \end{equation}
Therefore, using Lemma \ref{lemma_cov_lip}, the covering number of $\ell\circ \mathcal{F}\times \mathcal{Z}$ is bounded as
\begin{equation}\label{eq_NLF}
            \ln\mathcal{N}\left(\ell\circ \mathcal{F}, \epsilon, \|\cdot\|_F\right) \leq \frac{\tau^2L_F^2\beta^2Q^2\bar{d}^2\ln{(2Q\bar{d})}}{\epsilon^2}\triangleq  \frac{\varphi}{\epsilon^2}
\end{equation}

Using Lemma \ref{lemma_Dudley}, we can bound the Rademacher complexity of our model class as
\begin{equation}\label{eq_RSLF}
    \begin{aligned}
            \mathcal{R}_{S}(\ell\circ\mathcal{F}) \leq& \inf_{\alpha > 0} \left(\frac{4\alpha}{\sqrt{S}}+\frac{12}{S}\int_{\alpha}^{\sqrt{S}} \frac{\sqrt{\varphi}}{\epsilon}\,d\epsilon \right) \\  
        \leq& \inf_{\alpha > 0} \left(\frac{4\alpha}{\sqrt{S}}+\frac{12\sqrt{\varphi}}{S}\ln\left(\frac{\sqrt{S}}{\alpha}\right) \right)\\ 
        \leq & \frac{4+12\sqrt{\varphi}\ln(S)}{S}
    \end{aligned}
\end{equation}
where in the last inequality we have let $\alpha=1/\sqrt{S}$.

Now combining \eqref{eq_RSLF}, \eqref{eq_NLF}, and \eqref{eq_RSF_bound}, we arrive at
\begin{equation}
     \mathcal{L}(F) \leq \hat{\mathcal{L}}_{\mathcal{D}}(F)+\frac{16+48\tau L_F\beta Q\bar{d}\sqrt{\ln{(2Q\bar{d})}}\ln(MN/2)}{MN}+\sqrt{\frac{\ln(1/\delta)}{2MN}}
\end{equation}
This completes the proof.

\subsection{Supporting Lemmas and Their Proofs}
\begin{lemma}[McDiarmid's inequality \citep{mcdiarmid1989method}]\label{lemma_mcdiarmid}
Suppose $f: \prod_{k=1}^m \Omega_k \rightarrow \mathbb{R}$ with bounded differences $\left\{c_k\right\}_{k=1}^m$ then, for all $\epsilon>0$, there holds
$$
\operatorname{Pr}_{\mathbf{z}}\left\{f(\mathbf{z})-\mathbb{E}_{\mathbf{z}} f(\mathbf{z}) \geq \epsilon\right\} \leq e^{-\frac{2 \epsilon^2}{\sum_{k=1}^m c_k^2}}
$$
\end{lemma}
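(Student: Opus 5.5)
We prove the inequality by the standard martingale (Azuma--Hoeffding) argument, assuming as usual that $\mathbf{z}=(z_1,\ldots,z_m)$ has independent coordinates, $z_k\in\Omega_k$, and that bounded differences means
\[
|f(\mathbf{z})-f(\mathbf{z}')|\le c_k
\]
whenever $\mathbf{z},\mathbf{z}'$ differ only in the $k$-th coordinate. The plan is to write $f(\mathbf{z})-\mathbb{E}f(\mathbf{z})$ as a sum of martingale differences, bound the conditional moment generating function of each difference by Hoeffding's lemma, and finish with a Chernoff bound.

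\textbf{Doob martingale decomposition.} Define $V_k=\mathbb{E}[f(\mathbf{z})\mid z_1,\ldots,z_k]-\mathbb{E}[f(\mathbf{z})\mid z_1,\ldots,z_{k-1}]$ for $k\in[m]$. Then $f(\mathbf{z})-\mathbb{E}f(\mathbf{z})=\sum_{k=1}^m V_k$ and $\mathbb{E}[V_k\mid z_1,\ldots,z_{k-1}]=0$.

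\textbf{Range of each increment.} Fix $z_1,\ldots,z_{k-1}$. By independence, $\mathbb{E}[f\mid z_1,\ldots,z_k]=g_k(z_1,\ldots,z_k)$, where $g_k$ is obtained by integrating $f$ over $z_{k+1},\ldots,z_m$ against their (unconditioned) product law. Hence $V_k$ lies in the interval $[a_k,b_k]$ with
\[
a_k=\inf_{x}g_k(z_{1:k-1},x)-\mathbb{E}[f\mid z_{1:k-1}],\qquad b_k=\sup_{x}g_k(z_{1:k-1},x)-\mathbb{E}[f\mid z_{1:k-1}].
\]
The bounded-difference property, applied pointwise inside the integral, gives $b_k-a_k\le c_k$. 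This is the step where independence is genuinely needed: it ensures the integrating measure does not depend on $x$.

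\textbf{Hoeffding's lemma.} For a mean-zero variable supported on an interval of length at most $c_k$, one has $\mathbb{E}[e^{sV_k}\mid z_{1:k-1}]\le e^{s^2c_k^2/8}$. We prove this via convexity of $e^{sx}$, bounding it by the chord on $[a_k,b_k]$, and then controlling the resulting log-MGF $\psi(u)$ through $\psi(0)=\psi'(0)=0$ and $\psi''\le 1/4$.

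\textbf{Iteration and Chernoff bound.} Iterating the tower property yields
\[
\mathbb{E}\,e^{s(f-\mathbb{E}f)}\le \exp\!\Big(\tfrac{s^2}{8}\sum_k c_k^2\Big).
\]
Markov's inequality then gives
\[
\Pr\{f-\mathbb{E}f\ge\epsilon\}\le \exp\!\Big(-s\epsilon+\tfrac{s^2}{8}\sum_k c_k^2\Big),
\]
and choosing $s=4\epsilon/\sum_k c_k^2$ gives the claimed bound $e^{-2\epsilon^2/\sum_k c_k^2}$.

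\textbf{Main obstacle.} The delicate point is the range step. A naive bound gives only $|V_k|\le c_k$, i.e.\ an interval of length $2c_k$, which would lose a factor of $4$ in the exponent. The sharp constant requires showing that, conditionally on $z_{1:k-1}$, $V_k$ lies in an interval of length $c_k$ whose endpoints depend only on $z_{1:k-1}$. This is exactly what Hoeffding's lemma needs, and we obtain it by bounding $\sup_x g_k-\inf_x g_k$ directly rather than bounding $|V_k|$.
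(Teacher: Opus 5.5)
Your proof is correct, and it follows the same route as the source. The paper gives no argument of its own and simply cites McDiarmid (1989), whose proof is this Doob-martingale argument: you place each increment, conditionally on $z_1,\ldots,z_{k-1}$, in an interval of length $c_k$ (using independence, which you rightly make explicit since the statement leaves it implicit in the product domain), then apply Hoeffding's lemma and a Chernoff bound with $s=4\epsilon/\sum_k c_k^2$.
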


\begin{lemma}[Dudley entropy integral bound \citep{bartlett2017spectrally}]\label{lemma_Dudley}
    Let $\mathcal{F}$ be a real-valued function class taking values in $[0, 1]$, and assume that $\mathbf{0} \in \mathcal{F}$. Then
    \begin{equation*}
        \mathcal{R}_{S}(\mathcal{F}) \leq \inf_{\alpha > 0} \left(\frac{4\alpha}{\sqrt{S}}+\frac{12}{S}\int_{\alpha}^{\sqrt{S}} \sqrt{\ln \mathcal{N}\left(\epsilon, \mathcal{F}, \rho\right)} \,d\epsilon \right).
    \end{equation*}
\end{lemma}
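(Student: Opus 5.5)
We treat the bound as the classical chaining argument, carried out for the empirical Rademacher complexity with the sample $z_1,\ldots,z_S$ held fixed. Taking expectations over the sample at the end gives the statement for $\mathcal{R}_S(\mathcal{F})$. We identify each $f\in\mathcal{F}$ with the vector $\mathbf{f}=(f(z_1),\ldots,f(z_S))\in[0,1]^S$. We take $\rho$ to be the \emph{unnormalized} Euclidean metric on $\mathbb{R}^S$; the constants $4\alpha/\sqrt{S}$ and $12/S$ and the upper limit $\sqrt{S}$ fit this choice, since every $\mathbf{f}$ satisfies $\|\mathbf{f}\|_2\le\sqrt{S}$. Then
\begin{equation*}
\hat{\mathcal{R}}_S(\mathcal{F})=\frac{1}{S}\,\mathbb{E}_{\epsilon}\sup_{f\in\mathcal{F}}\langle \epsilon,\mathbf{f}\rangle .
\end{equation*}

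\textbf{Chaining.} Define scales $\varepsilon_k=\sqrt{S}\,2^{-k}$ for $k\ge 0$, and let $V_k$ be a minimal $\varepsilon_k$-cover of $\mathcal{F}$ in $\rho$, so that $|V_k|=\mathcal{N}(\varepsilon_k,\mathcal{F},\rho)$. Since $\mathbf{0}\in\mathcal{F}$ and every vector has norm at most $\sqrt{S}$, we may take $V_0=\{\mathbf{0}\}$. For each $f$ let $\mathbf{f}_k\in V_k$ be its nearest cover element, with $\mathbf{f}_0=\mathbf{0}$. For a level $K$ to be chosen later, decompose
\begin{equation*}
\mathbf{f}=(\mathbf{f}-\mathbf{f}_K)+\sum_{k=1}^{K}(\mathbf{f}_k-\mathbf{f}_{k-1}).
\end{equation*}

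\textbf{Bounding the two parts.} The residual term is handled by Cauchy--Schwarz: $\langle\epsilon,\mathbf{f}-\mathbf{f}_K\rangle\le\|\epsilon\|_2\,\varepsilon_K=\sqrt{S}\,\varepsilon_K$. For each link, the increment $\mathbf{f}_k-\mathbf{f}_{k-1}$ takes at most $|V_k|\,|V_{k-1}|\le|V_k|^2$ distinct values. Each has norm at most $\varepsilon_k+\varepsilon_{k-1}=3\varepsilon_k$. Massart's finite-class lemma then gives
\begin{equation*}
\mathbb{E}_\epsilon\sup_f\langle\epsilon,\mathbf{f}_k-\mathbf{f}_{k-1}\rangle\le 3\varepsilon_k\sqrt{2\ln|V_k|^2}=6\varepsilon_k\sqrt{\ln\mathcal{N}(\varepsilon_k)}.
\end{equation*}
Using $6\varepsilon_k=12(\varepsilon_k-\varepsilon_{k+1})$ and the monotonicity of $\varepsilon\mapsto\mathcal{N}(\varepsilon)$, each link term is at most $12\int_{\varepsilon_{k+1}}^{\varepsilon_k}\sqrt{\ln\mathcal{N}(\varepsilon)}\,d\varepsilon$. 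Summing over the links yields
\begin{equation*}
\mathbb{E}_\epsilon\sup_f\langle\epsilon,\mathbf{f}\rangle\le \sqrt{S}\,\varepsilon_K+12\int_{\varepsilon_{K+1}}^{\sqrt{S}}\sqrt{\ln\mathcal{N}(\varepsilon)}\,d\varepsilon .
\end{equation*}

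\textbf{Choice of $K$ and conclusion.} Given $\alpha>0$, choose $K$ as the largest integer with $\varepsilon_K>2\alpha$. If $\alpha\ge\sqrt{S}/2$, then $K=0$ and the bound is trivial. Otherwise $\varepsilon_{K+1}\le 2\alpha$, so $\varepsilon_K\le4\alpha$, and $\varepsilon_{K+1}>\alpha$, so the integral can be extended down to $\alpha$. Dividing by $S$ gives $4\alpha/\sqrt{S}+\frac{12}{S}\int_\alpha^{\sqrt{S}}\sqrt{\ln\mathcal{N}(\varepsilon,\mathcal{F},\rho)}\,d\varepsilon$. Taking the infimum over $\alpha$, and then the expectation over the sample, proves the lemma.

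The main obstacle is this constant bookkeeping: matching the normalization of $\rho$ to the stated constants, and checking the index shift so that the sum of the link terms fits under the integral from $\alpha$ to $\sqrt{S}$.
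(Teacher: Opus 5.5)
Your proposal is correct and is essentially the standard chaining proof of Lemma A.5 in \citet{bartlett2017spectrally}, which the paper cites without reproducing. The ingredients match: the trivial cover $V_0=\{\mathbf{0}\}$ at scale $\sqrt{S}$, the Cauchy--Schwarz residual $\sqrt{S}\,\varepsilon_K$, Massart's lemma on at most $|V_k|^2$ links of norm at most $3\varepsilon_k$, and a truncation level with $\alpha<\varepsilon_{K+1}\le 2\alpha$; the only difference is a shift by one in the scale indexing.

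Two minor remarks. First, for $\alpha\ge\sqrt{S}/2$ no integer satisfies your defining condition for $K$, so ``$K=0$'' is a slip. The bound is still trivial there, since $4\alpha/\sqrt{S}\ge 2>1\ge\hat{\mathcal{R}}_S(\mathcal{F})$ and the integrand vanishes for $\epsilon\ge\sqrt{S}$. Second, averaging over the sample gives the stated right-hand side only if $\mathcal{N}$ is sample-independent, e.g.\ a uniform bound as in the paper's application. Otherwise the lemma should be read, as in the source, for the empirical Rademacher complexity.
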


\begin{lemma}\label{lemma_lip_reference}
    The Lipschitz constant of the reference layer is 
        $L_{\text{ref}}=4\sqrt{\frac{\theta R}{{n}}}$.
\end{lemma}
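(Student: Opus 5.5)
We will read the reference layer as the map $\mathbf{H}\mapsto \mathbf{s}(\mathbf{H})=\big(-\mathrm{MMD}(\mathbf{H},\mathbf{V}_b)\big)_{b=1}^R\in\mathbb{R}^R$. Here $\mathbf{H}\in\mathbb{R}^{n\times d}$ is the node-embedding matrix of a graph with $n$ nodes, the references $\mathbf{V}_b$ are fixed, and the parameter $\theta$ in the statement is read as the Gaussian-kernel bandwidth $\gamma$. The goal is to show $\|\mathbf{s}(\mathbf{H})-\mathbf{s}(\mathbf{H}')\|_2\le 4\sqrt{\gamma R/n}\,\|\mathbf{H}-\mathbf{H}'\|_F$. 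The plan is to work in the RKHS feature-map form of the MMD rather than the expanded kernel form. The expanded form contains a square root that is not Lipschitz near zero, and differentiating it would be the main obstacle; the RKHS triangle inequality removes it.

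\textbf{Step 1 (one coordinate).} Fix $b$ and write $\mathbf{m}(\mathbf{H})=\frac1n\sum_{p}\phi(\mathbf{h}_p)$, so that $\mathrm{MMD}(\mathbf{H},\mathbf{V}_b)=\|\mathbf{m}(\mathbf{H})-\mathbf{m}(\mathbf{V}_b)\|_{\mathcal{H}}$. By the reverse triangle inequality in $\mathcal{H}$,
\[
|s_b(\mathbf{H})-s_b(\mathbf{H}')|\le \|\mathbf{m}(\mathbf{H})-\mathbf{m}(\mathbf{H}')\|_{\mathcal{H}}\le \frac1n\sum_{p=1}^n\|\phi(\mathbf{h}_p)-\phi(\mathbf{h}'_p)\|_{\mathcal{H}}.
\]
The reference term cancels, so the bound holds uniformly over all $\mathbf{V}_b$.

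\textbf{Step 2 (feature-map Lipschitzness).} For the Gaussian kernel,
\[
\|\phi(\mathbf{x})-\phi(\mathbf{y})\|_{\mathcal{H}}^2=2-2\exp(-\gamma\|\mathbf{x}-\mathbf{y}\|^2)\le 2\gamma\|\mathbf{x}-\mathbf{y}\|^2,
\]
using $1-e^{-t}\le t$. Hence $\|\phi(\mathbf{x})-\phi(\mathbf{y})\|_{\mathcal{H}}\le\sqrt{2\gamma}\,\|\mathbf{x}-\mathbf{y}\|$.

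\textbf{Step 3 (aggregation over nodes and references).} Cauchy--Schwarz gives
\[
\frac1n\sum_p\|\mathbf{h}_p-\mathbf{h}'_p\|\le \frac{1}{\sqrt n}\|\mathbf{H}-\mathbf{H}'\|_F,
\]
so each coordinate $s_b$ is $\sqrt{2\gamma/n}$-Lipschitz. Stacking the $R$ coordinates in the Euclidean norm multiplies this constant by $\sqrt R$, giving the constant $\sqrt{2\gamma R/n}\le 4\sqrt{\gamma R/n}$. This proves the lemma with room to spare.

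If one instead follows the expanded three-term kernel formula, each term's gradient in $\mathbf{h}_p$ is bounded via $\|\nabla_{\mathbf{x}}k(\mathbf{x},\mathbf{y})\|\le\sqrt{2\gamma/e}$, which accounts for a constant like $4$. That route, however, requires a lower bound on the MMD to control the square root. We will therefore present the RKHS argument and simply remark that the stated constant is a relaxation of $\sqrt{2\gamma R/n}$.
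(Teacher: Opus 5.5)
Your proof is correct, and it takes a genuinely different route from the paper. The paper works with the expanded three-term kernel formula. It bounds $|\mathrm{MMD}^2(\mathbf{H},\mathbf{V})-\mathrm{MMD}^2(\mathbf{H}',\mathbf{V})|$ term by term using $|e^{-x^2}-e^{-y^2}|\le |x-y|$, then applies the triangle inequality to $\|(\mathbf{h}_i-\mathbf{h}_i')-(\mathbf{h}_j-\mathbf{h}_j')\|_2$. Each of the two $\mathbf{H}$-dependent terms contributes $\frac{2\sqrt{\theta}}{n}\sum_i\|\mathbf{h}_i-\mathbf{h}_i'\|_2$, which is where the factor $4$ comes from, and Cauchy--Schwarz plus stacking the $N\times R$ outputs finishes the argument. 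That route is elementary and needs no RKHS. As written, however, it controls the change of the \emph{squared} MMD, whereas the layer outputs $s_{i,b}=-\mathrm{MMD}$. The paper then applies the bound to $|s_{ij}-s_{ij}'|$ without comment, and, exactly as you anticipated, this passage through the square root is not justified near $\mathrm{MMD}=0$. Your mean-embedding argument uses the reverse triangle inequality in $\mathcal{H}$ followed by $\|\phi(\mathbf{x})-\phi(\mathbf{y})\|_{\mathcal{H}}^2=2-2e^{-\gamma\|\mathbf{x}-\mathbf{y}\|^2}\le 2\gamma\|\mathbf{x}-\mathbf{y}\|^2$. It handles the unsquared MMD directly and yields the sharper constant $\sqrt{2\gamma R/n}$. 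So it establishes the lemma for the layer as defined in the main text, while the paper's computation establishes it only for the $\mathrm{MMD}^2$ variant. The one thing to add, to match how the lemma is used in the generalization bound, is stacking over the $N$ graphs. Summing your per-graph squared bounds gives the same constant when all graphs have $n$ nodes; otherwise replace $n$ by the minimum graph size.
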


\begin{proof}
According to the definition of MMD, we have
\begin{align*}
    \left|\mathrm{MMD}^2\left(\mathbf{H}, \mathbf{V}\right) - \mathrm{MMD}^2\left(\mathbf{H}', \mathbf{V}\right)\right| 
    \leq &
    \left|\frac{1}{n^2} \sum_{i, j=1}^n \left[\exp{\left(-\theta\|\mathbf{h}_i - \mathbf{h}_j\|_2^2\right)} - \exp{\left(-\theta\|\mathbf{h'}_i - \mathbf{h'}_j\|_2^2\right)}\right]\right| \\
    & +
    \left| \frac{2}{mn} \sum_{i=1}^n \sum_{j=1}^m \left[\exp{\left(-\theta\|\mathbf{h}_i - \mathbf{v}_j\|_2^2\right)} - \exp{\left(-\theta\|\mathbf{h'}_i - \mathbf{v}_j\|_2^2\right)}\right]\right| \\
    \overset{(a)}{\leq} &
    \frac{\sqrt{\theta}}{n^2} \sum_{i, j=1}^n \left|\|\mathbf{h}_i - \mathbf{h}_j\|_2 - \|\mathbf{h'}_i - \mathbf{h'}_j\|_2\right| 
    +
    \frac{2\sqrt{\theta}}{mn} \sum_{i=1}^n \sum_{j=1}^m \left|\|\mathbf{h}_i - \mathbf{v}_j\|_2 - \|\mathbf{h'}_i - \mathbf{v}_j\|_2\right| \\
    \overset{(b)}{\leq} &
    \frac{\sqrt{\theta}}{n^2} \sum_{i, j=1}^n \|\left(\mathbf{h}_i - \mathbf{h'}_i\right) - \left(\mathbf{h}_j-\mathbf{h'}_j\right)\|_2 +
    \frac{2\sqrt{\theta}}{mn} \sum_{i=1}^n \sum_{j=1}^m \|\left(\mathbf{h}_i - \mathbf{h'}_i\right) - \left(\mathbf{v}_j-\mathbf{v}_j\right)\|_2 \\
    \leq &
    \frac{4\sqrt{\theta}}{n} \sum_{i=1}^n \|\mathbf{h}_i - \mathbf{h'}_i\|_2\\ 
    \overset{(c)}{\leq} &4\sqrt{\frac{\theta}{{n}}}\|\mathbf{H} - \mathbf{H}'\|_F
\end{align*}
In the above derivation, (a) holds due to $\left|\exp{(-x^2)} - \exp{(-y^2)}\right| \leq |x-y|$ for any $x, y \geq 0$, (b) holds due to the triangle inequality, and (c) holds by the Cauchy–Schwarz inequality.

The output of the layer is $\mathbf{S}$, for which we have
 \begin{align*}
        \|\mathbf{S} - \mathbf{S}'\|_2 
        &= \sqrt{\sum_{i=1}^N\sum_{j=1}^R|s_{ij} - s_{ij}'|^2} \\
        &\leq 
        4\sqrt{\frac{\theta}{{n}}}\sqrt{\sum_{i=1}^N\sum_{j=1}^R\|\mathbf{H}_i - \mathbf{H}_i'\|_F^2} \\
        &=
        4\sqrt{\frac{\theta R}{{n}}}\|\mathbf{H} - \mathbf{H}'\|_F
    \end{align*}
This finished the proof.

\end{proof}

\begin{lemma}\label{lemma_lip_gin}
    Suppose the GIN $f$ has $Q$ layers and each layer has an MLP of $Q'$ layers. Then the Lipschitz constant of $f$ is 
        $L_{\text{GIN}}=\max _{i\in[N]}\|\mathbf{A}_{i}\|_2^Q\prod_{j=1}^{QQ'}\|\mathbf{W}_j\|_2$.
\end{lemma}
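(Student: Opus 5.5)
We prove the bound by induction over layers, treating the GIN $f$ as a composition of $Q$ message-passing layers. In each layer the input is first aggregated by the self-looped adjacency operator and then passed through an MLP of $Q'$ linear maps with $1$-Lipschitz activations (ReLU). For a single graph $G_i$ with node-feature matrix $\mathbf{H}$, a GIN layer has the form $\mathbf{H}\mapsto \sigma(\cdots\sigma(\mathbf{A}_i\mathbf{H}\mathbf{W}_{1})\cdots\mathbf{W}_{Q'})$, where $\mathbf{A}_i$ absorbs the $(1+\epsilon)$ self-loop term. The batch formulation makes this block-diagonal over the graphs, as in the definition of $\bar{\mathbf{A}}^{(j)}$ used in the main proof.

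\textbf{One layer.} We bound how much a single layer can stretch the distance between two inputs $\mathbf{H},\mathbf{H}'$ in Frobenius norm.
\begin{itemize}
\item The aggregation step satisfies $\|\mathbf{A}_i\mathbf{H}-\mathbf{A}_i\mathbf{H}'\|_F\le\|\mathbf{A}_i\|_2\|\mathbf{H}-\mathbf{H}'\|_F$, using $\|\mathbf{A}\mathbf{X}\|_F\le\|\mathbf{A}\|_2\|\mathbf{X}\|_F$.
\item Each linear map in the MLP gives $\|\mathbf{X}\mathbf{W}-\mathbf{X}'\mathbf{W}\|_F\le\|\mathbf{W}\|_2\|\mathbf{X}-\mathbf{X}'\|_F$.
\item Each elementwise activation is $1$-Lipschitz and adds no factor.
\end{itemize}
Hence one layer is Lipschitz with constant $\|\mathbf{A}_i\|_2\prod_{k=1}^{Q'}\|\mathbf{W}_k\|_2$.

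\textbf{Composition.} Lipschitz constants multiply under composition. Chaining the $Q$ layers therefore gives the constant $\|\mathbf{A}_i\|_2^{Q}\prod_{j=1}^{QQ'}\|\mathbf{W}_j\|_2$, where the $QQ'$ weight matrices are indexed consecutively across layers.

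\textbf{Uniformity over graphs.} In the block-diagonal batch, the operator norm of $\mathrm{diag}(\mathbf{A}_1,\ldots,\mathbf{A}_N)$ equals $\max_i\|\mathbf{A}_i\|_2$. Equivalently, each graph's block obeys its own bound, so summing the squared Frobenius differences over graphs lets the largest block factor be pulled out. This yields the stated $L_{\text{GIN}}$.

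\textbf{Main obstacle.} The only delicate point is the exact form of the GIN update: the self-loop weight $(1+\epsilon)$, any bias terms, and any normalization layers. Biases cancel in differences, so they cause no trouble. The $(1+\epsilon)$ term is handled by folding it into $\mathbf{A}_i$, i.e., interpreting $\mathbf{A}_i$ as $\mathbf{A}_i+(1+\epsilon)\mathbf{I}$, consistent with the self-looped adjacency already used in \eqref{eq_wofeat}. Batch normalization would need to be treated as part of the weight matrices, which is valid in evaluation mode where it is an affine map. I would state these conventions explicitly before running the induction.
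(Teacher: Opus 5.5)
Your proof is correct and follows the same route as the paper's: bound one layer by the spectral norm of the block-diagonal aggregation operator times the spectral norms of the MLP weights (using $1$-Lipschitz activations), use $\|\mathrm{diag}(\mathbf{A}_1,\ldots,\mathbf{A}_N)\|_2=\max_i\|\mathbf{A}_i\|_2$, and multiply the constants over the $Q$ layers. The only difference is cosmetic: the paper sets $\epsilon=0$ in $(\mathbf{A}+\epsilon\mathbf{I})$, whereas you fold the self-loop term into $\mathbf{A}_i$; this is an equally valid, somewhat more careful convention for what $\mathbf{A}_i$ denotes.
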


\begin{proof}
   Recall that the $l$-th layer of the GIN can be formulated as
\begin{equation}
f^{(l)}\left(\mathbf{A}, \mathbf{Z}^{(l-1)}\right) = \mathrm{MLP}^{(l)}\left(\left({\mathbf{A}} + \epsilon\mathbf{I}\right)\cdot\mathbf{Z}^{(l-1)}\right)
\end{equation}
where $\mathbf{Z}^{(0)}=\mathbf{X}$. For convenience, let $\epsilon=0$. We put all adjacency matrices together to form a big block diagonal matrix $\bar{\mathbf{A}}$ of size $Nn\times Nn$. Then the spectral norm of $\bar{\mathbf{A}}$ is $\max _{i\in[N]}\|\mathbf{A}_{i}\|_2$. Similarly, we form a big matrix $\bar{\mathbf{Z}}$ of size $Nn\times d$. Then we have
\begin{equation}
\bar{Z}^{(l)}=f^{(l)}\left(\bar{\mathbf{A}}, \bar{\mathbf{Z}}^{(l-1)}\right) = \mathrm{MLP}^{(l)}\left(\bar{\mathbf{A}}\bar{\mathbf{Z}}^{(l-1)}\right)
\end{equation}
Then the Lipschitz constant of $f^{(l)}$ is $\max _{i\in[N]}\|\mathbf{A}_{i}\|_2\prod_{j=1}^Q\rho_j\|\mathbf{W}_j\|_2$, where $W_j$ is the weight matrix and $\rho_j$ is the Lipschitz constant of the layer. 
Since most activation functions such as ReLu and Sigmoid are $1$-Lipschitz, we let $\rho_i=1$ $\forall i$. Given that $f$ has $Q$ layers, we conclude that the Lipschitz constant is $\max _{i\in[N]}\|\mathbf{A}_{i}\|_2^Q\prod_{j=1}^{QQ'}\|\mathbf{W}_j\|_2$.
\end{proof}

\begin{lemma}\label{lemma_lip_gt}
    Suppose the graph transformer $g$ is composed of $Q$ blocks and each block has an MLP of $Q'$ layers. Suppose the attention map is $\mu$-Lipschitz. Then the Lipschitz constant of $g$ is 
    $L_{\text{GT}}=\mu^{QQ'}\prod_{j=1}^{QQ'}\|\mathbf{W}_j\|_2$.
\end{lemma}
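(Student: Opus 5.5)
The plan is to view the graph transformer $g$ as a composition of $Q$ blocks and bound the Lipschitz constant of each block separately. The constant of the whole map is then the product of the block constants. This mirrors the proof of Lemma \ref{lemma_lip_gin}. As there, I will stack all node-embedding matrices of the graphs into one big matrix $\bar{\mathbf{H}}$ and measure perturbations in $\|\cdot\|_F$. The attention is computed within each graph, so it acts block-diagonally on $\bar{\mathbf{H}}$, and the per-graph $\mu$-Lipschitz assumption carries over to the stacked map.

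I will write the $l$-th block as
\[
g^{(l)}(\mathbf{H}) = \mathrm{MLP}^{(l)}\big(\mathrm{Attn}^{(l)}(\mathbf{H})\big),
\]
where $\mathrm{Attn}^{(l)}$ is the attention map, assumed $\mu$-Lipschitz. The MLP is taken to consist of $Q'$ linear maps $\mathbf{H}\mapsto\mathbf{H}\mathbf{W}_j$ interleaved with $1$-Lipschitz activations (ReLU, sigmoid, etc.).

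1. Use $\|\mathbf{H}\mathbf{W}-\mathbf{H}'\mathbf{W}\|_F\le\|\mathbf{H}-\mathbf{H}'\|_F\|\mathbf{W}\|_2$ to bound each linear map by $\|\mathbf{W}_j\|_2$.
2. Compose the $Q'$ linear maps and the activations, so that the MLP of the $l$-th block has constant $\prod_{j}\|\mathbf{W}_j\|_2$ over its $Q'$ weights.
3. Compose with the attention map, so that block $l$ has constant at most $\mu\prod_j\|\mathbf{W}_j\|_2$.
4. Chain the $Q$ blocks. This gives $\mu^{Q}\prod_{j=1}^{QQ'}\|\mathbf{W}_j\|_2$.

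Residual connections, if present, would add an identity term to each block. I would either absorb it into the convention (as the paper does when it sets $\epsilon=0$ in the GIN proof) or replace each block factor $c$ by $1+c$.

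The main obstacle is the exponent on $\mu$. The statement has $\mu^{QQ'}$, while the natural chaining in step 4 yields only $\mu^{Q}$, one factor per attention map. I would first try to reconcile the two through the assumption $\mu\ge1$. This is standard, because a softmax attention map including its value projection is not contractive in general. Under $\mu\ge1$ we have $\mu^{Q}\le\mu^{QQ'}$, so the stated constant is a valid, if looser, upper bound, and that is all the later Lipschitz-composition argument in Theorem \ref{theorem_geb} needs. A second option is to interpret each of the $Q'$ sublayers as attention-modulated. That interpretation would genuinely produce one $\mu$ per weight matrix and match the stated product exactly.

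A secondary subtlety is that attention is not globally Lipschitz on an unbounded domain. I would rely on the lemma's hypothesis that it is $\mu$-Lipschitz, implicitly on the bounded input set with $\|\hat{\mathbf{Z}}\|_F\le\beta$ used later in the proof of Theorem \ref{theorem_geb}.
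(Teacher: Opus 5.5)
Your argument is the same block-by-block composition the paper uses: the attention map is $\mu$-Lipschitz, each MLP is bounded by the product of spectral norms with $1$-Lipschitz activations, residual connections and normalization are dropped, and the $Q$ blocks are chained. You are right that this chaining yields only $\mu^{Q}$, since the paper's proof simply asserts $\mu^{QQ'}$ in its last line, while the generalization theorem that relies on this lemma actually uses $\mu^{\vartheta_2}$, i.e.\ one factor of $\mu$ per GT block. Your caveat $\mu\ge 1$ is genuinely necessary for the statement as written, and the clean repair is to read the exponent as $Q$ rather than reinterpret the sublayers as attention-modulated: for $\mu<1$ and $Q'\ge 2$ the stated constant can undershoot the true one (with $Q=1$, $Q'=2$, uniform attention, $\mathbf{W}_V=\tfrac12\mathbf{I}$ and identity MLP weights, the block has Lipschitz constant exactly $\tfrac12$ against a claimed $\tfrac14$).
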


\begin{proof}
Recall that the self-attention is
    \begin{equation}
    \mathrm{attn}\left(\mathbf{\Gamma}_i^{(j)}\right) = \mathrm{softmax}\left(\frac{(\mathbf{\Gamma}_i^{(j)}\mathbf{W}_Q)(\mathbf{\Gamma}_i^{(j)}\mathbf{W}_K)^\top}{\sqrt{d'}}\right)(\mathbf{\Gamma}_i^{(j)}\mathbf{W}_V)
\end{equation}
Assume that the softmax operation is $\mu$-Lipschitz with respect to the input $\mathbf{\Gamma}_i^{(j)}$. The Lipschitz constant of the self-attention mechanism is $\mu\|\mathbf{W}_V\|_2$.  The self-attention is then followed by a residual connection, layer normalization, and MLP of $Q$-layers. We omit the residual connection and the layer normalization since they have a tiny impact on the analysis. For the MLP, the Lipschitz constant is $\prod_{j=1}^{Q'}\|\mathbf{W}_j\|_2$, where $\mathbf{W}_j$ is the weight matrix of layer $j$ and the activation functions are assumed to be $1$-Lipschitz. Since $g$ has $Q$ sequential blocks, the total Lipschitz constant is $\mu^{QQ'}\prod_{j=1}^{QQ'}\|\mathbf{W}_j\|_2$.
\end{proof}

\begin{lemma}[Lemma 3.2 in \citep{bartlett2017spectrally}]\label{lemma_cover_matrix}
    Let conjugate exponents $(p, q)$ and $(r, s)$ be given with $p \leq 2$, as well as positive reals $(a, b, \epsilon)$ and positive integer m. Let matrix $\mathbf{X} \in \mathbb{R}^{n\times d}$ be given with $\|\mathbf{X}\|_p \leq b$. Then
    \begin{equation*}
        \ln\mathcal{N}\left(\left\{\mathbf{XA}: \mathbf{A} \in \mathbb{R}^{d \times m}, \|\mathbf{A}\|_{q, s} \leq a\right\}, \epsilon, \|\cdot\|_F\right) \leq \Bigl \lceil \frac{a^2b^2m^{2/r}}{\epsilon^2}\Bigr \rceil\ln{(2dm)}
    \end{equation*}
\end{lemma}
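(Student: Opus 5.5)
The plan is to prove this with Maurey's empirical (sparsification) method in the Hilbert space $(\mathbb{R}^{n\times m},\|\cdot\|_F)$. Since no earlier result applies directly, the argument is self-contained. Throughout, $\|\mathbf{X}\|_p$ denotes the entrywise $\ell_p$ norm and $\|\mathbf{A}\|_{q,s}$ the $\ell_s$ norm of the vector of column $\ell_q$ norms, as in \citet{bartlett2017spectrally}. Write $\mathbf{x}_j$ for the $j$-th column of $\mathbf{X}$, $\mathbf{e}_l\in\mathbb{R}^m$ for the $l$-th standard basis vector, and $c:=a\,b\,m^{1/r}$. The strategy is to exhibit every $\mathbf{X}\mathbf{A}$ as a point in the convex hull of a finite set of $2dm$ atoms. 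Each atom will have Frobenius norm at most $c$. Then I approximate each such point by an average of $k$ atoms and count these averages.

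\textbf{Step 1 (atomic decomposition).} Set $\mathbf{y}_j=\mathbf{x}_j/\|\mathbf{x}_j\|_p$, skipping zero columns. Then
\begin{equation*}
\mathbf{X}\mathbf{A}=\sum_{j=1}^d\sum_{l=1}^m \|\mathbf{x}_j\|_p\,A_{jl}\;\mathbf{y}_j\mathbf{e}_l^\top .
\end{equation*}
I bound the total coefficient mass by H\"older applied twice:
\begin{equation*}
\sum_{j,l}\|\mathbf{x}_j\|_p|A_{jl}|\le \sum_{l=1}^m \big\|(\|\mathbf{x}_j\|_p)_{j}\big\|_p\,\|\mathbf{A}_{:,l}\|_q \le b\sum_{l=1}^m\|\mathbf{A}_{:,l}\|_q \le b\,m^{1/r}\|\mathbf{A}\|_{q,s}\le c .
\end{equation*}
\begin{itemize}
\item The first inequality is H\"older in $j$ with conjugates $(p,q)$.
\item The second uses $\big\|(\|\mathbf{x}_j\|_p)_j\big\|_p=\|\mathbf{X}\|_p\le b$.
\item The third is H\"older in $l$ with conjugates $(r,s)$.
\end{itemize}
Consequently $\mathbf{X}\mathbf{A}$ lies in the convex hull of the $2dm$ atoms $\pm c\,\mathbf{y}_j\mathbf{e}_l^\top$, after padding with the zero atom if the mass is below $c$. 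Because $p\le2$, we have $\|\mathbf{y}_j\|_2\le\|\mathbf{y}_j\|_p=1$, so every atom satisfies $\|\pm c\,\mathbf{y}_j\mathbf{e}_l^\top\|_F\le c$.

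\textbf{Step 2 (Maurey sparsification).} Draw $k$ i.i.d.\ atoms $\mathbf{V}_1,\dots,\mathbf{V}_k$ according to the convex weights. Their mean is $\mathbf{X}\mathbf{A}$. Independence and the Hilbert-space structure of $\|\cdot\|_F$ give
\begin{equation*}
\mathbb{E}\Big\|\tfrac1k\sum_{t=1}^k\mathbf{V}_t-\mathbf{X}\mathbf{A}\Big\|_F^2\le \tfrac1k\max_t\mathbb{E}\|\mathbf{V}_t\|_F^2\le c^2/k .
\end{equation*}
Hence some average of $k$ atoms lies within $c/\sqrt{k}$ of $\mathbf{X}\mathbf{A}$. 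Choosing $k=\lceil c^2/\epsilon^2\rceil$ makes this distance at most $\epsilon$.

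\textbf{Step 3 (counting).} The set of all averages of $k$ atoms drawn with repetition from $2dm$ candidates is a fixed $\epsilon$-cover, independent of $\mathbf{A}$. Its cardinality is at most $(2dm)^k$. Taking logarithms gives $\lceil a^2b^2m^{2/r}/\epsilon^2\rceil\ln(2dm)$, which is the claimed bound.

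The main obstacle is Step 1, specifically getting the norm conventions exactly right. Three things must come out simultaneously:
\begin{itemize}
\item the $\ell_1$ coefficient mass must be bounded by $ab\,m^{1/r}$;
\item the atoms must have Frobenius norm at most $1$ before scaling by $c$;
\item the hypothesis $p\le 2$ must enter, and it does so only through $\|\cdot\|_2\le\|\cdot\|_p$.
\end{itemize}
If the intended definition of $\|\mathbf{X}\|_p$ is instead the $\ell_p$ norm of column $\ell_2$ norms, I would normalize by $\|\mathbf{x}_j\|_2$ in place of $\|\mathbf{x}_j\|_p$ and rerun the same H\"older chain. In that case the role of $p\le2$ shifts to guaranteeing that the outer $\ell_p$ norm dominates the one needed in the first H\"older step.
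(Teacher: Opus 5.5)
Your proposal is correct and is essentially the proof in Bartlett et al. The paper only cites that result and does not reproduce it. The shared argument is Maurey sparsification over $2dm$ column-normalized signed rank-one atoms, H\"older twice to bound the $\ell_1$ coefficient mass by $ab\,m^{1/r}$, $p\le 2$ used only to bound each atom's Frobenius norm, and then counting $(2dm)^k$ averages.

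One small fix in the padding step: a separate zero atom would give $2dm+1$ candidates and hence $\ln(2dm+1)$. Instead, put the leftover weight equally on $\pm c\,\mathbf{y}_1\mathbf{e}_1^\top$; the hull of a symmetric atom set already contains $\mathbf{0}$, so the count stays at $(2dm)^k$.
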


\begin{lemma}\label{lemma_concate_lip}
    Suppose $\mathbf{Z}_i\in\mathbb{R}^{n\times d}$ and $f_i(\mathbf{Z}_i)$ is $L_i$-Lipschitz continuous with respect to $\mathbf{Z}_i$, where $i=1,\ldots,Q$. Let $\bar{\mathbf{Z}}=[\mathbf{Z}_i;\ldots;\mathbf{Z}_Q]\in\mathbb{R}^{n\times dQ}$. Let $F=[f_1,f_2,\ldots,f_Q]$. Then the Lipschitz constant of $F(\bar{\mathbf{Z}})$ with respect to $\bar{\mathbf{Z}}$ is 
 $L_F=\max_{i}\alpha_i$.
\end{lemma}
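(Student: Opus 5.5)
We read $\alpha_i$ in the statement as the Lipschitz constant $L_i$ of $f_i$. We take each $f_i$ to act only on its own block $\mathbf Z_i$ of $\bar{\mathbf Z}$, and $F$ to stack the outputs $f_i(\mathbf Z_i)$ side by side, as for the $Q$ parallel GINs. All norms are Frobenius, as in the rest of the generalization analysis. The plan is to compare $F$ at two inputs $\bar{\mathbf Z}=[\mathbf Z_1,\ldots,\mathbf Z_Q]$ and $\bar{\mathbf Z}'=[\mathbf Z_1',\ldots,\mathbf Z_Q']$ and split the squared Frobenius norm over blocks.

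\textbf{Step 1 (block decomposition).} Since the Frobenius norm of a horizontally concatenated matrix is the root of the sum of the squared blockwise norms,
\begin{equation*}
\|F(\bar{\mathbf Z})-F(\bar{\mathbf Z}')\|_F^2=\sum_{i=1}^Q\|f_i(\mathbf Z_i)-f_i(\mathbf Z_i')\|_F^2 ,
\qquad
\|\bar{\mathbf Z}-\bar{\mathbf Z}'\|_F^2=\sum_{i=1}^Q\|\mathbf Z_i-\mathbf Z_i'\|_F^2 .
\end{equation*}

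\textbf{Step 2 (apply each Lipschitz bound).} Using $\|f_i(\mathbf Z_i)-f_i(\mathbf Z_i')\|_F\le L_i\|\mathbf Z_i-\mathbf Z_i'\|_F$ termwise, then bounding each $L_i^2$ by $\max_k L_k^2$, gives
\begin{equation*}
\|F(\bar{\mathbf Z})-F(\bar{\mathbf Z}')\|_F^2\le\sum_{i=1}^Q L_i^2\|\mathbf Z_i-\mathbf Z_i'\|_F^2\le \max_k L_k^2\,\|\bar{\mathbf Z}-\bar{\mathbf Z}'\|_F^2 .
\end{equation*}
Taking square roots yields $L_F=\max_i L_i$.

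\textbf{Step 3 (tightness, optional).} Perturbing only the block $i^\star=\arg\max_i L_i$ along a direction that (nearly) attains $L_{i^\star}$ shows that the constant cannot be improved.

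\textbf{Main obstacle.} There is no analytic difficulty; the only real issue is getting the setting right. If the $f_i$ all read the whole $\bar{\mathbf Z}$ rather than their own block, the argument fails: one then only gets $\big(\sum_i L_i^2\big)^{1/2}$. So the proof must state explicitly that each $f_i$ depends only on $\mathbf Z_i$, which matches how the $Q$ GINs receive the $Q$ spectral scales in the proof of Theorem~\ref{theorem_geb}. Steps 1 and 2 are then immediate.
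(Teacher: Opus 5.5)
Your proof is correct and follows the paper's argument essentially line for line: split the squared Frobenius norm over the output blocks, apply each bound (with $\alpha_i$ read as $L_i$, as in the paper's proof) termwise, pull out $\max_i \alpha_i$, and recombine the input blocks into $\|\bar{\mathbf Z}-\bar{\mathbf Z}'\|_F$. Two of your points go slightly beyond the paper, which proves only the upper bound and leaves block-separability implicit: the tightness remark, valid when $L_i$ is the optimal constant for $f_i$, and the explicit requirement that each $f_i$ reads only its own block $\mathbf Z_i$.
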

\begin{proof}
    Based on the settings, we have
    \begin{equation}
        \begin{aligned}
            &\|F(\bar{\mathbf{Z}})-F(\bar{\mathbf{Z}}')\|_F\\
            =&\left\|
            \begin{matrix}
                f_1({\mathbf{Z}}_1)-f_1({\mathbf{Z}}_1') & \ldots &f_Q({\mathbf{Z}}_Q)-f_Q({\mathbf{Z}}_Q')
            \end{matrix}
            \right\|_F\\
            =& \sqrt{\sum_{i=1}^Q\|f_i({\mathbf{Z}}_i)-f_i({\mathbf{Z}}_i')\|_F^2}\\
            \leq& \sqrt{\sum_{i=1}^Q\alpha_i^2\|{\mathbf{Z}}_i-{\mathbf{Z}}_i'\|_F^2}\\
            \leq& \max_{i}\alpha_i\sqrt{\sum_{i=1}^Q\|{\mathbf{Z}}_i-{\mathbf{Z}}_i'\|_F^2}\\
            =&\max_{i}\alpha_i\|\bar{\mathbf{Z}}-\bar{\mathbf{Z}}'\|_F\\
        \end{aligned}
    \end{equation}
\end{proof}

\begin{lemma}\label{lemma_cov_lip}
Suppose $\phi$ is an $\alpha$-Lipschitz continuous function, then $\ln \mathcal{N}(\epsilon, \phi \circ \mathcal{F}, \rho) \leq \ln \mathcal{N}(\epsilon / \alpha, \mathcal{F}, \rho)$.   
\end{lemma}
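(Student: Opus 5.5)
The plan is the standard ``push a cover forward through a Lipschitz map'' argument. The one point to check is that the metric $\rho$ used on $\mathcal{F}$ and on $\phi\circ\mathcal{F}$ is the one in which $\phi$ is $\alpha$-Lipschitz.

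In the paper's usage (cf. \eqref{eq_NLF}), $\rho$ is the Frobenius norm $\|\cdot\|_F$ of the matrix of outputs on the sample. So I read the hypothesis as $\|\phi(\mathbf{U})-\phi(\mathbf{U}')\|_F\le\alpha\|\mathbf{U}-\mathbf{U}'\|_F$ for all admissible outputs $\mathbf{U},\mathbf{U}'$. Then $\rho(\phi\circ f,\phi\circ g)\le\alpha\,\rho(f,g)$ for all $f,g\in\mathcal{F}$. If $\phi$ is instead Lipschitz entrywise or row-wise, applying the bound per sample and summing squares gives the same inequality. This reduction is the step that needs care, since everything else is bookkeeping.

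Assume $\alpha>0$. If $\alpha=0$, then $\phi$ is constant on the outputs, $\phi\circ\mathcal{F}$ is a single point, and the bound is trivial with the convention $\epsilon/0=\infty$.

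Let $K=\mathcal{N}(\epsilon/\alpha,\mathcal{F},\rho)$; if $K=\infty$ there is nothing to prove. Otherwise pick an $\epsilon/\alpha$-cover $\{f_1,\ldots,f_K\}$ of $\mathcal{F}$ under $\rho$. I claim $\{\phi\circ f_1,\ldots,\phi\circ f_K\}$ is an $\epsilon$-cover of $\phi\circ\mathcal{F}$. Take any element $\phi\circ f$ and choose $k$ with $\rho(f,f_k)\le\epsilon/\alpha$. Then
\begin{equation*}
\rho(\phi\circ f,\phi\circ f_k)\le\alpha\,\rho(f,f_k)\le\epsilon .
\end{equation*}

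The cover is internal whenever the original one is, because each $\phi\circ f_k$ lies in $\phi\circ\mathcal{F}$. For external covers the same argument works verbatim, provided $\phi$ is defined and $\alpha$-Lipschitz on a set containing the cover elements. Hence $\mathcal{N}(\epsilon,\phi\circ\mathcal{F},\rho)\le K$, and taking logarithms (which are monotone) gives the claim.
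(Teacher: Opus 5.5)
Your proof is correct and is the standard argument: you push an $\epsilon/\alpha$-cover of $\mathcal{F}$ forward through $\phi$ and use $\rho(\phi\circ f,\phi\circ f_k)\le\alpha\,\rho(f,f_k)$; the paper calls the lemma well known and omits its proof. Your check that $\phi$ is $\alpha$-Lipschitz in the same sample-based metric $\rho$ (including the entrywise/row-wise reduction by summing squares), together with your remarks on the internal versus external covers and the degenerate cases, fills in exactly the details the paper leaves implicit.
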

\begin{proof}
    This is a well-known result, and we will not repeat the proof.
\end{proof}

The theorem shows the impacts of model architecture, input data size, and weight matrices on the generalization ability of our model:

\begin{itemize}
    \item When the total number of training graphs $MN$ is larger, the bound is tighter, which is further verified by the experiments in Figure \ref{fig:number_pretrain}. Note that if we use the unsupervised contrastive loss to train the model, due to the data augmentation (though the samples are not independent), the generalization could be stronger. 
    \item Although $\beta$ often scales with $\sqrt{n}$, we have a factor $\tfrac{1}{\sqrt{n}}$ in $L_F$. This means that the number of nodes in each graph does not have a significant impact on the generalization, provided that the spectral norms of $\mathbf{A}_i^{(j)}$ increase slowly with $n$. As a result, our model will generalize well to both small graphs (e.g., ENZYMES) and large graphs (e.g., REDDIT), as shown by Tables \ref{tab:fewshot_res} and \ref{tab:fewshot_results2}.
    \item Since $L_F$ scales with $\mathcal{O}(\sqrt{\gamma R})$, we could use a relatively large $R$ to enrich the final vector representation for each graph, thereby improving the expressiveness. Moreover, $L_F$ is not very sensitive to $\gamma$, which is learned adaptively.
\end{itemize}

\section{Details of GIN and Graph Transformer based Model}

To design a universal graph representation model $F$, 
we incorporate two main components: a GNN module $f$ and a graph transformer module $g$.
We build a GIN encoder followed by a graph transformer encoder $g_{\psi}\circ f_{\theta}\left(\cdot\right)$. 
The GNN encoder specializes in learning
local representations of the structure of a node's immediate neighborhood, 
while the transformer computes all pairwise node interactions, 
enabling global reasoning through attention mechanisms.
Specifically, we adopt the Graph Isomorphism Network (GIN) \citep{xu2019powerful} as the GNN encoder, 
and its $l$-th layer can be formulated as
\begin{equation}
f^{(l)}\left(\mathbf{A}_i^{(j)}, \mathbf{Z}_i^{(j)}\right) = \mathrm{MLP}^{(l)}\left(\left(\tilde{\mathbf{A}}_i^{(j)} + \epsilon\mathbf{I}\right)\cdot\mathbf{Z}_i^{(j)}\right)
\end{equation}
where $\tilde{\mathbf{A}}_i^{(j)}$ is the adjacency matrix of $G_i^{(j)}$ with self-loops, 
$\epsilon$ is a hyperparameter, $\mathrm{MLP}^{(l)}$ is a multilayer perceptron (MLP) in layer $l$, and the parameters to optimize are denoted as $\theta$.

The graph transformer (GT) module consists of a self-attention mechanism and a feed-forward network, which is usually an MLP. 
Let $\mathbf{\Gamma}_i^{(j)} \in \mathbb{R}^{n_i\times d}$ represent the matrix of hidden states, and 
$\mathbf{W}_Q$, $\mathbf{W}_K$, and $\mathbf{W}_V$ of size $d\times d'$ be projection matrices, the self-attention mechanism is
\begin{equation}
    \mathrm{attn}\left(\mathbf{\Gamma}_i^{(j)}\right) = \mathrm{softmax}\left(\frac{(\mathbf{\Gamma}_i^{(j)}\mathbf{W}_Q)(\mathbf{\Gamma}_i^{(j)}\mathbf{W}_K)^\top}{\sqrt{d'}}\right)(\mathbf{\Gamma}_i^{(j)}\mathbf{W}_V)
\end{equation}
which is further transformed to $\hat{\mathbf{\Gamma}}_i^{(j)} = \mathrm{Norm}\left(\mathbf{\Gamma}_i^{(j)}+\mathrm{attn}\left(\mathbf{\Gamma}_i^{(j)}\right)\right)$.
Then the $l$-th transformer block can be formulated as
\begin{equation}
    g^{(l)}\left(\mathbf{\Gamma}_i^{(j)}\right) = \mathrm{Norm}\left(\hat{\mathbf{\Gamma}}_i^{(j)} + \mathrm{FFN}\left(\hat{\mathbf{\Gamma}}_i^{(j)}\right)\right) 
\end{equation}
We denote the parameters of the transformer module as $\psi$.
Finally, we concatenate the outputs of the GIN and GT, leading to the following node representations of $G_i^{(j)}$:
\begin{equation}
    \mathbf{H}_i^{(j)}=g_{\psi}\circ f_{\theta}\left(\mathbf{A}_i^{(j)},\mathbf{Z}_i^{(j)}\right) \Big\|f_\theta\left(\mathbf{A}_i^{(j)},\mathbf{Z}_i^{(j)}\right), \quad i\in[N_j],\quad j\in[M].
\end{equation}
For convenience, we let $\mathcal{W}=\{\psi,\theta\}$, which is the set of all parameters of the GIN and GT.

\section{Details about Experimental Settings}

\subsection{Datasets}
\label{datasets}
The basic information and statistics of the graph datasets we used in the experiments are shown in Table \ref{tab_datasets}. In our experiments, the concatenation of the original node attributes and node labels in the datasets is used as initial input node features. 

\begin{table}[h!]
    \centering
        \caption{Dataset Statistics.}
    \label{tab_datasets}
    \resizebox{\textwidth}{!}{
    \begin{tabular}{ccccccc}
    \toprule
       Dataset  &Domain& \#Graphs & \#Avg.Nodes & \#Features & \#Classes &  Task \\
    \midrule
       ENZYMES  & Bioinformatics&600 & 32.63 & 21 & 6 & Graph Classification/Graph Clustering  \\
        NCI1 & Small molecules&4110 & 29.87 & 37 & 2 &  Graph Classification/Graph Clustering \\
        NCI109 & Small molecules&4127 & 29.68 & 38 & 2 &  Graph Classification \\
        DD & Bioinformatics&1178 & 284.32 & 89 & 2 & Graph Classification  \\
        Mutagenicity & Small molecules&4337 & 30.32 & 14 & 2 &  Graph Classification \\
        COLLAB & Social networks&5000 & 74.49 & 0 & 2 &  Graph Classification/Graph Clustering \\
        REDDIT-BINARY & Social networks&2000 & 429.63 & 0 & 2 &  Graph Classification/Graph Clustering \\
        REDDIT-MULTI & Social networks&4999 & 508.52 & 0 & 5 &  Graph Clustering \\
        IMDB-BINARY & Social networks&1000 & 19.77 & 0 & 2 &  Graph Classification \\
        IMDB-MULTI & Social networks&1500 & 13.00 & 0 & 3 &  Graph Classification \\
        Letter-med & Computer vision&2250 & 4.67 & 2 & 15 &  Graph Classification \\
        COIL-RAG & Computer vision&3900 & 3.01 & 64 & 100 &  Graph Classification \\
        Cuneiform & Computer vision&267 & 21.27 & 10 & 30 &  Graph Classification \\
    \bottomrule
    \end{tabular}}
\end{table}

\subsection{Details of Model Testing in Few-Shot Graph Classification}
\label{test_detail}
Specifically, let the dataset in the downstream task be $\mathcal{G}^{\text{Down}} = \left\{\mathcal{G}^{\text{train}}, \mathcal{G}^{\text{test}}\right\}$, where $\mathcal{G}^{\text{train}}=\left\{\left(\mathbf{A}^{\text{train}}_i, \mathbf{X}^{\text{train}}_i\right)\right\}_{i=1}^{N_\text{train}}$ and $\mathcal{G}^{\text{test}}=\left\{\left(\mathbf{A}^{\text{test}}_i, \mathbf{X}^{\text{test}}_i\right)\right\}_{i=1}^{N_\text{test}}$. For $\mathcal{G}^{\text{train}}$, applying \eqref{eq_K_construct}, \eqref{eq_K_construct_Q}, \eqref{eq_Z_KSVD}, and \eqref{eq_ZZZ} sequentially, we obtain $\mathbf{Z}^{\text{train}}$, the aligned node feature matrix of the training set, which is further modified by using Algorithm \ref{alg_DMMA}. Now we apply the pretrained model to $\mathbf{Z}^{\text{train}}$ to obtain the embedding vector of each training graph, i.e., $\mathbf{g}_i^{\text{train}}=F_{\mathcal{W},\mathcal{V}, \gamma}(\mathbf{A}_i^{\text{train}},\mathbf{Z}_i^{\text{train}})$, $i\in N_{\text{train}}$.



Let the kernel matrix of the training set be $\mathbf{K}_{\lambda_q}^{\text{train}}=\mathbf{U}\boldsymbol{\Sigma}\mathbf{V}^\top$, and the cross-kernel matrix between the test and training sets be $\mathbf{K}_{\lambda_q}^{\text{test}}$. $\mathbf{Z}_{\lambda_q}^{\text{test}} = \mathbf{K}_{\lambda_q}^{\text{test}}\mathbf{V}_{\bar{d}}\boldsymbol{\Sigma}_{\bar{d}}^{-1/2}$, $q\in[Q]$. Then we obtain $\mathbf{Z}^{\text{test}}=\left[\mathbf{Z}_{\lambda_1}^{\text{test}},\mathbf{Z}_{\lambda_2}^{\text{test}},\ldots,\mathbf{Z}_{\lambda_Q}^{\text{test}}\right]$, the aligned node feature matrix of the testing set, which is further modified by using Algorithm \ref{alg_DMMA}.
Now, similar to the training data, we have $\mathbf{g}_i^{\text{test}}=F_{\mathcal{W},\mathcal{V}, \gamma}(\mathbf{A}_i^{\text{test}},\mathbf{Z}_i^{\text{test}})$, $i\in N_{\text{test}}$. These steps are summarized in Algorithm \ref{alg_few_shot}, where the underlined values are frozen in Algorithm 1.


\begin{algorithm}[H]
    \caption{Few-shot graph classification}
    \label{alg_few_shot}
\small{
\begin{algorithmic}[1]
    \REQUIRE {$\mathcal{G}^{\text{Down}} = \big\{\mathcal{G}^{\text{train}}, \mathcal{G}^{\text{test}}\big\}$, $\big\{\mathbf{R}_\text{pre}^{(j)}\big\}_{j=1}^{M}$, $\big\{\boldsymbol{\mu}_\text{pre}^{(j)}\big\}_{j=1}^{M}$}
    \vspace{-3pt}
    \STATE Compute $\mathbf{Z}^\text{train}$, $\mathbf{Z}^\text{test}$ using \eqref{eq_Z_KSVD} and \eqref{eq_ZZZ}.
    \STATE {$\mathbf{R}^\text{train}\gets$Algorithm~\ref{alg_DMMA}$\big(\boldsymbol{\mu}^\text{train}, \big\{\boldsymbol{\mu}_\text{pre}^{(j)}\big\}_{j=1}^{M}, \big\{\underline{\mathbf{R}}_\text{pre}^{(j)}\big\}_{j=1}^{M}\big)$\\
$\mathbf{R}^\text{test}\gets$Algorithm~\ref{alg_DMMA}$\big(\boldsymbol{\mu}^\text{test}, \boldsymbol{\mu}^\text{train}, \underline{\mathbf{R}}^\text{train}, \big\{\boldsymbol{\mu}_\text{pre}^{(j)}\big\}_{j=1}^{M}, \big\{\underline{\mathbf{R}}_\text{pre}^{(j)}\big\}_{j=1}^{M}\big)$}
    \STATE {Mean alignment: $\mathbf{Z}^\text{train}\gets\mathbf{Z}^\text{train}\mathbf{R}^{\text{train}^\top}$, $\mathbf{Z}^\text{test}\gets\mathbf{Z}^\text{test}\mathbf{R}^{\text{test}^\top}$}.
    \STATE{Representation: $\mathbf{g}_i^\text{train}\gets F_{\mathcal{W}, \mathcal{V}, \gamma}\big(\mathbf{A}_i^\text{train}, \mathbf{Z}_i^\text{train}\big)$, $i\in[|\mathcal{G}^{\text{train}}|]$}
    \STATE{Train the softmax classifier $f_c$ on $\{\mathbf{g}_i^\text{train}\}$. }
    \STATE $\hat{\mathbf{y}}_i^\text{test} = f_c\circ F_{\mathcal{W}, \mathcal{V}, \gamma}\big(\mathbf{A}_i^\text{test}, \mathbf{Z}_i^\text{test}\big)$, $i\in[|\mathcal{G}^{\text{test}}|]$.    
    \ENSURE Predicted graph labels $\{\hat{\mathbf{y}}_i^\text{test}\}$
\end{algorithmic}}
\end{algorithm}

\subsection{Inplementation Details}
\label{exp_set}
In our experiments, 
we use 6 Gaussian kernels with different $\lambda_q\in\{0.25, 0.5, 1, 2, 5, 10\}$. 
For all kernel matrices and the adjacency matrix, the truncated dimension of SVD $\bar{d}$ is set as 32. 
For each global graph obtained by Gaussian kernels, we use 6-GIN encoder to encode node features from different global graphs respectively. We implement each GIN encoder with 3 graph convolutional layers. The size of each hidden layer in GIN is set to 128. 
The graph transformer module consists of 3 equally wide layers, each containing 4 attention heads, 
with the dimension of each attention head set as 48. 
In the pretraining stage, all modules are optimized using Adam optimizer \citep{kinga2015method} with fixed learning rate $\alpha_1=0.0005$ and a weight decay factor of $10^{-5}$, trained for 50 epochs. 
The Gaussian kernel parameter $\gamma$ in the reference layer employs a separate learning rate $\alpha_2=0.1$. 
The batch size for all datasets is fixed to 64.

\paragraph{Few-shot learning settings} In the downstream tasks of few-shot graph classification, 
the classifier is a softmax classifier, which follows the setting in EdgePrompt \citep{fu2025edge}. Regarding data splitting, we randomly choose 50 graphs in each class for training, and the remaining samples are used for testing. 
The number of epochs is set to 500, and the learning rate of the classifier is set to 0.001 for graph few-shot training. 
As the k-shot tasks are balanced classification, we employ accuracy as the evaluation metric following EdgePrompt. \\
For ProNoG, we used the provided checkpoint from the official open-source repository as the pretrained model. For BRIDGE, GFT, and RiemannGFM, we followed the recommended settings in their paper to pretrain the model. The official repository of RiemannGFM does not support graph classification, we extend it to graph level task by using mean pooling.
In the downstream adaptation stage, we adopted the recommended hyperparameters for both methods. Experiments on COLLAB, REDDIT-B, IMDB-B, IMDB-M, and Letter-med are conducted under 50-shot setting following experiments in our paper. For COIL-RAG and Cuneiform, due to a lack of enough samples per class, we adopt 5-shot and 1-shot settings, respectively. Since the three baselines do not handle datasets without node attributes, to ensure fair comparison, we handle social network datasets without node attributes (COLLAB, REDDIT-B, IMDB-B, IMDB-M) uniformly across all methods. Following our proposed approach, we generate node attributes using truncated SVD on A+I (adjacency matrix with self-loops) as input for all baseline models. All the results of our method are obtained from models trained on 5 bio-chemical datasets (ENZYMES, DD, NCI1, NCI109, Mutagenicity) mentioned in the main part of the paper, which differ significantly from social networks and computer vision data in both semantics and structure. \\

In cross domain experiments, for graph-prompt-based baselines, which are designed to train and test within a single dataset, we preprocess the raw node attributes of each target dataset by PCA truncation to 32 dimensions; if the feature dimension is smaller than 32, we zero-pad it to 32 dimensions. Other baselines follow the same cross-domain protocol as GraphVec: they are pretrained on the five bio-chemical datasets (ENZYMES, DD, NCI1, NCI109, Mutagenicity) and evaluated on target graphs from other domains. In the downstream adaptation stage, we adopt the recommended hyperparameters for each baseline. Experiments on COLLAB, REDDIT-B, IMDB-B, IMDB-M, and Letter-med are conducted under the 50-shot setting. For COIL-RAG and Cuneiform, due to a lack of enough samples per class, we adopt 5-shot and 1-shot settings, respectively. SAMGPT is trained for 20 epochs, GOFA is fine-tuned for one epoch from the official checkpoint, and the remaining baselines are trained for 50 epochs. For LLM-based methods, raw graphs must first be converted into text-attributed graphs. Since the node features in our datasets are numerical rather than natural-language attributes, this conversion may produce highly similar text embeddings. In practice, we observe different degrees of representation collapse, which may explain the weak performance of LLM-based methods on several datasets. For datasets without node attributes, we handle them uniformly across all methods by generating structural node attributes using truncated SVD on $\mathbf{A}+\mathbf{I}$, following the same setting as GraphVec. \\

\textbf{Unsupervised pretraining settings}~~Following \citet{you2020graph}, we construct 3 augmentations using dropping nodes with a ratio of 0.1, permuting edges with a ratio of 0.1, and extracting subgraph for each graph before the global multi-graph construction and max-density mean alignment. 

We conduct all experiments on a 14 vCPU Intel(R) Xeon(R) Gold 6348 CPU with one Nvidia A800-80G GPU, CUDA 11.8. We repeat five times with different random seeds and report the average results with standard deviation calculated by the numpy library function.

\section{More Results}

\subsection{Intuitive Example of the Global Graph Construction}
Here we provide an intuitive example of synthetic data to show that our graph construction could be domain-agnostic. Suppose we have four datasets $\mathcal{D}_1, \mathcal{D}_2, \mathcal{D}_3, \mathcal{D}_4$ drawn from the following four distributions respectively: 1) $\mathcal{N}(\mathbf{0}, \mathbf{I}_2)$ (2D Gaussian); 2) $\mathcal{N}(\mathbf{0}, \mathbf{I}_2)$ (2D Gaussian); 3) $\mathcal{N}(\mathbf{1}, 2\mathbf{I}_3)$ (3D Gaussian); 4) $\mathcal{N}(-\mathbf{2}, \mathbf{I}_2) + \mathcal{N}(\mathbf{2}, \mathbf{I}_2)$ (2D Gaussian mixture model). Thus, $\mathcal{D}_2$ can be regarded as a dataset from the same domain as $\mathcal{D}_1$, while $\mathcal{D}_3$ and $\mathcal{D}_4$ are from different domains. We calculate the Gromov-Wasserstein distances between the weighted graphs constructed from the four datasets using the method proposed in our paper. The results are shown in the following table (average of 5 runs). We see that the distance between $\mathcal{D}_1$ and $\mathcal{D}_3$ is close to that between $\mathcal{D}_1$ and $\mathcal{D}_2$, meaning that the features generated by our multi-graph alignment method are indeed domain agnostic. The distance between $\mathcal{D}_1$ and $\mathcal{D}_4$ is much larger than that between $\mathcal{D}_1$ and $\mathcal{D}_3$, meaning that our method can effectively identify the topological difference between the datasets.

\begin{table}[h!]
\centering
\caption{Gromov-Wasserstein distances between synthetic datasets (average of 5 runs)}
\begin{tabular}{lrrrr}
\toprule
 & \textbf{$\mathcal{D}_1 \sim \mathcal{N}(\mathbf{0}, \mathbf{I}_2)$} & \textbf{$\mathcal{D}_2 \sim \mathcal{N}(\mathbf{0}, \mathbf{I}_2)$} & \textbf{$\mathcal{D}_3 \sim \mathcal{N}(\mathbf{1}, 2\mathbf{I}_3)$} & \textbf{$\mathcal{D}_4 \sim \mathcal{N}(-\mathbf{2}, \mathbf{I}_2) + \mathcal{N}(\mathbf{2}, \mathbf{I}_2)$} \\
\midrule
$\mathcal{D}_1 \sim \mathcal{N}(\mathbf{0}, \mathbf{I}_2)$ & 0 & 0.004 & 0.015 & 0.069 \\
$\mathcal{D}_2 \sim \mathcal{N}(\mathbf{0}, \mathbf{I}_2)$ & --- & 0 & 0.015 & 0.069 \\
$\mathcal{D}_3 \sim \mathcal{N}(\mathbf{1}, 2\mathbf{I}_3)$ & --- & --- & 0 & 0.081 \\
$\mathcal{D}_4 \sim \mathcal{N}(-\mathbf{2}, \mathbf{I}_2) + \mathcal{N}(\mathbf{2}, \mathbf{I}_2)$ & --- & --- & --- & 0 \\
\bottomrule
\end{tabular}
\end{table}

\subsection{Full Results of Table \ref{tab:fewshot_res}}
The full compared numbers in the baseline of Table \ref{tab:fewshot_res} are provided in Table \ref{tab:fullres}.
\label{app_fullres}
\begin{table*}
    \centering
        \caption{50-shot graph classification performance comparison with different pretrained models. We color the \textcolor{red}{\textbf{best}} and \textcolor{orange}{\textbf{second best}} models. The compared numbers of in-domain experiments are from EdgePrompt \citep{fu2025edge}. }
        \label{tab:fullres}
    \resizebox{\textwidth}{!}{
\begin{tabular}{c|c|c|c|c|c|c|c}
\toprule pretraining & Tuning Methods & ENZYMES & DD & NCI1 & NCI109 & Mutagenicity  &Average\\ 
\midrule
\multirow{7}{*}{GraphCL} & Classifier Only & $30.50_{ \pm 1.16}$ & $62.89_{ \pm 2.19}$ & $62.49_{ \pm 1.95}$ & $61.68_{ \pm 0.93}$ & $66.62_{ \pm 1.87}$  &$56.84$
\\
 & GraphPrompt \citep{liu2023graphprompt}& $27.83_{ \pm 1.61}$ & $64.33_{ \pm 1.79}$ & $63.19_{ \pm 1.71}$ & $62.18_{ \pm 0.48}$ & $6 7 . 6 2_{ \pm 0 . 6 5}$ &$57.03$
\\
 & ALL-in-one \citep{sun2023all}& $25.92_{ \pm 0.55}$ & $66.54_{ \pm 1.82}$ & $57.52_{ \pm 2.61}$ & $62.74_{ \pm 0.78}$ & $63.43_{ \pm 2.53}$  &$55.23$
\\
 & GPF \citep{fang2023universal}& $30.08_{ \pm 1.25}$ & $64.54_{ \pm 2.22}$ & $62.66_{ \pm 1.83}$ & $62.29_{ \pm 0.90}$ & $66.54_{ \pm 1.85}$  &$57.22$
\\
 & GPF-plus \citep{fang2023universal}& $31.00_{ \pm 1.50}$ & $67.26_{ \pm 2.29}$ & $64.56_{ \pm 1.10}$ & $62.84_{ \pm 0.22}$ & $66.82_{ \pm 1.63}$  &$58.50$
\\
& EdgePrompt \citep{fu2025edge} & $29.50_{ \pm 1.57}$ & $64.16_{ \pm 2.13}$ & $63.05_{ \pm 2.11}$ & $62.59_{ \pm 0.93}$ & $66.87_{ \pm 1.88}$  &$57.23$
\\
 & EdgePrompt+ \citep{fu2025edge}& ${3 4 . 0 0}_{ \pm 1 . 2 5}$& $6 7 . 9 8_{ \pm 2 . 0 5}$& $6 6 . 3 0_{ \pm 2 . 5 4}$ & $66.52_{ \pm 0.91}$ & $67.47_{ \pm 2.37}$ &$60.45$
\\
\midrule \multirow{7}{*}{SimGRACE} & Classifier Only & $27.07_{ \pm 1.04}$ & $61.77_{ \pm 2.40}$ & $61.27_{ \pm 3.64}$ & $62.12_{ \pm 1.10}$ & $67.36_{ \pm 0.71}$  &$55.92$
\\
 & GraphPrompt \citep{liu2023graphprompt}& $26.87_{ \pm 1.47}$ & $62.58_{ \pm 1.84}$ & $62.45_{ \pm 1.52}$& $62.41_{ \pm 0.69}$ & $68.03_{ \pm 0.78}$  &$56.47$
\\
 & ALL-in-one \citep{sun2023all}& $25.73_{ \pm 1.18}$ & $65.16_{ \pm 1.47}$ & $58.52_{ \pm 1.59}$ & $62.01_{ \pm 0.66}$ & $64.43_{ \pm 1.00}$  &$55.17$
\\
 & GPF \citep{fang2023universal}& $28.53_{ \pm 1.76}$ & $65.64_{ \pm 0.70}$ & $61.45_{ \pm 3.13}$ & $61.90_{ \pm 1.26}$ & $67.19_{ \pm 0.74}$  &$56.94$
\\
 & GPF-plus \citep{fang2023universal}& $27.33_{ \pm 2.01}$ & $67.20_{ \pm 1.56}$& $61.61_{ \pm 2.89}$ & $62.84_{ \pm 0.23}$ & $67.69_{ \pm 0.64}$  &$57.33$
\\
 & EdgePrompt \citep{fu2025edge}& $29.33_{ \pm 2.30}$ & $63.97_{ \pm 2.14}$ & $62.02_{ \pm 3.02}$ & $62.02_{ \pm 1.03}$ & $67.55_{ \pm 0.85}$  &$56.98$
\\
 & EdgePrompt+ \citep{fu2025edge}& $3 2 . 6 7_{ \pm 2 . 5 3}$& $67 . 72_{ \pm 1.62}$& $\textcolor{orange}{\mathbf{67.07}}_{ \pm 1 . 9 6}$& $\textcolor{orange}{\mathbf{66.53}}_{ \pm 1.30}$& ${{68.31}}_{ \pm 1 .36}$ &$60.46$
\\
\midrule \multirow{7}{*}{EP-GPPT} & Classifier Only & $29.08_{ \pm 1.35}$ & $62.12_{ \pm 2.82}$ & $56.85_{\pm 4.35}$& $62.27_{ \pm 0.78}$ & $66.30_{ \pm 1.78}$  &$55.32$
\\
 & GraphPrompt \citep{liu2023graphprompt}& $26.67_{ \pm 1.60}$ & $61.61_{ \pm 1.91}$ & $58.77_{ \pm 0.97}$ & $62.16_{ \pm 0.89}$ & $66.37_{ \pm 1.17}$  &$55.12$
\\
 & ALL-in-one \citep{sun2023all}& $24.92_{ \pm 1.33}$ & $63.61_{ \pm 2.12}$ & $59.14_{ \pm 2.12}$ & $59.70_{ \pm 1.37}$ & $64.86_{ \pm 1.60}$  &$54.45$
\\
 & GPF \citep{fang2023universal}& $28.33_{ \pm 1.73}$ & $63.48_{ \pm 2.08}$ & $58.14_{ \pm 4.16}$ & $62.52_{ \pm 1.39}$ & $66.10_{ \pm 0.96}$  &$55.71$
\\
 & GPF-plus \citep{fang2023universal}& $29.25_{ \pm 1.30}$ & $66.92_{ \pm 2 .34}$& $62.93_{ \pm 3.23}$ & $64.13_{ \pm 1.42}$ & $67.57_{ \pm 1.45}$  &$58.16$
\\
 & EdgePrompt \citep{fu2025edge}& $28.33_{ \pm 3.41}$ & $64.03_{ \pm 2.26}$ & $59.85_{ \pm 3.15}$ & $62.98_{ \pm 1.44}$ & $66.36_{ \pm 1.22}$  &$56.31$
\\
 & EdgePrompt+ \citep{fu2025edge}& $3 2 . 7 5_{ \pm 2 . 2 6}$& $66.16_{ \pm 1.60}$& $63.58_{ \pm 2.07}$ & $6 5 . 1 5_{ \pm 1 . 6 0}$& ${{6 8 . 3 5}}_{ \pm 1 . 5 7}$&$59.20$
\\
\midrule \multirow{7}{*}{\makecell{EP-\\GraphPrompt}} & Classifier Only & $31.33_{ \pm 3.22}$ & $62.58_{ \pm 2.40}$ & $62.09_{ \pm 2.31}$ & $60.19_{ \pm 1.71}$ & $65.13_{ \pm 0.81}$  &$55.32$
\\
 & GraphPrompt \citep{liu2023graphprompt}& $30.20_{ \pm 1.93}$ & $64.72_{ \pm 1.98}$ & $62.57_{ \pm 1.45}$ & $62.32_{ \pm 0.95}$ & $65.85_{ \pm 0.65}$  &$57.13$
\\
 & ALL-in-one \citep{sun2023all}& $29.07_{ \pm 1.16}$ & $65.60_{ \pm 2.38}$ & $58.67_{ \pm 2.42}$ & $57.69_{ \pm 1.08}$ & $64.66_{ \pm 0.76}$  &$55.14$
\\
 & GPF \citep{fang2023universal}& $30.93_{ \pm 1.76}$ & $66.21_{ \pm 1.66}$ & $61.80_{ \pm 2.78}$ & $62.27_{ \pm 1.18}$ & $65.61_{ \pm 0.59}$  &$57.36$
\\
 & GPF-plus \citep{fang2023universal}& $30.67_{ \pm 3.06}$ & $67.50_{ \pm 2.45}$& $62.59_{ \pm 2.09}$ & $61.98_{ \pm 1.60}$ & $65.51_{ \pm 1.10}$  &$57.65$
\\
 & EdgePrompt \citep{fu2025edge}& $30.80_{ \pm 2.09}$ & $65.87_{ \pm 1.35}$ & $61.75_{ \pm 2.49}$ & $62.33_{ \pm 1.65}$ & $65.77_{ \pm 0.90}$  &$57.30$
\\
 & EdgePrompt+ \citep{fu2025edge}& $33.27_{ \pm 2.71}$ & $67.47_{ \pm 2.14}$& $6 5 . 0 6_{ \pm 1.84}$& $6 4 . 6 4_{ \pm 1.57}$& $66 . 42_{ \pm 1.31}$ &$59.37$\\
\midrule
\multirow{4}{*}{cross-domain}&{GCN \citep{kipf2016semi}} & $43.33_{ \pm 1.05}$ & $65.84_{ \pm 2.77}$ & $61.36_{ \pm 2.00}$& $62.17_{ \pm 0.66}$& $60.46_{ \pm 1.75}$  &$58.63$\\
&{BRIDGE \citep{yuan2025much}} & $36.67_{ \pm 5.96}$ & $64.95_{ \pm 3.38}$ & $63.50_{ \pm 2.27}$& $61.78_{ \pm 1.63}$& $65.12_{ \pm 2.83}$  &$58.40$\\
&{GFT \citep{wang2024gft}} & $34.61_{ \pm 3.12}$ & $56.00_{ \pm 1.77}$ & $59.16_{ \pm 6.25}$& $60.50_{ \pm 2.71}$& $67.82_{ \pm 3.18}$  &$55.61$\\
&{RiemannGFM \citep{sun2025riemanngfm}} & $34.27_{ \pm 1.72}$ & $68.74_{ \pm 1.31}$ & $55.10_{ \pm 2.24}$& $59.86_{ \pm 1.30}$& $62.56_{ \pm 4.04}$  &$56.11$\\

\midrule
\multicolumn{2}{c|}{\textbf{GraphVec}} & $\textcolor{red}{\mathbf{51.00}}_{ \pm 3.22}$ & $\textcolor{red}{\mathbf{75.94}}_{ \pm 2.70}$ & $\textcolor{red}{\mathbf{67.32}}_{ \pm 1.51}$& $\textcolor{red}{\mathbf{67.90}}_{ \pm 1.67}$& $\textcolor{red}{\mathbf{68.57}}_{ \pm 1.62}$  &$\textcolor{red}{\mathbf{66.14}}$\\
\multicolumn{2}{c|}{\textbf{Unsupervised GraphVec}} & ${{48.33}}_{ \pm 2.36}$ & $\textcolor{orange}{\mathbf{74.02}}_{ \pm 1.26}$ & $66.11_{ \pm 2.30}$& $64.34_{ \pm 2.38}$& $\textcolor{orange}{\mathbf{68.38}}_{ \pm 2.88}$  &$\textcolor{orange}{\mathbf{64.23}}$\\

\multicolumn{2}{c|}{\textbf{GraphVec} w/o mean alignment} & $\textcolor{orange}{\mathbf{49.33}}_{ \pm 1.48}$ & ${73.14}_{ \pm 1.16}$ & $65.80_{ \pm 1.40}$& $65.21_{ \pm 2.05}$& $67.00_{ \pm 1.31}$  &${{64.09}}$
\\
\bottomrule
\end{tabular}}
\vspace{-5pt}
\end{table*}

\subsection{Extension to Node Classification Task}
\label{app_nodecls}
Our model can also be extended to node-level tasks by retraining it with a node-level contrastive loss objective. Specifically, we maintain the construction of global multi-graphs and the mean alignment module from our original framework, while removing the reference distribution layers since graph-level representations are not required here. The node embeddings are obtained directly from the outputs of both the Graph Transformer and GIN modules. These embeddings are then fed into a linear classifier to perform the downstream node classification task.

The model was evaluated on 4 node classification datasets: Cora, CiteSeer, PubMed, and ogbn-arxiv. We follow a leave-one-dataset-out protocol: for each target dataset, GraphVec is pretrained on the remaining three datasets and then evaluated on the target dataset under the 5-shot setting. The node embeddings produced by the pretrained encoder are used to train a linear classifier with the labeled nodes, and the remaining nodes are used for testing. The results are shown as follows. This adaptation demonstrates that GraphVec can also be effectively extended to node-level tasks. 

\begin{table}[h!]
\centering
\caption{5-shot node classification results on node-level tasks}
\begin{tabular}{lrrrr}
\toprule
\textbf{Methods} & \textbf{Cora} & \textbf{CiteSeer} & \textbf{Pubmed} & \textbf{ogbn-arxiv} \\
\midrule
GPPT \citep{sun2022gppt} & 41.28 $\pm$ 6.24 & 35.32 $\pm$ 1.27 & 53.41 $\pm$ 3.99 & 17.73 $\pm$ 1.66 \\
GraphPrompt \citep{liu2023graphprompt} & 31.65 $\pm$ 3.33 & 26.98 $\pm$ 1.24 & 44.18 $\pm$ 5.57 & 16.11 $\pm$ 1.42 \\
ALL-in-one \citep{sun2023all} & 31.57 $\pm$ 2.86 & 29.76 $\pm$ 1.53 & 46.89 $\pm$ 5.35 & 17.89 $\pm$ 1.21 \\
GPF \citep{fang2023universal} & 37.56 $\pm$ 3.81 & 29.74 $\pm$ 1.73 & 48.16 $\pm$ 3.32 & 17.64 $\pm$ 1.18 \\
GPF-plus \citep{fang2023universal} & 28.87 $\pm$ 3.18 & 26.65 $\pm$ 1.91 & 43.02 $\pm$ 4.59 & 17.39 $\pm$ 1.27 \\
EdgePrompt \citep{fu2025edge} & 37.26 $\pm$ 4.53 & 29.83 $\pm$ 1.01 & 45.49 $\pm$ 3.27 & 17.82 $\pm$ 1.59 \\
EdgePrompt + \citep{fu2025edge} & 56.41 $\pm$ 3.62 & 43.49 $\pm$ 2.62 & 61.51 $\pm$ 4.91 & 17.78 $\pm$ 2.12 \\
\midrule
\textbf{GraphVec} & \textbf{58.66 $\pm$ 1.51} & \textbf{45.45 $\pm$ 1.26} & \textbf{65.72 $\pm$ 2.43} & \textbf{23.75 $\pm$ 1.39} \\
\bottomrule
\end{tabular}
\end{table}
\subsection{The impact of Nyström Approximation }
\label{app_nystrom}
To address the computational complexity associated with large-scale graphs, we employ the Nyström approximation during pretraining. To systematically evaluate its impact, we pre-train GraphVec using the Nyström method with varying sample sizes, while keeping the pretraining datasets consistent with our main experiments. In the following table, we specifically report the wall-clock time required for constructing the global multi-graphs, and evaluate downstream performance via 50-shot graph classification accuracy on the Letter-med dataset. We see that, when the sample size is less than 2000, the time used for constructing global multi-graphs is acceptable. When adding the sample size to 4000, the wall-clock time increases sharply while the performance improvement is marginal (less than 0.1\%).
\begin{table}[h!]
\centering
\caption{Accuracy--efficiency trade-off of Nystr\"om approximation on Letter-med.}
\label{tab:nystrom_tradeoff}
\begin{tabular}{lcccc}
\toprule
\# Nystr\"om samples & 100 & 1000 & 2000 & 4000 \\
\midrule
Wall-clock time (s) & 34.82 & 48.57 & 63.55 & 716.61 \\
Accuracy (\%) & $81.50 \pm 2.21$ & $82.47 \pm 0.92$ & $84.27 \pm 1.10$ & $84.33 \pm 1.60$ \\
\bottomrule
\end{tabular}
\end{table}

\subsection{Impact of Number of pretraining Datasets}\label{app_impact}
Figure \ref{fig:number_pretrain} shows the change of classification accuracy when the number of datasets used in pretraining increases from $1$ to $4$. We can see that with more datasets used in pretraining, the performance in downstream tasks becomes better. This result indicates that the generalization ability of graph embeddings generated by our GraphVec can benefit from the increase in the number of training datasets, which is an important capability for cross domain pretrained graph model. It can also be observed that even using model pretrained on only 1 dataset, GraphVec still outperforms other baselines shown in Table \ref{tab:fewshot_res}.
\begin{figure}[h!]
    \centering
    \includegraphics[width=0.6\linewidth]{ 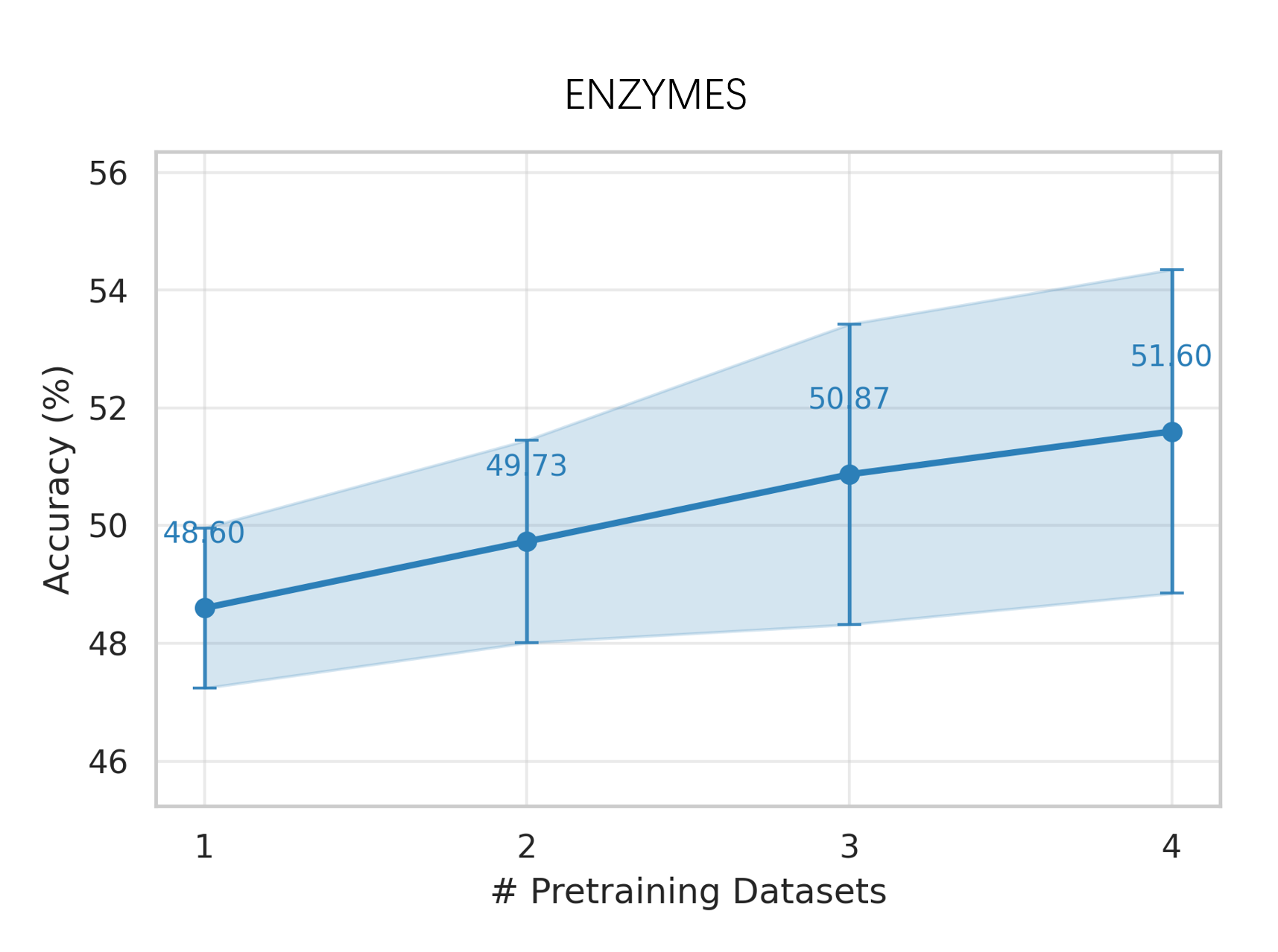}
    \caption{The change of classification accuracy in ENZYMES when the number of datasets used in pretraining increases from $1$ to $4$.}
    \label{fig:number_pretrain}
\end{figure}

\subsection{Few-shot Learning with Fewer Labeled Samples}\label{app_1_shot}
As shown in Figure \ref{fig:app_1_shot}, the classification accuracy of our method GraphVec increases as the number of labeled samples increases. Our GraphVec with 20-shot even outperforms the competitors with 50-shot in Table \ref{tab:fewshot_res} of the main paper. 
\begin{figure}[h!]
    \centering
    \includegraphics[width=0.7\linewidth]{ 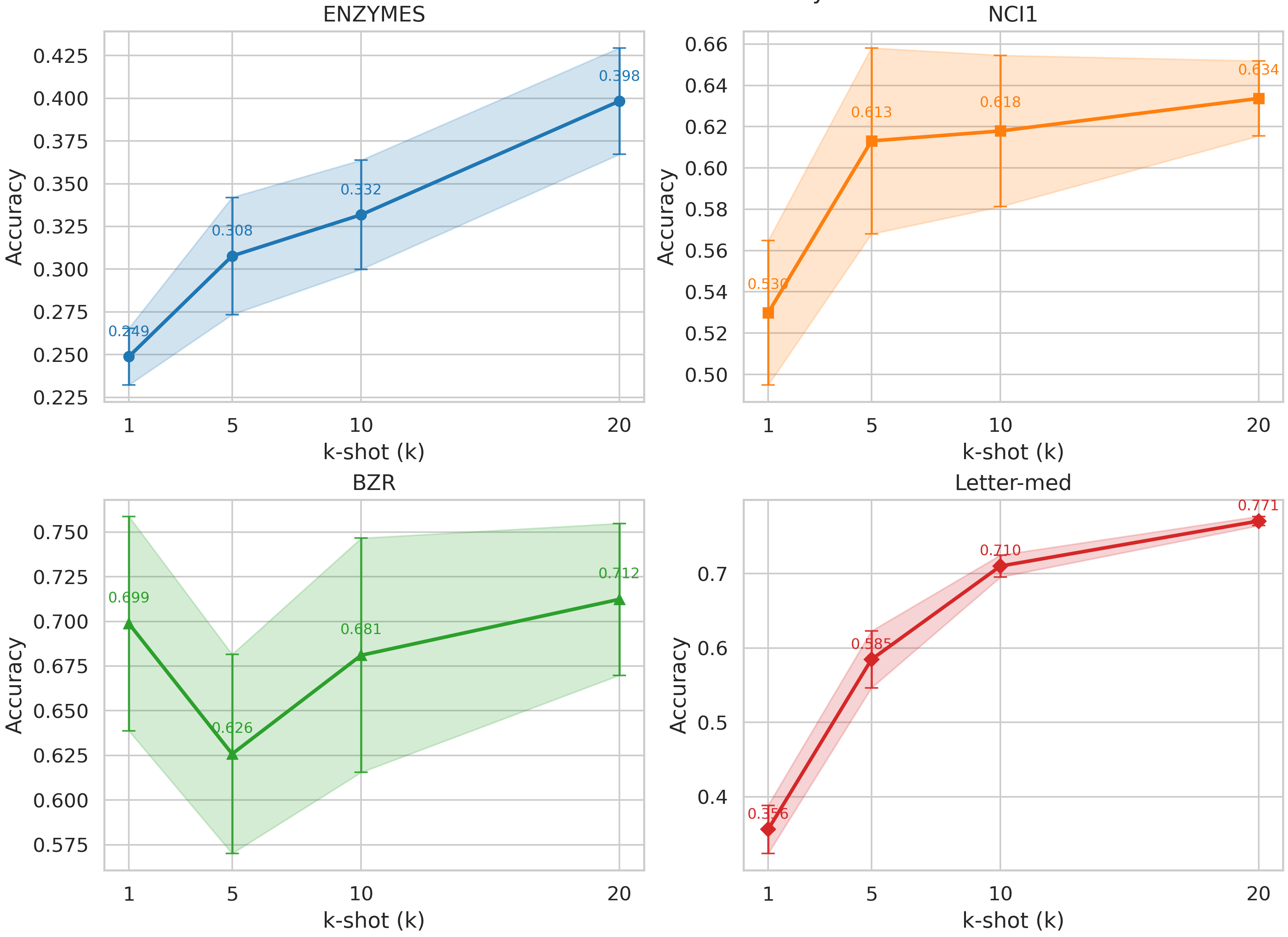}
    \caption{Classification accuracy trends of our method GraphVec with varying $k$ values in few-shot learning across four datasets (PROTEINS, NCI109, DD, and Mutagenicity), shaded area represents standard deviation}
    \label{fig:app_1_shot}
\end{figure}

\subsection{Comparison between Original Attributes and Similarity-based Features}
\label{app_attr_semantics}
\begin{table}[ht]
\centering
\caption{Supervised graph classification accuracy of a standard GIN using original node attributes and our similarity-based features. The train/validation/test split is 8/1/1.}
\label{tab:semantic_feature_compare}
\begin{tabular}{lcccc}
\toprule
Feature Type & PROTEINS & NCI1 & NCI109 & ENZYMES \\
\midrule
Original node attributes & $72.81 \pm 4.08$ & $78.88 \pm 1.50$ & $81.60 \pm 1.62$ & $66.33 \pm 6.09$ \\
Similarity-based features & $71.43 \pm 2.33$ & $80.00 \pm 1.99$ & $78.31 \pm 2.24$ & $67.00 \pm 5.81$ \\
\bottomrule
\end{tabular}
\end{table}

Table \ref{tab:semantic_feature_compare} shows that replacing the original attributes with the features produced by our global multi-graph construction leads to comparable supervised performance on datasets with semantically meaningful node attributes. The transformed features slightly outperform the raw attributes on NCI1 and ENZYMES, while remaining competitive on PROTEINS and NCI109. This result suggests that the proposed feature construction does not simply discard useful information; rather, it preserves a substantial portion of the task-relevant signal while converting heterogeneous attributes into a more transferable relational representation. In these experiments, mean alignment is applied to remove the sign ambiguity induced by SVD.

\subsection{Comparison with Backbone-only Variants}

\begin{table}[ht]
\centering
\caption{50-shot cross-dataset graph classification accuracy on the five biochemical datasets. GraphVec denotes the full model, while GIN and GT are trained under the same evaluation protocol.}
\label{tab:backbone_compare}
\begin{tabular}{lccccc}
\toprule
Method & ENZYMES & DD & NCI1 & NCI109 & Mutagenicity \\
\midrule
GIN & 43.53 & 60.53 & 59.45 & 60.84 & 64.34 \\
GT & 44.53 & 65.52 & 54.43 & 55.20 & 61.48 \\
GraphVec & 51.00 & 75.94 & 67.32 & 67.90 & 68.57 \\
\bottomrule
\end{tabular}
\end{table}

Table \ref{tab:backbone_compare} addresses a potential confound: whether the performance gain mainly comes from the expressive backbone rather than from the proposed pretraining and alignment design. The answer is negative. Although GraphVec is built on top of GIN and graph transformer components, the full model consistently outperforms either backbone alone on all five datasets, with particularly large margins on DD, NCI1, and NCI109. Therefore, the improvement cannot be attributed only to backbone choice; the multi-graph feature construction, mean alignment, and reference distribution module contribute materially to the final performance.

\subsection{Comparison with Simpler Cross-domain Feature Alternatives}

\begin{table}[ht]
\centering
\caption{Cross-domain feature ablation on computer vision datasets. All models use the same mean alignment, backbone, reference layers, and training strategy; only the input feature construction is changed.}
\label{tab:cross_domain_feature_ablation}
\begin{tabular}{lccc}
\toprule
Feature Type & Letter-med & COIL-RAG & Cuneiform \\
\midrule
Original node attributes & $55.33 \pm 1.78$ & $58.12 \pm 1.97$ & $33.56 \pm 3.52$ \\
PCA features & $56.73 \pm 1.93$ & $10.29 \pm 0.45$ & $28.05 \pm 7.07$ \\
Global multi-graph features & $85.60 \pm 1.44$ & $74.20 \pm 0.77$ & $55.86 \pm 8.15$ \\
\bottomrule
\end{tabular}
\end{table}

Table \ref{tab:cross_domain_feature_ablation} further shows that the advantage of GraphVec is not explained by dimensional alignment alone. When the global multi-graph features are replaced by zero-padded original attributes or PCA-based features, while keeping all other components unchanged, cross-domain performance drops substantially on all three computer vision datasets. The degradation is especially severe for PCA on COIL-RAG and Cuneiform, indicating that simple linear dimension reduction does not yield a transferable relational space. In contrast, the proposed global multi-graph construction provides much stronger cross-domain invariance, which supports the role of kernelized sample relationships as the key mechanism for bridging domain-specific feature spaces.

\subsection{Evaluation on Generated Node Attributes $\text{SVD}(\mathbf{A}+\mathbf{I})$}
For the dataset without original node attributes, the classification mainly relies on discriminating between different structures of graphs. While the topologically derived features may not carry explicit domain semantics like chemical properties or pixel coordinates, they can be viewed as a form of generic node attribute derived from the graph connectivity. The core of our methodology, specifically the global multi-graph, is designed to bridge the inherent semantic gaps between different domains, regardless of whether the original features are rich in semantics or purely structural.

To evaluate the impact of the feature generation method , we conducted a controlled experiment to replace truncated SVD with two kinds of node centrality. The results are shown in the Table \ref{tab:gen_nodeattr}.

\begin{table}[ht]
\centering
\caption{Evaluation on Generated Node Attributes.}
\label{tab:gen_nodeattr}
\begin{tabular}{lcccc}
\toprule
 & Degree Centrality & Betweenness Centrality & Degree + Betweenness & Our Method \\
\midrule
REDDIT - B & 73.95 $\pm$ 2.30 & 74.92 $\pm$ 1.68 & 77.79 $\pm$ 2.76 & 81.52 $\pm$ 1.50 \\
\bottomrule
\end{tabular}
\end{table}

As shown in the table, our method demonstrates a clear advantage over using only a single type of centrality, and also achieves a marginal improvement compared to combining both centrality measures. It is worth noting that the attributes generated by our approach can be viewed as a form of structural encoding. While other types of structural encodings may also be effective, the performance gain observed here can be largely attributed to our proposed global multi-graph construction, which enhances the model's ability to capture feature information from different spaces.

\subsection{More Graph Clustering Results}
\label{app_more_clus}
To further evaluate the performance of GraphVec in graph clustering task, we conduct experiments on 4 more datasets. The results are shown in Table \ref{tab:more_cluster_res}. We also provide the full version of Table \ref{tab:cluster_res} with NMI in Table \ref{tab:full_cluster_res}
\begin{table}[h!]
\centering
\caption{Graph clustering results on PTC-MM, MUTAG, COX2 and BZR}
\label{tab:more_cluster_res}
\resizebox{1.1\textwidth}{!}{ 
\begin{tabular}{l *{12}{c}}
\toprule
\multirow{2}{*}{Dataset} & \multicolumn{3}{c}{PTC-MM} & \multicolumn{3}{c}{MUTAG} & \multicolumn{3}{c}{COX2} & \multicolumn{3}{c}{BZR} \\
\cmidrule(lr){2-4} \cmidrule(lr){5-7} \cmidrule(lr){8-10} \cmidrule(lr){11-13}
 & ACC & NMI & ARI & ACC & NMI & ARI & ACC & NMI & ARI & ACC & NMI & ARI \\
\midrule
GraphCL+SC & $62.09 \pm 0.56$ & $2.14 \pm 0.43$ & $3.36 \pm 0.87$ & $73.22 \pm 2.66$ & $\mathbf{32.19} \pm 2.05$ & $23.44 \pm 2.45$ & $75.01 \pm 2.12$ & $1.24 \pm 0.37$ & $\mathbf{2.39} \pm 2.28$ & $72.88 \pm 1.66$ & $\mathbf{1.90} \pm 0.38$ & $\mathbf{3.47} \pm 0.59$ \\
GWF \citep{xu2022representing}+SC & $53.02 \pm 1.66$ & $0.36 \pm 0.28$ & $0.21 \pm 0.09$ & $73.92 \pm 4.30$ & $18.35 \pm 3.85$ & $24.48 \pm 4.69$ & $58.83 \pm 4.46$ & $1.16 \pm 0.41$ & $1.45 \pm 1.21$ & $52.76 \pm 0.80$ & $3.47 \pm 1.16$ & $-0.71 \pm 0.32$ \\
GLCC \citep{ju2023glcc}& $61.61 \pm 0.24$ & $0.63 \pm 0.41$ & $1.24 \pm 1.38$ & $71.99 \pm 3.08$ & $13.18 \pm 6.93$ & $16.89 \pm 8.28$ & $77.37 \pm 1.11$ & $0.02 \pm 0.03$ & $-0.30 \pm 0.42$ & $63.62 \pm 9.79$ & $1.18 \pm 0.60$ & $1.12 \pm 0.97$ \\
Our Method & $\mathbf{65.74} \pm 0.00$ & $\mathbf{4.35} \pm 0.00$ & $\mathbf{6.31} \pm 0.00$ & $\mathbf{81.38} \pm 0.00$ & $31.00 \pm 0.00$ & $\mathbf{38.96} \pm 0.00$ & $\mathbf{78.58} \pm 0.00$ & $\mathbf{2.37} \pm 0.00$ & $2.19 \pm 0.00$ & $\mathbf{77.28} \pm 0.00$ & $0.16 \pm 0.00$ & $1.50 \pm 0.00$ \\
\bottomrule
\end{tabular}
}
\end{table}
\begin{table*}[htbp]
\centering
\caption{Graph clustering performance on ENZYMES, NCI1, COLLAB, REDDIT-BINARY, REDDIT-MULTI. The comparison numbers are from AMGC \citep{yang2025towards}. }
\label{tab:full_cluster_res}
\resizebox{\textwidth}{!}{
\begin{tabular}{l|ccc|ccc|ccc|ccc|ccc}
\toprule
Method & \multicolumn{3}{c|}{ENZYMES} & \multicolumn{3}{c|}{NCI1} & \multicolumn{3}{c|}{COLLAB} & \multicolumn{3}{c|}{REDDIT-BINARY} & \multicolumn{3}{c}{REDDIT-MULTI} \\
 & ACC & NMI & ARI & ACC & NMI & ARI & ACC & NMI & ARI & ACC & NMI & ARI & ACC & NMI & ARI \\
\midrule
RW +SC & $17.0 _{\pm 0.0}$ & $0.7 _{\pm 0.0}$ & $0.3 _{\pm 0.0}$ & N/A & N/A & N/A & N/A & N/A & N/A & N/A & N/A & N/A & N/A & N/A & N/A \\
WL +SC & $21.0 _{\pm 0.0}$ & $3.1 _{\pm 0.0}$ & $1.5 _{\pm 0.0}$ & $50.1 _{\pm 0.0}$ & $0.0 _{\pm 0.0}$ & $0.0 _{\pm 0.0}$ & $53.2 _{\pm 0.0}$ & $2.0 _{\pm 0.0}$ & $0.5 _{\pm 0.0}$ & $57.6 _{\pm 0.0}$ & $9.0 _{\pm 0.0}$ & $2.2 _{\pm 0.0}$ & $18.7 _{\pm 0.0}$ & $9.0 _{\pm 0.0}$ & $4.0 _{\pm 0.0}$ \\
WL-OA +SC & $20.0 _{\pm 0.0}$ & $1.4 _{\pm 0.0}$ & $0.3 _{\pm 0.0}$ & $53.2 _{\pm 0.0}$ & $0.9 _{\pm 0.0}$ & $0.8 _{\pm 0.0}$ & $54.2 _{\pm 0.0}$ & $0.2 _{\pm 0.0}$ & $2.6 _{\pm 0.0}$ & $53.8 _{\pm 0.0}$ & $5.6 _{\pm 0.0}$ & $3.8 _{\pm 0.0}$ & $20.9 _{\pm 0.0}$ & $9.6 _{\pm 0.0}$ & $3.2 _{\pm 0.0}$ \\
SP +SC & $22.0 _{\pm 0.0}$ & $2.6 _{\pm 0.0}$ & $1.7 _{\pm 0.0}$ & $50.1 _{\pm 0.0}$ & $0.1 _{\pm 0.0}$ & $0.0 _{\pm 0.0}$ & $48.7 _{\pm 0.0}$ & $17.9 _{\pm 0.0}$ & $\textcolor{orange}{\mathbf{13.9}} _{\pm 0.0}$ & $57.8 _{\pm 0.0}$ & $2.2 _{\pm 0.0}$ & $2.2 _{\pm 0.0}$ & $20.3 _{\pm 0.0}$ & $6.1 _{\pm 0.0}$ & $0.1 _{\pm 0.0}$ \\
LT +SC & $17.0 _{\pm 0.0}$ & $0.4 _{\pm 0.0}$ & $0.0 _{\pm 0.0}$ & N/A & N/A & N/A & N/A & N/A & N/A & N/A & N/A & N/A & N/A & N/A & N/A \\
GK +SC & $17.1 _{\pm 0.1}$ & $0.8 _{\pm 0.3}$ & $0.0 _{\pm 0.0}$ & $52.9 _{\pm 0.9}$ & $0.7 _{\pm 1.4}$ & $0.3 _{\pm 0.6}$ & $56.8 _{\pm 1.4}$ & $15.5 _{\pm 1.9}$ & $9.3 _{\pm 2.1}$ & $50.3 _{\pm 0.3}$ & $0.2 _{\pm 0.1}$ & $0.0 _{\pm 0.0}$ & $18.7 _{\pm 0.9}$ & $7.2 _{\pm 0.3}$ & $0.3 _{\pm 0.1}$ \\
\midrule
InfoGraph +KM & $22.1 _{\pm 1.0}$ & $2.4 _{\pm 0.5}$ & $1.3 _{\pm 0.5}$ & $54.1 _{\pm 2.2}$ & $1.3 _{\pm 1.1}$ & $0.9 _{\pm 0.9}$ & $59.6 _{\pm 1.8}$ & $14.4 _{\pm 3.0}$ & $6.6 _{\pm 2.3}$ & $51.3 _{\pm 2.1}$ & $2.3 _{\pm 0.4}$ & $0.6 _{\pm 0.2}$ & $20.3 _{\pm 0.9}$ & $0.5 _{\pm 0.2}$ & $0.0 _{\pm 0.0}$ \\
InfoGraph +SC & $23.8 _{\pm 0.5}$ & $4.6 _{\pm 0.7}$ & $2.2 _{\pm 0.4}$ & $54.9 _{\pm 1.7}$ & $0.9 _{\pm 0.6}$ & $1.0 _{\pm 0.8}$ & $60.9 _{\pm 2.5}$ & $15.4 _{\pm 3.3}$ & $9.3 _{\pm 3.5}$ & $50.8 _{\pm 1.3}$ & $1.6 _{\pm 0.6}$ & $0.6 _{\pm 0.0}$ & $24.7 _{\pm 1.3}$ & $4.8 _{\pm 0.6}$ & $3.2 _{\pm 0.6}$ \\
GraphCL +KM & $21.5 _{\pm 0.2}$ & $1.6 _{\pm 0.1}$ & $0.9 _{\pm 0.1}$ & $55.4 _{\pm 1.7}$ & $0.5 _{\pm 0.3}$ & $1.0 _{\pm 0.9}$ & $58.0 _{\pm 1.2}$ & $17.8 _{\pm 2.0}$ & $11.3 _{\pm 0.6}$ & $51.9 _{\pm 3.3}$ & $3.4 _{\pm 1.2}$ & $0.2 _{\pm 0.0}$ & $25.3 _{\pm 0.9}$ & $5.3 _{\pm 0.3}$ & $4.3 _{\pm 0.6}$ \\
GraphCL +SC & $25.3 _{\pm 0.3}$ & $4.8 _{\pm 0.4}$ & $2.0 _{\pm 0.3}$ & $50.8 _{\pm 1.6}$ & $0.6 _{\pm 0.6}$ & $1.1 _{\pm 0.8}$ & $57.8 _{\pm 0.6}$ & $17.0 _{\pm 1.3}$ & $10.1 _{\pm 0.7}$ & $55.9 _{\pm 2.1}$ & $3.2 _{\pm 1.0}$ & $0.3 _{\pm 0.2}$ & $27.3 _{\pm 1.3}$ & $5.4 _{\pm 0.8}$ & $4.2 _{\pm 1.1}$ \\
JOAO + KM & $21.7 _{\pm 0.4}$ & $4.9 _{\pm 0.4}$ & $2.1 _{\pm 0.2}$ & $51.1 _{\pm 0.4}$ & $0.4 _{\pm 0.2}$ & $0.1 _{\pm 0.0}$ & $58.3 _{\pm 1.5}$ & $18.7 _{\pm 2.6}$ & $11.1 _{\pm 1.8}$ & $54.3 _{\pm 2.9}$ & $4.2 _{\pm 1.8}$ & $0.8 _{\pm 0.3}$ & $26.6 _{\pm 0.6}$ & $3.6 _{\pm 1.2}$ & $2.5 _{\pm 0.2}$ \\
JOAO + SC & $24.4 _{\pm 1.4}$ & $3.2 _{\pm 0.7}$ & $1.7 _{\pm 0.8}$ & $51.5 _{\pm 3.0}$ & $0.9 _{\pm 1.2}$ & $0.4 _{\pm 1.2}$ & $58.2 _{\pm 0.9}$ & $17.1 _{\pm 2.1}$ & $10.6 _{\pm 0.8}$ & $55.9 _{\pm 1.2}$ & $6.7 _{\pm 2.0}$ & $1.4 _{\pm 0.6}$ & $25.6 _{\pm 0.6}$ & $2.5 _{\pm 0.2}$ & $3.4 _{\pm 0.3}$ \\
GLCC  & $24.4 _{\pm 1.4}$ & $3.2 _{\pm 0.7}$ & $1.7 _{\pm 0.8}$ & $60.9 _{\pm 2.3}$ & $5.3 _{\pm 1.9}$ & $3.6 _{\pm 2.6}$ & $60.3 _{\pm 0.6}$ & $18.2 _{\pm 1.3}$ & $12.1 _{\pm 0.9}$ & $\textcolor{orange}{\mathbf{67.6}} _{\pm 3.4}$ & $9.2 _{\pm 2.6}$ & $8.7 _{\pm 1.7}$ & $32.4 _{\pm 2.1}$ & $11.8 _{\pm 1.3}$ & $8.2 _{\pm 1.6}$ \\
\midrule
AMGC & $\textcolor{orange}{\mathbf{26.7}} _{\pm 2.0}$ & $\textcolor{orange}{\mathbf{5.2}} _{\pm 1.3}$ & $\textcolor{orange}{\mathbf{2.8}} _{\pm 0.7}$ & $\textcolor{orange}{\mathbf{62.7}} _{\pm 3.0}$ & $\textcolor{orange}{\mathbf{6.4}} _{\pm 1.9}$ & $\textcolor{orange}{\mathbf{6.4}} _{\pm 3.6}$ & $\textcolor{orange}{\mathbf{61.2}} _{\pm 1.0}$ & $\textcolor{orange}{\mathbf{20.5}} _{\pm 1.6}$ & $12.9 _{_{\pm 0.9}}$ & $64.3 _{\pm 1.9}$ & $\textcolor{orange}{\mathbf{12.1}} _{\pm 3.3}$ & $\textcolor{orange}{\mathbf{10.5}} _{\pm 2.7}$ & $\textcolor{orange}{\mathbf{35.5}} _{\pm 2.3}$ & $\textcolor{orange}{\mathbf{16.1}} _{\pm 0.9}$ & $\textcolor{orange}{\mathbf{12.0}} _{\pm 0.7}$ \\
\midrule
GraphVec & $\textcolor{red}{\mathbf{29.1}} _{\pm 0.4}$ & $\textcolor{red}{\mathbf{7.7}} _{\pm 0.3}$ & $\textcolor{red}{\mathbf{3.5}} _{\pm 0.2}$ & $\textcolor{red}{\mathbf{64.8}} _{\pm 0.0}$ & $\textcolor{red}{\mathbf{6.5}} _{\pm 0.0}$ & $\textcolor{red}{\mathbf{8.7}} _{\pm 0.0}$ & $\textcolor{red}{\mathbf{61.8}} _{\pm 0.0}$ & $\textcolor{red}{\mathbf{21.2}} _{\pm 0.0}$ & $\textcolor{red}{\mathbf{18.9}}_{\pm 0.0}$ & $\textcolor{red}{\mathbf{71.6}} _{\pm 0.0}$ & $\textcolor{red}{\mathbf{20.7}} _{\pm 0.0}$ & $\textcolor{red}{\mathbf{18.6}} _{\pm 0.0}$ & $\textcolor{red}{\mathbf{40.0}} _{\pm 0.2}$ & $\textcolor{red}{\mathbf{17.3}} _{\pm 0.1}$ & $\textcolor{red}{\mathbf{12.1}} _{\pm 0.2}$ \\
\bottomrule
\end{tabular}}
\end{table*}
\subsection{Visualization of aligned node embeddings}
To qualitatively inspect whether the proposed alignment places node embeddings from heterogeneous datasets into a comparable coordinate system, we visualize the aligned node embeddings from biochemical, social-network, and computer-vision datasets using t-SNE. As shown in Figure \ref{fig:tsne_aligned_node_features}, the embeddings from different datasets are not separated into completely isolated domain-specific regions. Instead, several datasets occupy overlapping or adjacent regions in the projected space, suggesting that the global multi-graph construction and density-maximization mean alignment reduce the gross distributional mismatch caused by heterogeneous raw attributes and feature dimensions.

The visualization also shows that the aligned space does not collapse all datasets into an indistinguishable distribution. Some datasets still form relatively compact local clusters or dataset-specific bands, which is expected because different domains preserve different structural and semantic regularities. Thus, the goal of the alignment is not to erase domain information, but to map node-level features into a shared relational space where cross-domain training becomes feasible. Since t-SNE is a qualitative projection and can distort global distances, this figure should be viewed as supporting evidence complementary to the downstream classification, clustering, and ablation results.

\begin{figure}[h!]
    \centering
    \includegraphics[width=0.5\linewidth]{ 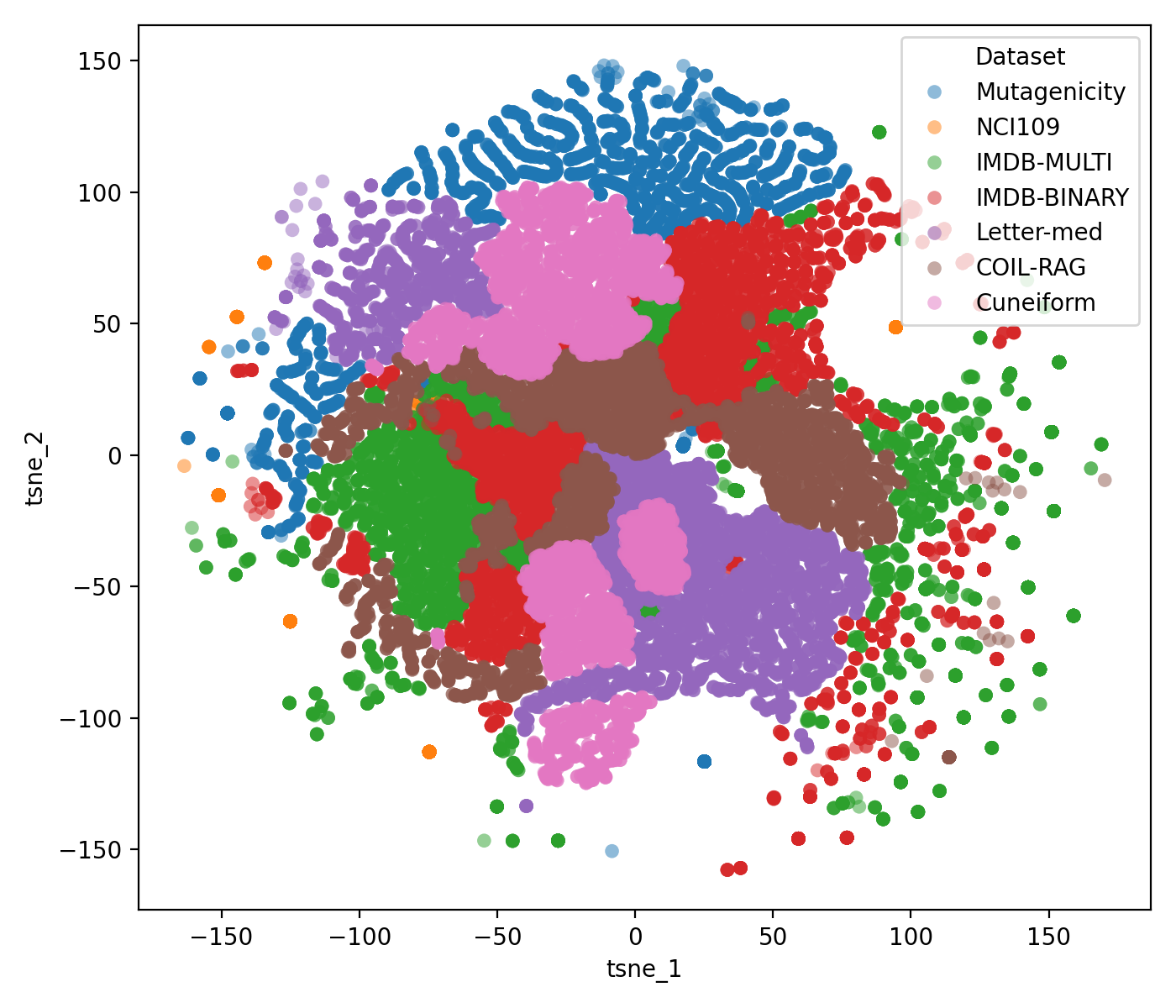}
    \caption{T-SNE visualization of aligned node embeddings of datasets from different domains.}
    \label{fig:tsne_aligned_node_features}
\end{figure}
We further compare the node embeddings learned by GraphVec-FM with those obtained from two representative non-LLM-based GFMs, RiemannGFM and GFT. Figure \ref{fig:tsne_model_node_embeddings} visualizes node embeddings from NCI109 and IMDB-MULTI, where colors denote datasets and markers denote graph classes. Compared with RiemannGFM and GFT, GraphVec-FM produces embeddings with stronger cross-dataset mixing while still preserving visible class-level structure. This qualitative comparison suggests that the proposed global multi-graph construction and density-based alignment better reduce dataset-specific feature mismatch, rather than merely separating nodes according to their source dataset. As with all t-SNE visualizations, this result is qualitative and should be interpreted together with the quantitative transfer results.

\begin{figure}[h!]
    \centering
    \includegraphics[width=\linewidth]{ 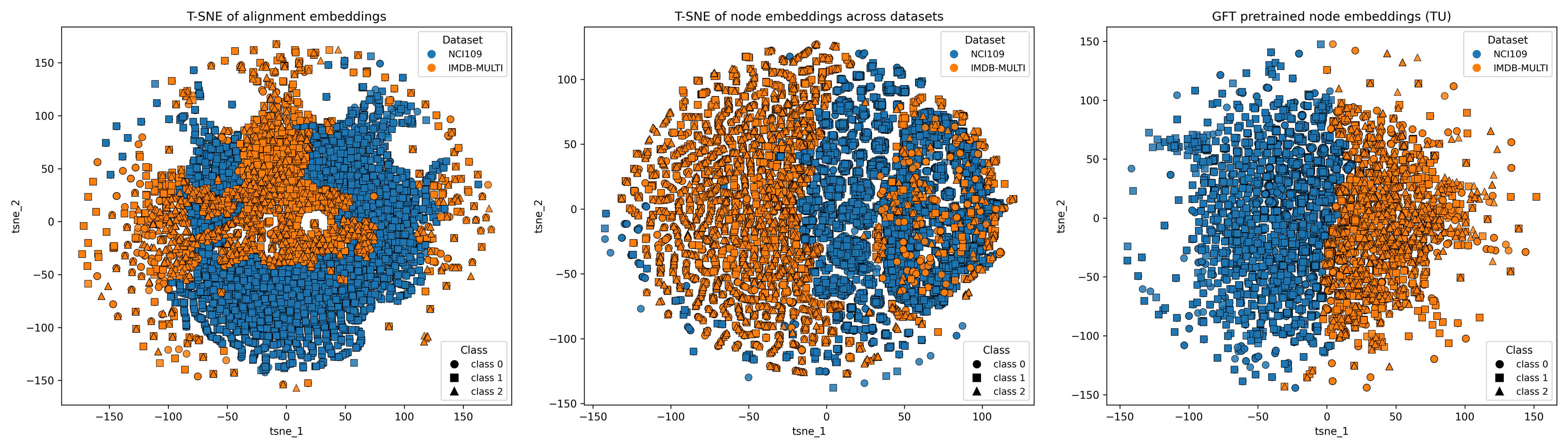}
    \caption{T-SNE visualization of node embeddings obtained by different pretrained models. From left to right are node embeddings obtained from GraphVec-FM, RiemannGFM, and GFT, respectively.}
    \label{fig:tsne_model_node_embeddings}
\end{figure}

\begin{table}[h!]
\centering
\caption{Resource consumption for large-scale dataset evaluation. Note that COLOR-3 exhibits higher RAM consumption compared to reddit\_threads, primarily due to its larger average number of nodes.}
\label{tab:scalability}

\begin{tabular}{lcccc}
\toprule
\textbf{Dataset} & \textbf{Graphs} & \textbf{Avg. Nodes} &
\textbf{RAM} & \textbf{Time} \\
\midrule
COLOR-3 & 10500 & 61.31 & 23.17GB & 626.84s \\
reddit\_threads & 203088 & 23.93 & 6.01GB & 3139.53s \\
\bottomrule
\end{tabular}
\end{table}

\subsection{Scalability to Large-scale Dataset}
\vspace{-5pt}
The construction and decomposition of global multi-graphs become computationally intensive when the downstream dataset contains a large number of graphs and nodes. To ensure scalability, except for employing the Nyström approximation (mentioned in Section \ref{sec_graph}) to handle large graphs efficiently and reduce the complexity of kernel matrix operations, we also reduce the computation cost by splitting datasets into small batches and computing the mini-batch global graphs.

To further validate the scale ability during evaluation, we test the time and memory consumption to evaluate on two large datasets COLOR-3 and reddit\_threads  \citep{Morris+2020} with more than 10k graphs and 20k graphs respectively. For these experiments, each dataset is divided into blocks of 128 graphs to build the corresponding multi‑graphs. The resulting resource usage is summarized in Table \ref{tab:scalability}, demonstrating that the overhead remains manageable at this scale. 
\subsection{Ablation Study}

\label{app_abl}
To verify the effectiveness of our proposed methods and modules, we conduct ablation study on global multi-graph construction, mean alignment algorithm, and reference layer. For the global multi-graph, we vary the number of multi-graphs from 1-6. Figure \ref{fig:attr_acc_vs_multigraphs} and Figure \ref{fig:woattr_acc_vs_multigraphs} demonstrate the impact of the number of kernel parameters on downstream few-shot graph classification accuracy. These results were obtained by incrementally increasing the set of Gaussian kernel bandwidths from $[0.25]$ to the full set $[0.25, 0.5, 1, 2, 5, 10]$ used in our main experiments. It can be observed that classification accuracy improves with a greater number of global multi-graphs, particularly for datasets with original continuous node attributes. This observation further illustrates that the global multi-graphs constructed by using different kernel parameters help capture patterns from original features. 

\begin{figure}[h!]
    \centering
    \includegraphics[width=0.8\linewidth]{ 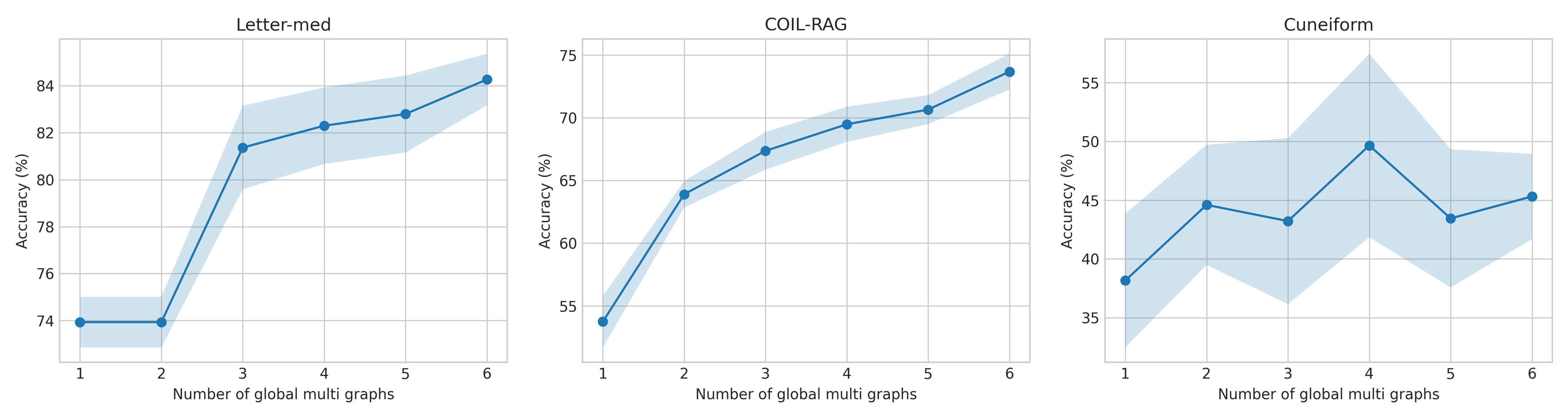}
    \caption{The few-shot graph classification accuracy in datasets with node attributes when the number of global multi-graphs increases from 1 to 6.}
    \label{fig:attr_acc_vs_multigraphs}
\end{figure}

\begin{figure}[h!]
    \centering
    \includegraphics[width=0.99\linewidth]{ 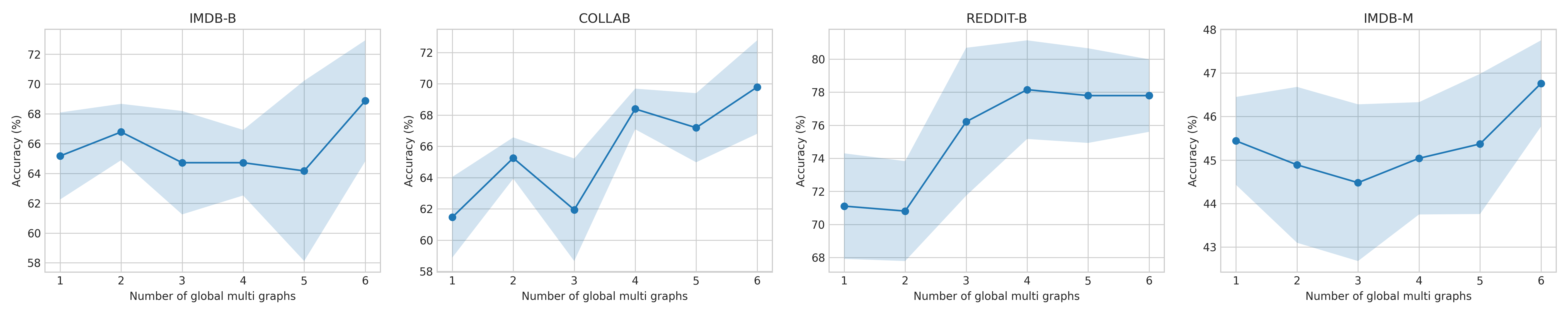}
    \caption{The few-shot graph classification accuracy in datasets without node attributes when the number of global multi-graphs increases from 1 to 6.}
    \label{fig:woattr_acc_vs_multigraphs}
\end{figure}

We conducted experiments using simple pooling without reference layers, and the results are presented in Table \ref{tab:model_ablation}. After removing the reference layer module, the performance on all datasets shows degradation, especially on COLLAB, IMDB-BINARY, and Cuneiform. By removing the mean alignment, the performance on COLLAB, Letter-Med, and Cuneiform shows an evident decrease. Similarly, we also conduct experiments that remove the alignment module and both the alignment module and the reference layer. The overall impact of the two modules is shown in Table \ref{tab:model_ablation_2}. 

\begin{table}[h!]
\centering
\small
\caption{Ablation study of reference layer and mean alignment module.}
\label{tab:model_ablation}
\resizebox{0.99\textwidth}{!}{
\begin{tabular}{l *{7}{c}}
\toprule
& \multicolumn{7}{c}{\textbf{Dataset}} \\
\cmidrule{2-8}
Model Variant & COLLAB & REDDIT-B & IMDB-B & IMDB-M & Letter-med & COIL-RAG & Cuneiform \\
& (50-shot) & (50-shot) & (50-shot) & (50-shot) & (50-shot) & (5-shot) & (1-shot) \\
\midrule
Original Model & $\mathbf{68.09} \pm 2.99$ & $\mathbf{81.52} \pm 1.50$ & $\mathbf{68.39} \pm 4.06$ & $\mathbf{46.70} \pm 0.99$ & $\mathbf{85.60} \pm 1.44$ & $\mathbf{74.20} \pm 0.77$ & $\mathbf{55.86} \pm 8.15$ \\ 
Mean Readout Only & $65.04 \pm 2.75$ & $77.79 \pm 3.11$ & $61.78 \pm 2.59$ & $46.07 \pm 2.60$ & $83.17 \pm 0.99$ & $72.87 \pm 1.26$ & $41.04 \pm 3.64$ \\
w/o alignment & $64.90\pm1.81$ & $76.74\pm5.43$ & $66.06\pm4.44$ & $45.78\pm1.46$ & $81.87\pm1.31$ & $72.14\pm0.35$ & $42.87\pm5.18$ \\
\bottomrule
\end{tabular}}
\end{table}

\begin{table}[h!]
\centering

\caption{The individual effect of alignment algorithm and reference layer on downstream cross-domain graph classification. The reported performance is averaged on 7 datasets.}
\label{tab:model_ablation_2}
\begin{tabular}{ccc}
\hline
Alignment & Reference layer & Avg. Acc. (\%) \\
\hline
$\checkmark$ & $\checkmark$ & 68.62 \\
$\checkmark$ & $\times$      & 63.95 \\
$\times$      & $\checkmark$ & 64.37 \\
$\times$      & $\times$     & 63.16 \\ 
\hline
\end{tabular}
\end{table}


\subsection{Kernel Function Analysis}

\begin{table}[ht]
\centering
\caption{Analysis of different kernel functions used in the multi-graph construction module.
The RBF kernel is used as the default choice in GraphVec.}
\label{tab:kernel_analysis}
\resizebox{\linewidth}{!}{
\begin{tabular}{lccccccc}
\toprule
Kernel
& COLLAB
& IMDB-BINARY
& IMDB-MULTI
& REDDIT-BINARY
& Letter-med
& COIL-RAG
& Cuneiform \\
\midrule
Laplacian
& $65.28 \pm 1.38$ & $69.83 \pm 1.79$ & $46.33 \pm 2.09$ & $86.03 \pm 3.23$ & $83.47 \pm 2.05$ & $74.74 \pm 0.96$ & $47.00 \pm 4.92$ \\
Polynomial
& $56.33 \pm 3.81$ & $60.56 \pm 2.06$ & $40.56 \pm 3.24$ & $71.53 \pm 4.65$ & $82.57 \pm 1.42$ & $62.55 \pm 2.53$ & $41.43 \pm 4.79$ \\
RBF
& $68.09 \pm 2.99$ & $68.39 \pm 4.06$ & $46.70 \pm 0.99$ & $81.52 \pm 1.50$ & $85.60 \pm 1.44$ & $74.20 \pm 0.77$ & $55.86 \pm 8.15$ \\
\bottomrule
\end{tabular}
}
\end{table}

We further study the effect of different kernel functions in the multi-graph construction module. 
Specifically, we compare the Laplacian kernel, polynomial kernel, and RBF kernel for constructing the global graphs over node attributes. 
As shown in Table~\ref{tab:kernel_analysis}, the RBF kernel achieves the best or competitive performance on most datasets, including COLLAB, IMDB-MULTI, Letter-med, and Cuneiform. 
Although the Laplacian kernel performs better on IMDB-BINARY, REDDIT-BINARY, and COIL-RAG, its performance drops substantially on Cuneiform compared with the RBF kernel. 
The polynomial kernel performs consistently worse than the other two kernels, suggesting that its induced similarity may be less suitable for capturing local node-attribute relationships across heterogeneous graph domains.

Overall, the RBF kernel provides the most stable performance across datasets. 
This is consistent with the motivation of our multi-graph construction strategy: the RBF kernel captures smooth local similarity in the node attribute space and is less dependent on the absolute scale or linear structure of the original attributes. 
Therefore, we use the RBF kernel as the default kernel function in GraphVec.

\subsection{Comparison with Other Alignment Approaches}
We further compare our density-maximization mean alignment with the alignment strategy adopted in AnyGraph under the same backbone and evaluation protocol. Although both methods rely on SVD-based representations, the two alignment mechanisms are fundamentally different. AnyGraph applies SVD directly to original node attributes and is mainly designed to handle heterogeneous feature dimensions, whereas our method first constructs kernelized global graphs and then aligns the resulting relational embeddings by explicitly maximizing the density of dataset means under orthogonal transformations.

The results in Table~\ref{tab:anygraph_alignment} show that our alignment strategy improves performance on all six cross-domain benchmarks. The gain is modest on IMDB-MULTI, but it becomes substantial on datasets with larger domain gaps, especially the computer vision datasets Letter-med, COIL-RAG, and Cuneiform. In particular, the large margin on COIL-RAG and Cuneiform suggests that directly aligning raw attribute spaces is not sufficient when datasets differ strongly in feature semantics, while our density-based alignment is better at matching datasets in a shared relational space. These results provide empirical support that the proposed alignment module is an important contributor to the transferability of GraphVec across domains.

\begin{table}[h!]
\centering
\caption{Comparison between the alignment strategy used in AnyGraph and our density-maximization mean alignment.
Our alignment consistently improves cross-domain graph classification performance, especially on computer vision graph datasets.}
\label{tab:anygraph_alignment}
\resizebox{\linewidth}{!}{
\begin{tabular}{lcccccc}
\toprule
Alignment Method
& COLLAB
& IMDB-BINARY
& IMDB-MULTI
& Letter-med
& COIL-RAG
& Cuneiform \\
\midrule
AnyGraph's Alignment
& $62.93 \pm 2.27$ & $63.61 \pm 2.92$ & $45.33 \pm 1.93$ & $72.17 \pm 1.25$ & $29.05 \pm 3.35$ & $10.46 \pm 2.09$ \\
Our Alignment
& $68.09 \pm 2.99$ & $68.39 \pm 4.06$ & $46.70 \pm 0.99$ & $85.60 \pm 1.44$ & $74.20 \pm 0.77$ & $55.86 \pm 8.15$ \\
\bottomrule
\end{tabular}
}
\end{table}

\subsection{Hyperparameter Analysis}

\begin{table}[ht]
\centering
\caption{Hyperparameter sensitivity analysis of the contrastive temperature $\tau$. 
The underlined column denotes the default setting used in our experiments.}
\label{tab:sensitivity_tau}
\resizebox{\linewidth}{!}{
\begin{tabular}{lccccc}
\toprule
Dataset 
& $\tau=0.05$ 
& $\underline{\tau=0.1}$ 
& $\tau=0.15$ 
& $\tau=0.2$ 
& $\tau=0.5$ \\
\midrule
IMDB-MULTI     
& $46.93 \pm 0.83$ & $46.70 \pm 0.99$ & $46.78 \pm 1.11$ & $48.11 \pm 0.55$ & $46.81 \pm 1.17$ \\
IMDB-BINARY    
& $67.83 \pm 1.53$ & $68.39 \pm 4.06$ & $68.22 \pm 3.15$ & $67.06 \pm 2.01$ & $68.89 \pm 4.09$ \\
COLLAB         
& $66.10 \pm 2.35$ & $68.09 \pm 2.99$ & $68.06 \pm 1.32$ & $66.63 \pm 0.96$ & $66.46 \pm 2.19$ \\
Letter-med     
& $85.37 \pm 0.73$ & $85.60 \pm 1.44$ & $85.20 \pm 0.83$ & $85.73 \pm 0.98$ & $86.03 \pm 1.62$ \\
REDDIT-BINARY  
& $86.19 \pm 2.29$ & $81.52 \pm 1.50$ & $86.06 \pm 1.29$ & $84.14 \pm 2.24$ & $86.71 \pm 1.13$ \\
COIL-RAG       
& $74.35 \pm 0.46$ & $74.20 \pm 0.77$ & $72.86 \pm 1.10$ & $71.11 \pm 1.45$ & $73.97 \pm 1.36$ \\
Cuneiform      
& $51.73 \pm 3.25$ & $55.86 \pm 8.15$ & $50.13 \pm 4.97$ & $48.95 \pm 5.25$ & $51.48 \pm 3.65$ \\
\bottomrule
\end{tabular}
}
\end{table}

\begin{table}[ht]
\centering
\caption{Hyperparameter sensitivity analysis of the SVD embedding dimension $d$.
The underlined column denotes the default setting used in our experiments.}
\label{tab:sensitivity_d}
\resizebox{\linewidth}{!}{
\begin{tabular}{lccccc}
\toprule
Dataset
& $d=8$
& $d=16$
& $\underline{d=32}$
& $d=64$
& $d=128$ \\
\midrule
IMDB-MULTI
& $46.00 \pm 1.97$ & $47.37 \pm 1.06$ & $46.70 \pm 0.99$ & $45.37 \pm 1.40$ & $43.30 \pm 3.93$ \\
IMDB-BINARY
& $67.06 \pm 2.60$ & $65.72 \pm 1.24$ & $68.39 \pm 4.06$ & $68.56 \pm 1.85$ & $68.00 \pm 1.38$ \\
COLLAB
& $64.64 \pm 1.96$ & $66.74 \pm 1.51$ & $68.09 \pm 2.99$ & $67.92 \pm 1.31$ & $69.53 \pm 1.37$ \\
Letter-med
& $80.50 \pm 2.14$ & $82.10 \pm 1.70$ & $85.60 \pm 1.44$ & $85.50 \pm 1.27$ & $86.60 \pm 1.01$ \\
REDDIT-BINARY
& $79.42 \pm 1.59$ & $83.01 \pm 4.56$ & $81.52 \pm 1.50$ & $82.80 \pm 2.07$ & $78.75 \pm 3.24$ \\
COIL-RAG
& $64.97 \pm 1.43$ & $70.72 \pm 1.12$ & $74.20 \pm 0.77$ & $40.05 \pm 29.18$ & $3.37 \pm 0.96$ \\
Cuneiform
& $52.91 \pm 5.25$ & $51.81 \pm 6.39$ & $55.86 \pm 8.15$ & $20.17 \pm 19.50$ & $6.67 \pm 2.04$ \\
\bottomrule
\end{tabular}
}
\end{table}

\begin{table}[ht]
\centering
\caption{Hyperparameter sensitivity analysis of $\gamma$.
The underlined column denotes the default setting used in our experiments.}
\label{tab:sensitivity_gamma}
\resizebox{\linewidth}{!}{
\begin{tabular}{lccccc}
\toprule
Dataset
& $\underline{\gamma=0.01}$
& $\gamma=0.1$
& $\gamma=0.5$
& $\gamma=1.0$
& $\gamma=10.0$ \\
\midrule
IMDB-MULTI
& $46.70 \pm 0.99$ & $44.35 \pm 1.20$ & $45.93 \pm 2.22$ & $42.22 \pm 0.56$ & $46.57 \pm 1.02$ \\
IMDB-BINARY
& $68.39 \pm 4.06$ & $68.19 \pm 0.14$ & $67.78 \pm 0.28$ & $68.89 \pm 1.94$ & $68.33 \pm 0.83$ \\
COLLAB
& $68.09 \pm 2.99$ & $63.20 \pm 0.21$ & $65.41 \pm 2.63$ & $65.77 \pm 0.31$ & $66.91 \pm 0.93$ \\
Letter-med
& $85.60 \pm 1.44$ & $85.17 \pm 0.17$ & $85.50 \pm 0.50$ & $83.58 \pm 1.25$ & $85.50 \pm 0.33$ \\
REDDIT-BINARY
& $81.52 \pm 1.50$ & $82.63 \pm 2.26$ & $84.16 \pm 0.72$ & $80.53 \pm 2.61$ & $83.82 \pm 2.23$ \\
COIL-RAG
& $74.20 \pm 0.77$ & $74.19 \pm 2.58$ & $73.96 \pm 0.65$ & $75.31 \pm 1.31$ & $76.27 \pm 0.27$ \\
Cuneiform
& $55.86 \pm 8.15$ & $52.32 \pm 0.00$ & $56.12 \pm 1.27$ & $59.92 \pm 2.95$ & $55.27 \pm 0.42$ \\
\bottomrule
\end{tabular}
}
\end{table}

\begin{table}[h!]
\centering
\caption{Hyperparameter sensitivity analysis of the number of reference distributions $R$.
The underlined column denotes the default setting used in our experiments.}
\label{tab:sensitivity_R}
\resizebox{\linewidth}{!}{
\begin{tabular}{lccccc}
\toprule
Dataset
& $R=8$
& $R=16$
& $R=32$
& $\underline{R=64}$
& $R=128$ \\
\midrule
IMDB-MULTI
& $45.41 \pm 0.75$ & $45.11 \pm 2.52$ & $44.59 \pm 0.81$ & $46.70 \pm 0.99$ & $47.00 \pm 1.30$ \\
IMDB-BINARY
& $68.56 \pm 1.57$ & $67.17 \pm 1.35$ & $67.72 \pm 1.82$ & $68.39 \pm 4.06$ & $70.39 \pm 1.59$ \\
COLLAB
& $66.04 \pm 2.08$ & $66.10 \pm 2.96$ & $67.09 \pm 1.12$ & $68.09 \pm 2.99$ & $66.35 \pm 0.72$ \\
Letter-med
& $84.67 \pm 1.18$ & $84.57 \pm 1.51$ & $85.27 \pm 1.83$ & $85.60 \pm 1.44$ & $85.33 \pm 1.48$ \\
REDDIT-BINARY
& $81.92 \pm 2.18$ & $82.76 \pm 4.63$ & $83.51 \pm 1.55$ & $81.52 \pm 1.50$ & $84.90 \pm 3.41$ \\
COIL-RAG
& $73.03 \pm 1.43$ & $74.14 \pm 2.46$ & $73.78 \pm 1.40$ & $74.20 \pm 0.77$ & $74.08 \pm 0.65$ \\
Cuneiform
& $53.76 \pm 5.30$ & $54.68 \pm 3.89$ & $53.67 \pm 4.05$ & $55.86 \pm 8.15$ & $55.36 \pm 2.77$ \\
\bottomrule
\end{tabular}
}
\end{table}

\begin{table}[h!]
\centering
\caption{Hyperparameter sensitivity analysis of the backbone depths $(l_{\mathrm{GIN}}, l_{\mathrm{GT}})$.
The underlined column denotes the default setting used in our experiments.}
\label{tab:sensitivity_depth}
\resizebox{\linewidth}{!}{
\begin{tabular}{lccccc}
\toprule
Dataset
& $\underline{l_{\mathrm{GIN}}=3,\ l_{\mathrm{GT}}=3}$
& $l_{\mathrm{GIN}}=4,\ l_{\mathrm{GT}}=3$
& $l_{\mathrm{GIN}}=5,\ l_{\mathrm{GT}}=3$
& $l_{\mathrm{GIN}}=3,\ l_{\mathrm{GT}}=4$
& $l_{\mathrm{GIN}}=3,\ l_{\mathrm{GT}}=5$ \\
\midrule
IMDB-MULTI
& $46.70 \pm 0.99$ & $46.73 \pm 1.22$ & $46.91 \pm 1.37$ & $46.98 \pm 1.52$ & $46.67 \pm 1.32$ \\
IMDB-BINARY
& $68.39 \pm 4.06$ & $67.22 \pm 2.16$ & $60.56 \pm 2.56$ & $69.44 \pm 1.18$ & $69.44 \pm 1.20$ \\
COLLAB
& $68.09 \pm 2.99$ & $62.37 \pm 6.01$ & $65.91 \pm 2.74$ & $66.43 \pm 0.42$ & $68.56 \pm 0.72$ \\
Letter-med
& $85.60 \pm 1.44$ & $83.78 \pm 1.73$ & $81.89 \pm 0.48$ & $86.11 \pm 0.98$ & $87.28 \pm 0.61$ \\
REDDIT-BINARY
& $81.52 \pm 1.50$ & $82.70 \pm 1.99$ & $84.43 \pm 0.56$ & $82.49 \pm 2.77$ & $84.24 \pm 2.74$ \\
COIL-RAG
& $74.20 \pm 0.77$ & $69.44 \pm 1.13$ & $65.46 \pm 0.38$ & $71.97 \pm 0.32$ & $72.41 \pm 1.65$ \\
Cuneiform
& $55.86 \pm 8.15$ & $49.23 \pm 5.60$ & $46.27 \pm 1.70$ & $55.84 \pm 5.02$ & $53.16 \pm 5.21$ \\
\bottomrule
\end{tabular}
}
\end{table}

We analyze the sensitivity of GraphVec with respect to five key hyperparameters: the contrastive temperature $\tau$, the SVD embedding dimension $d$, the Gaussian kernel parameter $\gamma$, the number of reference distributions $R$, and the backbone depths $(l_{\mathrm{GIN}}, l_{\mathrm{GT}})$. Specifically, $\tau$ controls the sharpness of the supervised contrastive objective, $d$ denotes the dimensionality of the SVD-based node embeddings used in the multi-graph feature alignment module, $\gamma$ controls the bandwidth of the Gaussian kernel used in the density-based alignment and the reference-distribution similarity, $R$ is the number of learnable reference distributions in the reference distribution module, and $l_{\mathrm{GIN}}$ and $l_{\mathrm{GT}}$ denote the numbers of GIN and graph transformer layers, respectively.

For the contrastive temperature $\tau$, the performance is generally stable across a broad range. The default value $\tau=0.1$ achieves competitive results on most datasets, e.g., $68.39$ on IMDB-BINARY, $68.09$ on COLLAB, $85.60$ on Letter-med, and $74.20$ on COIL-RAG. Although a few datasets obtain slightly higher scores with other values, such as IMDB-MULTI at $\tau=0.2$ and REDDIT-BINARY at $\tau=0.5$, the differences are mostly moderate. This indicates that the contrastive objective is not overly sensitive to the temperature, and $\tau=0.1$ provides a reasonable default trade-off across datasets.

The embedding dimension $d$ has a more pronounced effect. Moderate dimensions, especially $d=32$, perform consistently well and are used as the default setting. Increasing $d$ beyond this point does not consistently improve performance and can severely degrade results on some datasets. For example, COIL-RAG drops from $74.20$ at $d=32$ to $40.05$ at $d=64$ and $3.37$ at $d=128$, while Cuneiform drops from $55.86$ to $20.17$ and $6.67$, respectively. This suggests that overly high-dimensional SVD features may introduce noise or instability, particularly on small or structurally sparse datasets. Therefore, $d=32$ is a justified default: it is sufficiently expressive while avoiding the instability observed with larger dimensions.

For $\gamma$, the model is relatively robust within the tested range. In GraphVec, $\gamma$ is the hyperparameter in the density-maximization mean alignment objective, which controls the strength of the exponential weighting over the distances between aligned mean embeddings. The default value $\gamma=0.01$ gives strong results on IMDB-BINARY, COLLAB, Letter-med, COIL-RAG, and Cuneiform. Larger values occasionally improve individual datasets, such as COIL-RAG at $\gamma=10.0$ and Cuneiform at $\gamma=1.0$, but they do not yield consistent gains across all datasets. This suggests that using a very large $\gamma$ may over-emphasize small pairwise differences among mean embeddings during alignment, making the alignment process more dataset-dependent. The results therefore support using a small default value $\gamma=0.01$, which provides stable cross-domain alignment across heterogeneous graph datasets.

The number of reference distributions $R$ is also not highly sensitive. The default $R=64$ performs competitively across datasets, achieving $68.09$ on COLLAB, $85.60$ on Letter-med, $74.20$ on COIL-RAG, and $55.86$ on Cuneiform. Increasing $R$ to $128$ improves some datasets, such as IMDB-BINARY and REDDIT-BINARY, but slightly hurts others such as COLLAB and Letter-med. This suggests that a larger reference set can improve expressiveness, but the gain is dataset-dependent. We therefore choose $R=64$ as a balanced setting between representation capacity and robustness.

Finally, we evaluate the backbone depths $(l_{\mathrm{GIN}}, l_{\mathrm{GT}})$. The default setting $(3,3)$ is competitive, while deeper or more asymmetric configurations provide mixed results. For instance, $(3,5)$ improves Letter-med and REDDIT-BINARY but degrades COIL-RAG and Cuneiform compared with the default. Similarly, increasing $l_{\mathrm{GIN}}$ to $4$ or $5$ does not consistently improve performance and can substantially hurt COLLAB, COIL-RAG, and Cuneiform. These results suggest that excessive depth may introduce over-smoothing or optimization difficulty, especially under cross-domain transfer. Thus, the default shallow configuration $(l_{\mathrm{GIN}}, l_{\mathrm{GT}})=(3,3)$ is a reasonable choice for stable generalization.

Overall, GraphVec is relatively insensitive to $\tau$, $\gamma$, and $R$, while the embedding dimension $d$ and backbone depth require moderate settings. The selected default configuration $\tau=0.1$, $d=32$, $\gamma=0.01$, $R=64$, and $(l_{\mathrm{GIN}}, l_{\mathrm{GT}})=(3,3)$ provides a robust balance across social network and computer vision datasets, rather than being tuned for a single dataset.

\subsection{Robustness Evaluation on Noisy Input Graphs}
\label{app_robust}
To validate our model's performance on noisy graph data, we randomly added/deleted 10\% edges to 50\% of the test graphs during the few-shot test phase, and the results are shown in Table \ref{tab:robustness}. It can be observed that there is only a slight decrease in terms of accuracy when the input graphs are perturbed or noisy, which demonstrates the robustness of our model.
\begin{table}[h!]
\centering
\caption{Few-shot graph classification performance comparison between original input and perturbed input}
\label{tab:robustness}
\resizebox{0.99\textwidth}{!}{
\begin{tabular}{l *{6}{c}}
\toprule
Dataset  & REDDIT-B & IMDB-B & IMDB-M & Letter-med & COIL-RAG & Cuneiform \\
& 50-shot  & 50-shot & 50-shot & 50-shot & 5-shot & 1-shot \\
\midrule
Original Graphs  & $\mathbf{81.52} \pm 1.50$ & $\mathbf{68.39} \pm 4.06$ & $\mathbf{46.70} \pm 0.99$ & $\mathbf{85.60} \pm 1.44$ & $\mathbf{74.20} \pm 0.77$ & $\mathbf{55.86} \pm 8.15$ \\ 
50\% Perturbed Graphs  & $77.68 \pm 2.30$ & $65.22 \pm 1.07$ & $46.07 \pm 1.55$ & $83.47 \pm 3.22$ & $72.13 \pm 1.93$ & $44.14 \pm 3.82$ \\
\bottomrule
\end{tabular}}
\end{table}
\subsection{Time and Memory Consumption}
\label{app_time}
To evaluate the computational cost and runtime of model pretraining, we conducted experiments on two datasets of different scales: the larger deezer\_ego\_net dataset (9,629 graphs) and the smaller ENZYMES dataset (600 graphs), each for 10 epochs. The wall-clock time and peak GPU/RAM memory usage are presented in the Table \ref{tab:timemem}. We also compared the wall-clock time and memory cost of pretraining with ProNoG \citep{yu2025non} and BRIDGE \citep{yuan2025much} on the same dataset. It can be observed that our GraphVec requires less training time, especially on relatively large datasets. GraphVec demands more memory consumption, which is primarily due to the computation and storage of the global graph. 
\begin{table}[h!]
\centering
\caption{Performance and resource utilization comparison}
\label{tab:timemem}
\resizebox{1.0\textwidth}{!}{
\begin{tabular}{l *{6}{c}}
\toprule
& \multicolumn{3}{c}{\textbf{deezer\_ego\_net}} & \multicolumn{3}{c}{\textbf{ENZYMES}} \\ 
\cmidrule(lr){2-4} \cmidrule(lr){5-7}
\multirow{2}{*}{\textbf{Method}} & \makecell{wall-clock \\ time (s)} & \makecell{GPU Peak \\ Memory (GB)} & \makecell{RAM Peak \\ Memory (GB)} & \makecell{wall-clock \\ time (s)} & \makecell{GPU Peak \\ Memory (GB)} & \makecell{RAM Peak \\ Memory (GB)} \\
\midrule
ProNoG & 1194.61 & 0.12 & 1.1 & 69.54 & 0.05 & 0.96 \\
BRIDGE & 1600.28 & 0.31 & 1.27 & 116.99 & 0.18 & 1.17 \\
Our Method & 899.22 & 1.04 & 46.42 & 76.15 & 0.26 & 8.35 \\
\bottomrule
\end{tabular}}
\end{table}

To ensure fair comparison, since ProNoG and BRIDGE process 4 graphs at once, we set our model's batch size to 4. All experiments are conducted on 14 vCPU Intel(R) Xeon(R) Gold 6348 CPU with one Nvidia A800-80G GPU, CUDA 11.8.






\newpage

\section*{NeurIPS Paper Checklist}

The checklist is designed to encourage best practices for responsible machine learning research, addressing issues of reproducibility, transparency, research ethics, and societal impact. Do not remove the checklist: {\bf The papers not including the checklist will be desk rejected.} The checklist should follow the references and follow the (optional) supplemental material.  The checklist does NOT count towards the page
limit. 

Please read the checklist guidelines carefully for information on how to answer these questions. For each question in the checklist:
\begin{itemize}
    \item You should answer \answerYes{}, \answerNo{}, or \answerNA{}.
    \item \answerNA{} means either that the question is Not Applicable for that particular paper or the relevant information is Not Available.
    \item Please provide a short (1--2 sentence) justification right after your answer (even for \answerNA). 
\end{itemize}

{\bf The checklist answers are an integral part of your paper submission.} They are visible to the reviewers, area chairs, senior area chairs, and ethics reviewers. You will also be asked to include it (after eventual revisions) with the final version of your paper, and its final version will be published with the paper.

The reviewers of your paper will be asked to use the checklist as one of the factors in their evaluation. While \answerYes{} is generally preferable to \answerNo{}, it is perfectly acceptable to answer \answerNo{} provided a proper justification is given (e.g., error bars are not reported because it would be too computationally expensive'' or ``we were unable to find the license for the dataset we used''). In general, answering \answerNo{} or \answerNA{} is not grounds for rejection. While the questions are phrased in a binary way, we acknowledge that the true answer is often more nuanced, so please just use your best judgment and write a justification to elaborate. All supporting evidence can appear either in the main paper or the supplemental material, provided in appendix. If you answer \answerYes{} to a question, in the justification please point to the section(s) where related material for the question can be found.

IMPORTANT, please:
\begin{itemize}
    \item {\bf Delete this instruction block, but keep the section heading ``NeurIPS Paper Checklist"},
    \item  {\bf Keep the checklist subsection headings, questions/answers and guidelines below.}
    \item {\bf Do not modify the questions and only use the provided macros for your answers}.
\end{itemize}


\begin{enumerate}

\item {\bf Claims}
    \item[] Question: Do the main claims made in the abstract and introduction accurately reflect the paper's contributions and scope?
    \item[] Answer: \answerYes{} 
    \item[] Justification: The main claims made in the abstract and introduction accurately reflect the paper's contributions and scope
    \item[] Guidelines:
    \begin{itemize}
        \item The answer \answerNA{} means that the abstract and introduction do not include the claims made in the paper.
        \item The abstract and/or introduction should clearly state the claims made, including the contributions made in the paper and important assumptions and limitations. A \answerNo{} or \answerNA{} answer to this question will not be perceived well by the reviewers. 
        \item The claims made should match theoretical and experimental results, and reflect how much the results can be expected to generalize to other settings. 
        \item It is fine to include aspirational goals as motivation as long as it is clear that these goals are not attained by the paper. 
    \end{itemize}

\item {\bf Limitations}
    \item[] Question: Does the paper discuss the limitations of the work performed by the authors?
    \item[] Answer: \answerYes{} 
    \item[] Justification: It has been discussed in conclusion.
    \item[] Guidelines:
    \begin{itemize}
        \item The answer \answerNA{} means that the paper has no limitation while the answer \answerNo{} means that the paper has limitations, but those are not discussed in the paper. 
        \item The authors are encouraged to create a separate ``Limitations'' section in their paper.
        \item The paper should point out any strong assumptions and how robust the results are to violations of these assumptions (e.g., independence assumptions, noiseless settings, model well-specification, asymptotic approximations only holding locally). The authors should reflect on how these assumptions might be violated in practice and what the implications would be.
        \item The authors should reflect on the scope of the claims made, e.g., if the approach was only tested on a few datasets or with a few runs. In general, empirical results often depend on implicit assumptions, which should be articulated.
        \item The authors should reflect on the factors that influence the performance of the approach. For example, a facial recognition algorithm may perform poorly when image resolution is low or images are taken in low lighting. Or a speech-to-text system might not be used reliably to provide closed captions for online lectures because it fails to handle technical jargon.
        \item The authors should discuss the computational efficiency of the proposed algorithms and how they scale with dataset size.
        \item If applicable, the authors should discuss possible limitations of their approach to address problems of privacy and fairness.
        \item While the authors might fear that complete honesty about limitations might be used by reviewers as grounds for rejection, a worse outcome might be that reviewers discover limitations that aren't acknowledged in the paper. The authors should use their best judgment and recognize that individual actions in favor of transparency play an important role in developing norms that preserve the integrity of the community. Reviewers will be specifically instructed to not penalize honesty concerning limitations.
    \end{itemize}

\item {\bf Theory assumptions and proofs}
    \item[] Question: For each theoretical result, does the paper provide the full set of assumptions and a complete (and correct) proof?
    \item[] Answer: \answerYes{} 
    \item[] Justification: All assumptions are clearly stated or referenced in the statement of any theorems.
    \item[] Guidelines:
    \begin{itemize}
        \item The answer \answerNA{} means that the paper does not include theoretical results. 
        \item All the theorems, formulas, and proofs in the paper should be numbered and cross-referenced.
        \item All assumptions should be clearly stated or referenced in the statement of any theorems.
        \item The proofs can either appear in the main paper or the supplemental material, but if they appear in the supplemental material, the authors are encouraged to provide a short proof sketch to provide intuition. 
        \item Inversely, any informal proof provided in the core of the paper should be complemented by formal proofs provided in appendix or supplemental material.
        \item Theorems and Lemmas that the proof relies upon should be properly referenced. 
    \end{itemize}

    \item {\bf Experimental result reproducibility}
    \item[] Question: Does the paper fully disclose all the information needed to reproduce the main experimental results of the paper to the extent that it affects the main claims and/or conclusions of the paper (regardless of whether the code and data are provided or not)?
    \item[] Answer: \answerYes{} 
    \item[] Justification: The paper fully disclose all the information needed to reproduce the main experimental results.
    \item[] Guidelines:
    \begin{itemize}
        \item The answer \answerNA{} means that the paper does not include experiments.
        \item If the paper includes experiments, a \answerNo{} answer to this question will not be perceived well by the reviewers: Making the paper reproducible is important, regardless of whether the code and data are provided or not.
        \item If the contribution is a dataset and\slash or model, the authors should describe the steps taken to make their results reproducible or verifiable. 
        \item Depending on the contribution, reproducibility can be accomplished in various ways. For example, if the contribution is a novel architecture, describing the architecture fully might suffice, or if the contribution is a specific model and empirical evaluation, it may be necessary to either make it possible for others to replicate the model with the same dataset, or provide access to the model. In general. releasing code and data is often one good way to accomplish this, but reproducibility can also be provided via detailed instructions for how to replicate the results, access to a hosted model (e.g., in the case of a large language model), releasing of a model checkpoint, or other means that are appropriate to the research performed.
        \item While NeurIPS does not require releasing code, the conference does require all submissions to provide some reasonable avenue for reproducibility, which may depend on the nature of the contribution. For example
        \begin{enumerate}
            \item If the contribution is primarily a new algorithm, the paper should make it clear how to reproduce that algorithm.
            \item If the contribution is primarily a new model architecture, the paper should describe the architecture clearly and fully.
            \item If the contribution is a new model (e.g., a large language model), then there should either be a way to access this model for reproducing the results or a way to reproduce the model (e.g., with an open-source dataset or instructions for how to construct the dataset).
            \item We recognize that reproducibility may be tricky in some cases, in which case authors are welcome to describe the particular way they provide for reproducibility. In the case of closed-source models, it may be that access to the model is limited in some way (e.g., to registered users), but it should be possible for other researchers to have some path to reproducing or verifying the results.
        \end{enumerate}
    \end{itemize}

\item {\bf Open access to data and code}
    \item[] Question: Does the paper provide open access to the data and code, with sufficient instructions to faithfully reproduce the main experimental results, as described in supplemental material?
    \item[] Answer: \answerYes{} 
    \item[] Justification: The paper provide open access to the data and code.
    \item[] Guidelines:
    \begin{itemize}
        \item The answer \answerNA{} means that paper does not include experiments requiring code.
        \item Please see the NeurIPS code and data submission guidelines (\url{https://neurips.cc/public/guides/CodeSubmissionPolicy}) for more details.
        \item While we encourage the release of code and data, we understand that this might not be possible, so \answerNo{} is an acceptable answer. Papers cannot be rejected simply for not including code, unless this is central to the contribution (e.g., for a new open-source benchmark).
        \item The instructions should contain the exact command and environment needed to run to reproduce the results. See the NeurIPS code and data submission guidelines (\url{https://neurips.cc/public/guides/CodeSubmissionPolicy}) for more details.
        \item The authors should provide instructions on data access and preparation, including how to access the raw data, preprocessed data, intermediate data, and generated data, etc.
        \item The authors should provide scripts to reproduce all experimental results for the new proposed method and baselines. If only a subset of experiments are reproducible, they should state which ones are omitted from the script and why.
        \item At submission time, to preserve anonymity, the authors should release anonymized versions (if applicable).
        \item Providing as much information as possible in supplemental material (appended to the paper) is recommended, but including URLs to data and code is permitted.
    \end{itemize}

\item {\bf Experimental setting/details}
    \item[] Question: Does the paper specify all the training and test details (e.g., data splits, hyperparameters, how they were chosen, type of optimizer) necessary to understand the results?
    \item[] Answer: \answerYes{} 
    \item[] Justification: The paper specify all the training and test details necessary to understand the results.
    \item[] Guidelines:
    \begin{itemize}
        \item The answer \answerNA{} means that the paper does not include experiments.
        \item The experimental setting should be presented in the core of the paper to a level of detail that is necessary to appreciate the results and make sense of them.
        \item The full details can be provided either with the code, in appendix, or as supplemental material.
    \end{itemize}

\item {\bf Experiment statistical significance}
    \item[] Question: Does the paper report error bars suitably and correctly defined or other appropriate information about the statistical significance of the experiments?
    \item[] Answer: \answerYes{} 
    \item[] Justification: The paper report error bars.
    \item[] Guidelines:
    \begin{itemize}
        \item The answer \answerNA{} means that the paper does not include experiments.
        \item The authors should answer \answerYes{} if the results are accompanied by error bars, confidence intervals, or statistical significance tests, at least for the experiments that support the main claims of the paper.
        \item The factors of variability that the error bars are capturing should be clearly stated (for example, train/test split, initialization, random drawing of some parameter, or overall run with given experimental conditions).
        \item The method for calculating the error bars should be explained (closed form formula, call to a library function, bootstrap, etc.)
        \item The assumptions made should be given (e.g., Normally distributed errors).
        \item It should be clear whether the error bar is the standard deviation or the standard error of the mean.
        \item It is OK to report 1-sigma error bars, but one should state it. The authors should preferably report a 2-sigma error bar than state that they have a 96\% CI, if the hypothesis of Normality of errors is not verified.
        \item For asymmetric distributions, the authors should be careful not to show in tables or figures symmetric error bars that would yield results that are out of range (e.g., negative error rates).
        \item If error bars are reported in tables or plots, the authors should explain in the text how they were calculated and reference the corresponding figures or tables in the text.
    \end{itemize}

\item {\bf Experiments compute resources}
    \item[] Question: For each experiment, does the paper provide sufficient information on the computer resources (type of compute workers, memory, time of execution) needed to reproduce the experiments?
    \item[] Answer: \answerYes{} 
    \item[] Justification: They have been reported in the Appendix.
    \item[] Guidelines:
    \begin{itemize}
        \item The answer \answerNA{} means that the paper does not include experiments.
        \item The paper should indicate the type of compute workers CPU or GPU, internal cluster, or cloud provider, including relevant memory and storage.
        \item The paper should provide the amount of compute required for each of the individual experimental runs as well as estimate the total compute. 
        \item The paper should disclose whether the full research project required more compute than the experiments reported in the paper (e.g., preliminary or failed experiments that didn't make it into the paper). 
    \end{itemize}
    
\item {\bf Code of ethics}
    \item[] Question: Does the research conducted in the paper conform, in every respect, with the NeurIPS Code of Ethics \url{https://neurips.cc/public/EthicsGuidelines}?
    \item[] Answer: \answerYes{} 
    \item[] Justification: The research conducted in the paper conform, in every respect, with the NeurIPS Code of Ethics.
    \item[] Guidelines:
    \begin{itemize}
        \item The answer \answerNA{} means that the authors have not reviewed the NeurIPS Code of Ethics.
        \item If the authors answer \answerNo, they should explain the special circumstances that require a deviation from the Code of Ethics.
        \item The authors should make sure to preserve anonymity (e.g., if there is a special consideration due to laws or regulations in their jurisdiction).
    \end{itemize}

\item {\bf Broader impacts}
    \item[] Question: Does the paper discuss both potential positive societal impacts and negative societal impacts of the work performed?
    \item[] Answer: \answerNA{} 
    \item[] Justification: There is no societal impact of the work performed.
    \item[] Guidelines:
    \begin{itemize}
        \item The answer \answerNA{} means that there is no societal impact of the work performed.
        \item If the authors answer \answerNA{} or \answerNo, they should explain why their work has no societal impact or why the paper does not address societal impact.
        \item Examples of negative societal impacts include potential malicious or unintended uses (e.g., disinformation, generating fake profiles, surveillance), fairness considerations (e.g., deployment of technologies that could make decisions that unfairly impact specific groups), privacy considerations, and security considerations.
        \item The conference expects that many papers will be foundational research and not tied to particular applications, let alone deployments. However, if there is a direct path to any negative applications, the authors should point it out. For example, it is legitimate to point out that an improvement in the quality of generative models could be used to generate Deepfakes for disinformation. On the other hand, it is not needed to point out that a generic algorithm for optimizing neural networks could enable people to train models that generate Deepfakes faster.
        \item The authors should consider possible harms that could arise when the technology is being used as intended and functioning correctly, harms that could arise when the technology is being used as intended but gives incorrect results, and harms following from (intentional or unintentional) misuse of the technology.
        \item If there are negative societal impacts, the authors could also discuss possible mitigation strategies (e.g., gated release of models, providing defenses in addition to attacks, mechanisms for monitoring misuse, mechanisms to monitor how a system learns from feedback over time, improving the efficiency and accessibility of ML).
    \end{itemize}
    
\item {\bf Safeguards}
    \item[] Question: Does the paper describe safeguards that have been put in place for responsible release of data or models that have a high risk for misuse (e.g., pre-trained language models, image generators, or scraped datasets)?
    \item[] Answer: \answerNA{} 
    \item[] Justification: The paper poses no such risks.
    \item[] Guidelines:
    \begin{itemize}
        \item The answer \answerNA{} means that the paper poses no such risks.
        \item Released models that have a high risk for misuse or dual-use should be released with necessary safeguards to allow for controlled use of the model, for example by requiring that users adhere to usage guidelines or restrictions to access the model or implementing safety filters. 
        \item Datasets that have been scraped from the Internet could pose safety risks. The authors should describe how they avoided releasing unsafe images.
        \item We recognize that providing effective safeguards is challenging, and many papers do not require this, but we encourage authors to take this into account and make a best faith effort.
    \end{itemize}

\item {\bf Licenses for existing assets}
    \item[] Question: Are the creators or original owners of assets (e.g., code, data, models), used in the paper, properly credited and are the license and terms of use explicitly mentioned and properly respected?
    \item[] Answer: \answerYes{} 
    \item[] Justification: Yes, they are properly credited.
    \item[] Guidelines:
    \begin{itemize}
        \item The answer \answerNA{} means that the paper does not use existing assets.
        \item The authors should cite the original paper that produced the code package or dataset.
        \item The authors should state which version of the asset is used and, if possible, include a URL.
        \item The name of the license (e.g., CC-BY 4.0) should be included for each asset.
        \item For scraped data from a particular source (e.g., website), the copyright and terms of service of that source should be provided.
        \item If assets are released, the license, copyright information, and terms of use in the package should be provided. For popular datasets, \url{paperswithcode.com/datasets} has curated licenses for some datasets. Their licensing guide can help determine the license of a dataset.
        \item For existing datasets that are re-packaged, both the original license and the license of the derived asset (if it has changed) should be provided.
        \item If this information is not available online, the authors are encouraged to reach out to the asset's creators.
    \end{itemize}

\item {\bf New assets}
    \item[] Question: Are new assets introduced in the paper well documented and is the documentation provided alongside the assets?
    \item[] Answer: \answerNA{} 
    \item[] Justification: The paper does not release new assets.
    \item[] Guidelines:
    \begin{itemize}
        \item The answer \answerNA{} means that the paper does not release new assets.
        \item Researchers should communicate the details of the dataset\slash code\slash model as part of their submissions via structured templates. This includes details about training, license, limitations, etc. 
        \item The paper should discuss whether and how consent was obtained from people whose asset is used.
        \item At submission time, remember to anonymize your assets (if applicable). You can either create an anonymized URL or include an anonymized zip file.
    \end{itemize}

\item {\bf Crowdsourcing and research with human subjects}
    \item[] Question: For crowdsourcing experiments and research with human subjects, does the paper include the full text of instructions given to participants and screenshots, if applicable, as well as details about compensation (if any)? 
    \item[] Answer: \answerNA{} 
    \item[] Justification: The paper does not involve crowdsourcing nor research with human subjects.
    \item[] Guidelines:
    \begin{itemize}
        \item The answer \answerNA{} means that the paper does not involve crowdsourcing nor research with human subjects.
        \item Including this information in the supplemental material is fine, but if the main contribution of the paper involves human subjects, then as much detail as possible should be included in the main paper. 
        \item According to the NeurIPS Code of Ethics, workers involved in data collection, curation, or other labor should be paid at least the minimum wage in the country of the data collector. 
    \end{itemize}

\item {\bf Institutional review board (IRB) approvals or equivalent for research with human subjects}
    \item[] Question: Does the paper describe potential risks incurred by study participants, whether such risks were disclosed to the subjects, and whether Institutional Review Board (IRB) approvals (or an equivalent approval/review based on the requirements of your country or institution) were obtained?
    \item[] Answer: \answerNA{} 
    \item[] Justification: The paper does not involve crowdsourcing nor research with human subjects.
    \item[] Guidelines:
    \begin{itemize}
        \item The answer \answerNA{} means that the paper does not involve crowdsourcing nor research with human subjects.
        \item Depending on the country in which research is conducted, IRB approval (or equivalent) may be required for any human subjects research. If you obtained IRB approval, you should clearly state this in the paper. 
        \item We recognize that the procedures for this may vary significantly between institutions and locations, and we expect authors to adhere to the NeurIPS Code of Ethics and the guidelines for their institution. 
        \item For initial submissions, do not include any information that would break anonymity (if applicable), such as the institution conducting the review.
    \end{itemize}

\item {\bf Declaration of LLM usage}
    \item[] Question: Does the paper describe the usage of LLMs if it is an important, original, or non-standard component of the core methods in this research? Note that if the LLM is used only for writing, editing, or formatting purposes and does \emph{not} impact the core methodology, scientific rigor, or originality of the research, declaration is not required.
    \item[] Answer: \answerNA{} 
    \item[] Justification: The core method development in this research does not involve LLMs.
    \item[] Guidelines:
    \begin{itemize}
        \item The answer \answerNA{} means that the core method development in this research does not involve LLMs as any important, original, or non-standard components.
        \item Please refer to our LLM policy in the NeurIPS handbook for what should or should not be described.
    \end{itemize}

\end{enumerate}

\end{document}